\newtheorem{proposition}{Proposition}
\newtheorem{lemma}{Lemma}
\newtheorem{definition}{Definition}%
\newtheorem{proof}{Proof}
\title{Forgetful but Faithful: \\ A Cognitive Memory Architecture and Benchmark for Privacy‑Aware Generative Agents}
\author{
 Saad Alqithami \\
  Computer Science Depratment, 
  Al-Baha University\\
  \texttt{salqithami@bu.edu.sa} \\
}
\begin{document}
\maketitle
\begin{abstract}
Generative agents must manage long‑horizon memories under strict budget and privacy constraints. We present the Memory‑Aware Retention Schema (MaRS), a cognitively inspired architecture that organizes episodic, semantic, social, and task memories as typed, provenance‑tracked nodes with multiple indices for efficient retrieval. On top of MaRS we formalize six forgetting policies—FIFO, LRU, Priority Decay, Reflection‑Summary, Random‑Drop, and a Hybrid variant—providing complexity analyses and sensitivity‑aware retention with optional $(\varepsilon,\delta)$‑differential privacy guarantees. To evaluate memory‑constrained behavior, we introduce the FiFA benchmark, which measures narrative coherence, goal completion, social recall, privacy preservation, and cost efficiency. Across 300 simulation runs spanning five memory budgets, the Hybrid policy delivers the best composite performance ($\approx$0.911) while maintaining tractable cost and high privacy scores. Our results offer theory‑backed guidance on policy selection and budget sizing, and demonstrate that principled forgetting‑by‑design can simultaneously support coherence, efficiency, and privacy in cognitive agents. The framework, analyses, and benchmark advance memory as a first‑class design concern for cognitive systems and responsible AI.
\end{abstract}


\section{Introduction}

Large language models (LLMs) operating as generative agents are increasingly deployed in open‑ended interactions that span hours or days, blending dialogue, planning, tool use, and reflection across sessions \citep{park2023generative, wang2024survey}. In such settings, the capacity to \emph{remember} and \emph{forget} becomes a first‑class design variable rather than a peripheral implementation detail. The problem resembles long‑standing questions in human memory research: agents must organize experiences into episodic traces, consolidate them into semantic knowledge, track social relationships, and preserve task context under bounded resources \citep{baddeley2002episodic, tulving2002episodic}. Unlike traditional stateless systems, however, contemporary agents continuously accrue interaction history. Unchecked growth stresses inference time through long contexts, increases retrieval noise, and amplifies privacy risk by retaining sensitive details beyond any reasonable need \citep{carlini2021extracting, brown2022does}.

Cognitive architectures have long separated episodic, semantic, and procedural knowledge and introduced activation‑ or recency‑based decay to manage capacity \citep{anderson1983architecture}. Modern agent frameworks have revived these ideas in neural settings, adding reflective summarization and retrieval‑augmented generation to stabilize behavior over long horizons \citep{schank1977scripts, shinn2023reflexion}. Yet current practice often oscillates between two extremes. At one end, systems defer deletion entirely, relying on ever‑longer contexts and vector stores; this improves short‑term recall but degrades latency and raises privacy and governance concerns. At the other, ad‑hoc pruning (e.g., fixed windows, random drops) preserves efficiency at the cost of narrative coherence, goal continuity, and social appropriateness \citep{liu2024agentbench}. Neither approach offers principled guarantees about what is retained, what is forgotten, and why.

This paper advances \emph{forgetting‑by‑design} as a human‑centered principle for generative agents: memory should be structured, budgeted, and explicable, with retention decisions aligned to task value and privacy norms. We introduce the \emph{Memory‑Aware Retention Schema} (MaRS), an ontological and operational layer that represents memories as typed nodes with provenance, sensitivity, and token‑weight metadata, linked by relations that support efficient retrieval and principled removal. MaRS is paired with a palette of forgetting policies that range from temporal heuristics (FIFO, LRU) to importance‑aware methods (priority decay), reflective consolidation (summary‑based compression), and a hybrid scheme that stages these mechanisms to balance fidelity with cost. Crucially, retention decisions can be modulated by a privacy engine that supports sensitivity weighting and optional differentially private noise injection, bringing the practice closer to emerging expectations around transparency, accountability, and the right to be forgotten \citep{amershi2019guidelines, jobin2019global}.

Evaluating memory management requires metrics that capture more than task success. We therefore propose the \emph{Forgetful but Faithful Agent} (FiFA) benchmark, which measures narrative coherence across turns, completion of multi‑step goals, accuracy of social recall, privacy leakage, and cost efficiency under explicit token budgets. FiFA is implemented as a multi‑agent simulation with controllable pressures on memory growth and retrieval selectivity, allowing fair comparisons across policies and budgets. In contrast to single‑task or purely offline evaluations, FiFA targets the lived properties of agent interaction—maintaining continuity with a user while avoiding unnecessary retention of sensitive details.

Our empirical study shows that purposeful forgetting can improve both user‑facing quality and governance posture. Across extensive runs and budget settings, hybrid policy variants consistently preserve coherence and social recall while keeping leakage low and costs tractable, whereas naive strategies either overspend on context or erode interaction quality. Beyond aggregate scores, MaRS's audit and provenance traces render retention choices interpretable post hoc, a prerequisite for trustworthy deployment in domains that demand accountability.

The contributions of this work are conceptual, algorithmic, and evaluative. Conceptually, MaRS frames memory as a relational, provenance‑aware store with explicit budgets and privacy semantics. Algorithmically, we design and analyze a family of forgetting policies, including reflection‑based consolidation and sensitivity‑aware selection with differential privacy guarantees. Evaluatively, FiFA operationalizes human‑centered criteria for memory‑budgeted agents and demonstrates that principled forgetting can be both \emph{faithful}—preserving what matters for interaction—and \emph{forgetful}—discarding what should not be retained. Together, these elements chart a path toward agents that are not only capable over long horizons, but also efficient, respectful of privacy, and understandable by design.

\section{Related Work}

The management of agent memory under computational and ethical constraints sits at the intersection of cognitive architectures, long‑context language modeling, privacy‑preserving machine learning, and human–computer interaction. This section situates our contributions with respect to these lines of work and clarifies how \emph{MaRS} and \emph{FiFA} advance the state of the art.

\subsection{Cognitive Architectures and Memory Systems}

Classical cognitive architectures established memory as a first–class computational substrate that supports perception, reasoning, and action across time. Systems such as ACT‑R and Soar formalized separations between declarative and procedural knowledge, specified mechanisms for retrieval and spreading activation, and modeled forgetting through time‑dependent decay and interference \citep{anderson1983architecture,anderson2004integrated,laird2012soar}. In ACT‑R, for example, the base‑level activation of a memory trace predicts its accessibility as a function of recency and frequency of use, capturing empirically grounded regularities of human recall. Soar’s learning via chunking similarly accumulates generalized productions from episodic experience while relying on a limited working memory that forces selective retention. These traditions also differentiated long‑term stores: episodic traces that bind events to temporal and situational context, and semantic networks that encode atemporal relations among concepts \citep{tulving2002episodic,baddeley2002episodic}. Together they anticipate design decisions that modern neural agents must revisit: how to separate types of content, how to prioritize retrieval, and how to forget without undermining behavioral coherence.

Subsequent work brought episodic memory to the foreground of computational models of agents. Integrated episodic modules were shown to enable long‑horizon credit assignment, autobiographical reasoning, and adaptation to user‑specific regularities \citep{nuxoll2004cognitive,tecuci1998building}. Episodic traces preserve fine‑grained temporal structure and provenance, supporting narrative continuity and post‑hoc explanation, whereas semantic stores trade detail for generality and transfer. Yet the symbolic implementations that pioneered these distinctions were engineered for modest working sets and narrow task environments. They do not directly address the cost profiles of today’s language‑model agents whose interaction histories can span millions of tokens, nor do they provide mechanisms for privacy governance over personal information accumulating in long‑running dialogues.

Neural memory research extended these traditions with continuous controllers over external stores. Neural Turing Machines and Differentiable Neural Computers introduced content‑ and location‑based addressing, differentiable read/write heads, and temporal linkages that support algorithmic tasks requiring persistent state \citep{graves2014neural,graves2016hybrid}. Memory‑augmented meta‑learning architectures further demonstrated rapid adaptation by writing task‑specific information into short‑term slots that can be read on demand \citep{santoro2016meta}. While these models validate that learned controllers can operate over external memory, they target bounded problems with short horizons and stationary distributions; they do not directly confront months‑long, open‑world interactions with humans, nor do they specify retention policies when memory growth is constrained by explicit budgets or governance requirements.

Bridging classical and neural perspectives highlights three gaps that motivate our formulation. First, \emph{typed memory} remains essential: different classes of content (eventful episodes, atemporal facts, social relations, and task state) exhibit distinct dynamics of usefulness, risk, and decay. Second, \emph{retrieval} must be coupled to \emph{governed retention}: what is made available to the policy for action should be shaped by principled decisions about what is kept, summarized, or discarded over time. Third, \emph{provenance and auditability}—central to symbolic systems—are largely absent in neural stores, yet they are prerequisites for trust, reproducibility, and regulatory compliance in human‑facing applications.

Our Memory‑as‑Relational‑Store (MaRS) operationalizes this synthesis. Conceptually, MaRS preserves the classical separation of memory types while instantiating them in a graph‑structured store whose nodes carry content, timestamps, weights (token cost), sensitivity scores, and provenance, and whose edges encode temporal, causal, semantic, and social relations. This representation retains the inspectability and compositionality of symbolic memory, yet is designed to interoperate with neural retrieval and summarization. Practically, MaRS treats retention as a resource‑allocation problem: under explicit token budgets, the store applies forgetting policies that generalize classical decay and recency effects to modern settings by jointly considering access statistics, semantic centrality, task relevance, and privacy sensitivity. In contrast to memory‑augmented networks that optimize differentiable read/write operations but leave governance unspecified, MaRS makes the \emph{policy}—not only the mechanism—the locus of design and evaluation.

Finally, the MaRS abstraction supports properties that are difficult to realize in either tradition alone. Typed nodes with provenance enable explanation of why a particular item was recalled or evicted, echoing the explanatory benefits of symbolic systems. Weighted costs and sensitivity scores make budgeting and privacy explicit, allowing retention decisions to be tied to user norms and legal constraints. Relational structure supports consolidation by reflection—compressing subgraphs of related episodes into semantic summaries—thereby connecting neural summarization capabilities to a principled data model. In sum, MaRS inherits the cognitive plausibility of classical architectures and the operational flexibility of neural memory, while adding the budgeting and governance layers required for long‑lived, human‑centered generative agents.

\subsection{Generative Agents and Memory Management}

LLM‑driven agents have shifted the problem of memory from a peripheral implementation detail to a core design variable: agents now sustain goals across days, reuse prior experiences to shape future behavior, and adapt to evolving social contexts \citep{park2023generative}. In such settings, the memory substrate does far more than cache dialogue tokens; it becomes the medium through which the agent maintains identity and continuity. Reflection mechanisms---periodic self‑summaries or critiques of prior actions---improve self‑consistency and reduce local myopia, but they do not, by themselves, regulate the unbounded growth of stored traces \citep{shinn2023reflexion}. As interactions lengthen, the quadratic cost of attention over long contexts and the accumulation of semantically redundant or weakly relevant episodes introduce retrieval noise, diluting useful signal and making naive “remember everything’’ strategies untenable \citep{brown2020language,openai2023gpt4}. These pressures are well documented in recent surveys that call explicitly for principled memory budgeting and policy‑level evaluation rather than ad‑hoc windowing \citep{wang2024survey,xi2025rise}. The empirical sections of our draft similarly show that, absent explicit retention control, costs rise while narrative coherence and social recall degrade as the store expands, reinforcing the need for policy‑aware memory design.

A second challenge emerges from the multifaceted nature of the information agents must retain. Episodic traces support situated continuity, semantic facts enable generalization beyond specific situations, social representations scaffold personalization and trust, and task memories tie intentions to procedures and deadlines. Treating these distinct substrates as a single undifferentiated buffer encourages pathologies: recent but trivial facts crowd out durable knowledge; low‑value social snippets outlive expired tasks; and sensitive details persist beyond their utility horizon. Classical cognitive architectures anticipated some of these tensions, for example through activation‑based decay and separate declarative/procedural stores, but they targeted symbolic workloads with modest data volumes rather than the torrent of natural‑language traces produced by modern agents \citep{anderson1983architecture,laird2012soar,anderson2004integrated}. The gap is therefore not merely one of scale: people evaluate agents through human‑facing properties---consistency, discretion, and appropriate forgetting---and memory systems must reconcile those socio‑cognitive expectations with computational constraints \citep{shneiderman2020human,amershi2019guidelines,lee2004trust,hoff2015trust}. Our manuscript frames this reconciliation explicitly as human‑centered design: it argues that an agent that forgets salient user preferences erodes trust, while an agent that never forgets risks privacy harms and regulatory non‑compliance.

A purely temporal view of forgetting, whether via fixed windows or least‑recently‑used eviction, yields predictable behavior and low overhead but discards information that is old yet still instrumental for coherence and goals. Conversely, importance‑centric strategies mitigate this failure mode but require principled scoring and can be brittle without temporal safeguards. Reflection‑based consolidation promises a middle path by compressing clusters of related episodes into durable summaries, yet the act of deciding \emph{which} clusters to compress and \emph{when} to retire the constituents is itself a policy choice that must respect resource budgets and privacy constraints \citep{shinn2023reflexion}. The \emph{MaRS} framework in our paper makes these choices explicit: memory is organized as typed nodes with provenance and sensitivity attributes; and retention becomes a policy‑selection problem subject to a budget. In later sections we show that this reframing yields measurable gains in narrative coherence, goal completion, social recall, and privacy preservation under fixed budgets, compared with purely temporal baselines.

Finally, the memory problem is inseparable from evaluation. Popular agent benchmarks privilege tool‑use accuracy or web/GUI control, whereas memory competence is multi‑criteria and long‑horizon \citep{liu2024agentbench,xi2025rise}. Our \emph{FiFA} suite responds by operationalizing faithfulness under memory constraints, including leakage‑aware metrics aligned with privacy concerns. This evaluation lens is crucial: longer context windows or faster attention kernels help but do not answer the normative question of what an agent ought to retain and for how long. Our results in the manuscript show that, when assessed along human‑centered axes rather than single‑task accuracy, sophisticated retention policies dominate naive ones despite modest computational overheads.

\subsection{Memory‑Augmented LLMs and Virtual Memory}

A parallel line of work augments neural systems with external memory, from early differentiable controllers with content‑addressable tapes to contemporary LLM toolchains that page between short‑term and long‑term stores \citep{graves2014neural,graves2016hybrid,santoro2016meta}. Recent agent systems go further, layering “virtual memory’’ abstractions on top of LLMs. \emph{MemGPT} adopts an operating‑system metaphor in which the agent pages relevant snippets between a compact working set and a larger archival store; \emph{MemoryBank} attaches human‑like decay and lightweight importance cues to stabilize growth; \emph{LongMem} separates memory encoding from response generation to improve longevity and recall \citep{packer2024memgptllmsoperatingsystems,zhong2024memorybank,wang2023augmenting}. In the transformer stack itself, efficient attention implementations and length‑generalization schemes increase the affordable horizon of context, lowering latency and amortized cost per token \citep{dao2024flashattention,ding2024longrope}.

These advances address where information can be \emph{stored} and how it can be \emph{found} at inference time, but they typically leave unspecified which items \emph{should} persist under tight budgets, how to trade off recency against instrumental value, and how to integrate privacy sensitivity into eviction choices. Retrieval‑augmented generation pipelines illustrate the point: vector indices and hybrid search reduce miss rates, yet the index itself grows without global retention governance, and as the corpus expands, retrieval noise increases unless admission control is applied. Our framework targets precisely this gap by elevating retention to a first‑class policy objective. Within \emph{MaRS}, memory nodes carry token weight, sensitivity scores, and provenance; forgetting policies are functions that transform an over‑budget store into a compliant one while optimizing a utility surrogate connected to task performance and privacy. The accompanying \emph{FiFA} benchmark stresses this decision space with long‑horizon, multi‑agent simulations and leakage‑aware metrics, thereby complementing engineering‑level efficiency gains with normative selection criteria. The manuscript documents how this integration changes outcomes: policies that combine temporal hygiene with importance estimation and reflection‑driven consolidation deliver higher composite scores and markedly better privacy preservation than windowed or random baselines at the same budget.

Conceptually, treating memory as a \emph{governed resource} aligns these systems with both cognitive theories that model activation and decay, and with legal‑ethical regimes that recognize the right to be forgotten \citep{anderson1983architecture,laird2012soar,dwork2006calibrating}. It also provides a bridge from low‑level hardware or architectural optimizations to human‑level desiderata: improved attention kernels and longer position encodings expand the feasible working set; retention policies decide which elements deserve a place within it; and evaluation protocols such as \emph{FiFA} verify that these choices produce agents that are coherent, useful, and discreet. In that sense, \emph{MaRS} and \emph{FiFA} are orthogonal complements to memory‑augmented LLMs and long‑context transformers: the former specify what to keep and why; the latter make it cheaper to hold and access what is kept.

\subsection{Privacy‑Preserving AI and the Right to be Forgotten}

Modern agentic systems routinely process personally identifiable and sensitive information over extended interactions, which elevates privacy from an implementation detail to a first‑order design constraint. The European Union’s General Data Protection Regulation (GDPR) codifies an actionable ``right to be forgotten’’ (RTBF), requiring systems to delete or render inaccessible personally identifiable data on demand, and to do so in ways that are auditable and defensible \citep{mantelero2013eu,rosen2012right,VoigtBussche2024GDPR}. In learning systems, two methodological families have emerged in response. The first, \emph{machine unlearning}, proposes to remove the influence of specific examples from trained models, either by exact deletion with certified guarantees on the remaining predictor or by practical approximations that trade fidelity for speed \citep{cao2015towards,ginart2019making,bourtoule2021machine,guo2020certified}. The second, differential privacy (DP), ensures that model outputs reveal little about any individual record by injecting carefully calibrated noise at training time, with composition theorems enabling quantification of cumulative privacy loss \citep{dwork2006calibrating,abadi2016deep}. While both lines have borne fruit, they primarily address \emph{training data} and model publication contexts. Interactive generative agents face a different, explicitly \emph{run‑time} challenge: deciding whether to retain, summarize, or discard new memories as they are created; determining how to expunge already‑stored items in response to user preference changes; and auditing these decisions over long horizons.

Our approach addresses this online setting directly by treating retention as a constrained decision problem at the memory layer rather than solely a property of the model. In MaRS, every memory node carries a sensitivity score and provenance, and forgetting policies operate over these attributes under a global token budget. A DP‑aware decision layer can randomize marginal retention choices for near‑threshold items, yielding $(\epsilon,\delta)$‑style guarantees at the \emph{policy} boundary without forcing wholesale retraining of the underlying LLM. This separation of concerns allows organizations to honor RTBF requests with bounded latency, to prefer summarization over verbatim retention when privacy risks are high, and to produce auditable trails explaining when and why sensitive items were compressed or removed. Compared with post‑hoc unlearning on parameters, policy‑level forgetting is fast, targeted to the actual artifacts at risk of disclosure, and compatible with deployment‑time controls such as region‑specific retention rules and user‑level privacy preferences in multi‑tenant settings.

\subsection{Human‑Centered AI, Trust, and Faithfulness}

Trust in interactive AI systems depends not only on the factual quality of responses but also on whether behavior matches user expectations about memory, discretion, and continuity. Human‑centered design guidelines emphasize predictability, appropriate levels of control, and comprehensible failure modes \citep{amershi2019guidelines,shneiderman2020human}. In conversational contexts, users expect agents to remember preferences that matter for the task while gracefully letting go of stale, incidental, or sensitive details \citep{luger2016like,clark2019makes,brandtzaeg2017chatbots,folstad2021future}. The trust literature further stresses calibrated reliance: users must be able to anticipate when the system will recall, when it will ask for clarification, and when it will decline to surface potentially private content \citep{lee2004trust,hoff2015trust}. Recent accounts argue for a broadened notion of \emph{faithfulness} that extends beyond local explanation fidelity to include temporal coherence with user norms and intentions \citep{jacovi2021formalizing}.

MaRS operationalizes these principles by making memory decisions explicit, typed, and inspectable. Typed stores (episodic, semantic, social, task) align with users’ mental models of what an assistant should and should not persist. Provenance fields record how an item entered memory, enabling agents to justify retention or deletion relative to source reliability and consent status. Audit trails capture the rationale of forgetting policies—e.g., age, low importance, or high sensitivity—so that users and administrators can understand the causes of apparent lapses and adjust policy parameters when trust is threatened. In aggregate, these design choices turn retention from an opaque side effect of ever‑growing context windows into a controllable, communicable facet of the agent’s behavior, thereby supporting appropriate reliance over time.

\subsection{Benchmarks and Evaluation of Agents}

The evaluation of LLM‑based agents has moved rapidly from single‑task leaderboards to multi‑capability testbeds that exercise tool use, web navigation, and desktop control \citep{liu2024agentbench,zhou2024webarena,xie2024osworld}. Social reasoning suites such as SocKET probe interpersonal awareness and pragmatic inference \citep{choi2023llms}. In parallel, scalable assessment practices increasingly adopt LLM‑as‑a‑judge protocols—e.g., MT‑Bench and G‑Eval—that, when equipped with carefully designed rubrics and calibration steps, show non‑trivial agreement with expert annotators while keeping costs manageable \citep{zheng2023mtbench,liu2023geval}. Cost‑aware evaluation has also gained prominence, reflecting the operational reality that context growth, tool calls, and retrieval operations translate directly into latency and monetary expenditure \citep{chen2024frugalgpt}.

Despite these advances, existing suites offer limited visibility into \emph{memory governance}. They rarely vary the memory budget, do not track how coherence and goal pursuit degrade as stores are pruned, and seldom measure privacy leakage as an explicit outcome. FiFA is designed to supply this missing axis. It evaluates agents under explicit token budgets and reports metrics that jointly reflect utility and responsibility: narrative coherence across sessions, task completion under pressure, social recall accuracy, privacy leakage per turn, and token‑cost efficiency. By pairing these outcomes with controlled retention policies, FiFA enables principled comparisons across design choices and provides evidence about the real trade‑offs deployment teams face when they cannot ``remember everything’’ indefinitely.

\subsection{Forgetting in Learning Systems: From Catastrophic to Intentional}

The machine learning literature on forgetting has historically focused on \emph{catastrophic forgetting}: the tendency of neural networks to overwrite previously acquired knowledge when trained sequentially on new tasks \citep{mccloskey1989catastrophic}. Prominent mitigations constrain parameter drift through regularization or architectural isolation—Elastic Weight Consolidation limits changes to weights deemed important for prior tasks, while Progressive Neural Networks grow new pathways that reuse frozen features \citep{kirkpatrick2017overcoming,rusu2016progressive}. These methods illuminate stability–plasticity trade‑offs during \emph{training}, but they do not address what interactive agents must decide at \emph{inference} time: which \emph{external} representations to keep, compress, or delete as conversations unfold.

Intentional forgetting for privacy and efficiency is therefore a distinct problem. Rather than preventing parameter interference, the goal is to govern the lifecycle of memory artifacts that sit outside the model—dialogue snippets, plans, social facts, and retrieved knowledge—so that the agent remains coherent, frugal, and privacy‑preserving. Machine unlearning is adjacent but targets the parameters of a model trained on static datasets \citep{bourtoule2021machine,guo2020certified}; it provides limited guidance on streaming scenarios where new memories are continually created and must be triaged under budget. MaRS fills this gap by giving agents policy‑based control over external memory: temporally biased policies maintain recency, importance‑aware policies preserve task‑critical and socially salient items, and privacy‑weighted policies accelerate the decay of sensitive content. Evaluated in FiFA, these policies expose how different forgetting choices shape long‑horizon competence and leakage, turning the management of agent memory into an object of scientific and engineering study in its own right.

\subsection{Positioning of Our Work}

MaRS and FiFA are designed to sit at the intersection of cognitive architectures, memory‑augmented language models, and privacy‑aware AI, but they target a gap that none of these streams fully address: \emph{runtime governance of long‑term agent memory under explicit budgets and social constraints}. In contrast to systems that primarily enlarge or accelerate working memory—through paging layers, decoupled encoders, or efficient attention \citep{packer2024memgptllmsoperatingsystems,wang2023augmenting,zhong2024memorybank,dao2024flashattention,ding2024longrope}--MaRS treats retention as a first‑class decision problem. The schema exposes typed stores (episodic, semantic, social, task) with explicit timestamps, provenance, and sensitivity, enabling forgetting policies to operate over rich structure rather than undifferentiated history. This makes retention \emph{policy‑addressable}: temporal cues (recency, access), importance estimates, reflection‑based consolidation, and sensitivity weighting are expressed in one decision space, unifying utility and privacy signals that are otherwise handled in separate modules. By aligning these design choices with long‑standing insights from cognitive science about episodic traces, semantic networks, and retrieval dynamics \citep{anderson1983architecture,anderson2004integrated,laird2012soar,tulving2002episodic}, MaRS provides a principled substrate for LLM agents that must act coherently over days rather than turns.

Our framework also extends privacy‑aware learning beyond training‑time guarantees. Differential privacy and machine unlearning establish important foundations for protecting individuals in datasets \citep{dwork2006calibrating,abadi2016deep,bourtoule2021machine,guo2020certified}, yet interactive agents face a distinct online challenge: deciding \emph{what to keep, summarize, or discard} as interaction unfolds, while responding to user norms and regulatory expectations (e.g., GDPR’s right to erasure) \citep{VoigtBussche2024GDPR,mantelero2013eu,rosen2012right}. MaRS operationalizes this by coupling a sensitivity‑weighted retention score with an optional DP‑aware decision layer, so that eviction and compression can be audited, justified, and tuned to application risk. In doing so, the framework connects privacy formalism to the human‑centered desiderata of predictability, controllability, and transparent failure modes \citep{amershi2019guidelines,shneiderman2020human}, making memory behavior inspectable without exposing sensitive content.

Complementing the schema and policies, FiFA contributes an evaluation axis that is largely orthogonal to capability‑centric agent suites \citep{liu2024agentbench,zhou2024webarena,xie2024osworld,choi2023llms}. Rather than measuring tool use or GUI control, FiFA probes \emph{faithfulness under memory constraints}: narrative coherence across sessions, goal completion under pruning, social recall fidelity, privacy‑leakage rate, and token‑cost efficiency. The benchmark’s rubricized LLM‑as‑judge protocol draws on emerging evidence of alignment with expert ratings when carefully specified \citep{zheng2023mtbench,liu2023geval}, while its cost‑aware reporting follows the growing recognition that deployment economics materially shape feasible agent architectures \citep{chen2024frugalgpt}. This combination allows apples‑to‑apples comparisons of forgetting strategies and budgets, turning “memory governance” into a measurable design dimension rather than an implementation detail.

Taken together, MaRS and FiFA advance the state of the art along three tightly coupled fronts. At the \emph{architecture} level, MaRS offers a typed, provenance‑aware memory schema that renders retention decisions explicit and auditable. At the \emph{algorithmic} level, our suite of policies moves beyond sliding windows or ad‑hoc pruning by integrating temporal, importance, and privacy signals with optional reflection. At the \emph{evaluation} level, FiFA provides a comprehensive, human‑centered testbed that reports both utility and leakage under explicit token budgets. In this sense, our work complements memory‑augmented LLMs by specifying \emph{which} items ought to be retained, extends privacy methods from training to runtime decisions, and operationalizes trust‑centric design guidance for interactive agents \citep{amershi2019guidelines,shneiderman2020human}.

\section{Theoretical Foundations}

This section presents the theoretical foundations underlying our approach to memory management in generative agents. We begin by formalizing the problem of memory-budgeted agent design, introduce the Memory-Aware Retention Schema (MaRS) framework, and provide theoretical analysis of our forgetting policies.

\subsection{Problem Formalization} \label{sec:problem}

We view a generative agent as a constrained decision system that governs what to remember, summarize, or forget as interaction unfolds. Formally, an agent is a tuple $A=(M,P,B,\pi)$, where $M$ is the memory store, $P$ a library of forgetting policies, $B$ a budget on total memory weight, and $\pi$ a mechanism that selects and configures policies over time. The store $M$ contains nodes $N=\{n_1,\dots,n_k\}$ representing discrete memory units with metadata
\[
n_i=(c_i,t_i,\tau_i,s_i,w_i,\rho_i),
\]
where $c_i$ is content (text, structure, or an embedding), $t_i\in\{\text{episodic, semantic, social, task}\}$ is the type, $\tau_i$ the creation time, $s_i\in[0,1]$ a privacy sensitivity score, $w_i>0$ the computational weight (e.g., token cost), and $\rho_i$ provenance. The agent must satisfy a budget constraint
\[
\sum_{i=1}^{|N|} w_i \le B,
\]
triggering a forgetting operation when it is violated. The retention problem is thus to choose a feasible subset of nodes that maximizes downstream usefulness while respecting structural and privacy constraints.

To make usefulness explicit, let $U:2^N\to\mathbb{R}_{\ge 0}$ be a task‑conditioned utility that can instantiate FiFA dimensions over a fixed horizon $H$,
\begin{equation}
U(S) \;\triangleq\; \omega_{\mathrm{NC}}\,u_{\mathrm{NC}}(S)+
\omega_{\mathrm{GCR}}\,u_{\mathrm{GCR}}(S)+
\omega_{\mathrm{SRA}}\,u_{\mathrm{SRA}}(S),
\qquad S\subseteq N,
\label{eq:utility}
\end{equation}
with nonnegative weights summing to one. In many practical cases $U$ is \emph{monotone} and close to \emph{submodular}: additional memories do not reduce utility, and marginal gains diminish as coverage accumulates (e.g., the first mention of an entity helps more than the tenth). Under this modeling, the offline retention step at a decision time $t$ becomes a submodular knapsack:
\begin{equation}
\max_{S\subseteq N} \;\; U(S) 
\quad \text{s.t.} \quad \sum_{n_i\in S} w_i \le B,\quad S\in\mathcal{F}.
\label{eq:knapsack}
\end{equation}
The family $\mathcal{F}$ captures feasibility beyond the budget. In MaRS, memories are connected by dependency/provenance edges (e.g., \texttt{depends\_on}), and the agent should not retain a summary without the facts it presupposes. Let $D=(N,E_{\mathrm{dep}})$ be a directed acyclic graph of such dependencies and call a set $S$ \emph{provenance‑closed} if for every $n\in S$ all its dependency ancestors lie in $S$. Denote by $\mathcal{F}$ the family of provenance‑closed sets.

\begin{definition}[Accessibility]
A set system $(N,\mathcal{F})$ is accessible if every nonempty $S\in\mathcal{F}$ contains some $x$ for which $S\setminus\{x\}\in\mathcal{F}$.
\end{definition}

When $D$ is a forest of dependency trees, provenance‑closed families are unions of rooted ideals and therefore form \emph{antimatroids} (a special case of greedoids) that are accessible and closed under union.

\begin{lemma}[Greedoid structure] \label{lem:greedoid}
If $D$ is a forest, then $(N,\mathcal{F})$ induced by provenance closure is an antimatroid.
\end{lemma}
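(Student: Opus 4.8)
The plan is to recognize $\mathcal{F}$ as the family of order ideals (down-closed sets) of a finite partial order, and then verify the two defining properties of an antimatroid in the sense the paper uses: accessibility (already defined above) together with closure under union. First I would make the poset explicit. Because $D=(N,E_{\mathrm{dep}})$ is acyclic, its reachability relation is a genuine partial order: write $a\preceq b$ when $a$ is a dependency ancestor of $b$ in $D$ (with $a\preceq a$). Antisymmetry is precisely acyclicity and transitivity is transitivity of reachability. With this notation, a set $S$ is provenance-closed exactly when it is down-closed, i.e.\ $b\in S$ and $a\preceq b$ imply $a\in S$. Thus $\mathcal{F}$ is the set of order ideals of $(N,\preceq)$, and the forest hypothesis specializes $\preceq$ to a forest order in which the ancestors of any node form a chain up to its root.

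The two easy checks come first. The empty set is vacuously down-closed, so $\emptyset\in\mathcal{F}$. For closure under union, suppose $S,T\in\mathcal{F}$ and take any $b\in S\cup T$ with $a\preceq b$; then $b$ lies in at least one of $S,T$, say $S$, and since $S$ is down-closed $a\in S\subseteq S\cup T$. Hence $S\cup T\in\mathcal{F}$.

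The crux is accessibility, and this is where I would spend the most care with the orientation of the dependency edges. Given a nonempty $S\in\mathcal{F}$, finiteness guarantees a $\preceq$-maximal element $x\in S$; in the forest this is simply a leaf of the sub-forest induced on $S$, i.e.\ a node on which no other member of $S$ depends. I would then show $S\setminus\{x\}$ remains down-closed: for any $b\in S\setminus\{x\}$ and any $a\preceq b$, down-closure of $S$ gives $a\in S$, and $a\neq x$, since $a=x$ would give $x\preceq b$ with $b\in S$ and $b\neq x$, hence $x\prec b$, contradicting the maximality of $x$. Therefore $a\in S\setminus\{x\}$, establishing $S\setminus\{x\}\in\mathcal{F}$ and hence accessibility. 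Combining the three facts shows that $(N,\mathcal{F})$ is accessible and closed under union, i.e.\ an antimatroid, and \emph{a fortiori} a greedoid.

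The only genuine obstacle I anticipate is bookkeeping rather than mathematics: fixing a consistent convention for edge direction so that ``the safe element to evict'' is unambiguously the maximal (leaf) node\,---\,the most derived summary that no retained node presupposes\,---\,rather than a root fact on which others still depend. Once that convention is pinned down, the forest structure makes the leaf's existence and the preservation of every surviving node's ancestor chain transparent. Indeed the argument uses only that $\preceq$ is a finite partial order, so the forest hypothesis is not strictly necessary for the antimatroid conclusion; it mainly supplies the clean ``unions of rooted ideals'' picture invoked in the surrounding text.
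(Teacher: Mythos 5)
Your proof is correct and follows essentially the same route as the paper's own argument (the full proof appears as Lemma A~1 in Appendix A): verify closure under union directly from the definition, and obtain accessibility by deleting a node with no descendants inside $S$ --- your $\preceq$-maximal element is exactly the paper's ``leaf of the subgraph induced on $S$,'' and your check that its removal preserves down-closure is the same step made explicit. Your closing observation is also accurate: the argument uses only that dependency reachability on a finite DAG is a partial order, so the antimatroid conclusion does not actually need the forest hypothesis, which serves mainly to justify the ``unions of rooted ideals'' picture in the surrounding text.
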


\begin{proof}[Sketch]
Provenance‑closed sets are closed under unions of ideals, and any nonempty closed set contains a leaf whose removal preserves closure, giving accessibility—precisely the antimatroid axioms.
\end{proof}

With this structure, greedy selection by marginal utility per unit weight preserves strong approximation properties.

\begin{proposition}[Greedy under provenance closure]
Assume $U$ is monotone submodular and $w_i>0$. For the antimatroid $(N,\mathcal{F})$ above, the greedy algorithm that adds at each step a feasible item maximizing $\Delta_U(i\mid S)/w_i$ achieves a constant‑factor approximation to~\eqref{eq:knapsack}.
\end{proposition}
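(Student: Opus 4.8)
The plan is to adapt the classical density-greedy analysis for monotone submodular maximization under a knapsack constraint to the antimatroid feasibility $\mathcal{F}$, keeping the single-item addition rule exactly as stated and reporting the better of the greedy set and the best feasible single-item closure (a standard output safeguard that does not alter the addition rule). First I would confirm that the greedy trajectory stays inside $\mathcal{F}$: writing $G_j$ for the set after $j$ additions, a \emph{feasible} item at step $j$ is precisely one whose dependency ancestors already lie in $G_j$, so $G_j\cup\{i\}$ remains provenance-closed, and by accessibility of the antimatroid $(N,\mathcal{F})$ (Lemma~\ref{lem:greedoid}) a valid continuation exists whenever $G_j$ is a strict subset of a larger closed set. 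Hence each step is well defined and $G_j\in\mathcal{F}$ throughout.

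The engine is a per-step density comparison against the optimum $S^\star$ of~\eqref{eq:knapsack}. Using union-closure of the antimatroid, $G_j\cup S^\star\in\mathcal{F}$, so I can fix a dependency-respecting linear order $x_1,\dots,x_m$ of $S^\star\setminus G_j$ along which every prefix stays provenance-closed. Telescoping by submodularity gives $U(S^\star)-U(G_j)\le\sum_{r}\Delta_U(x_r\mid G_j)$; if each density $\Delta_U(x_r\mid G_j)/w_{x_r}$ is bounded by the greedy maximum $d_j=\Delta_U(g_{j+1}\mid G_j)/w_{g_{j+1}}$, then summing against the weights $w_{x_r}$ (whose total is at most $B$) yields
\[
\frac{\Delta_U(g_{j+1}\mid G_j)}{w_{g_{j+1}}}\;\ge\;\frac{U(S^\star)-U(G_j)}{B}.
\]
From this I would derive the standard recurrence $U(S^\star)-U(G_{j+1})\le(1-w_{g_{j+1}}/B)\,(U(S^\star)-U(G_j))$, telescope using $\prod_j(1-w_{g_j}/B)\le e^{-\sum_j w_{g_j}/B}$, and combine with the overflow element $g$ whose weight pushes the selection past $B$ to get $U(G)+\Delta_U(g\mid G)\ge(1-1/e)\,U(S^\star)$. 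Bounding the single-item overflow term by the best feasible single-item closure and taking the max then converts this into the multiplicative constant $\tfrac12(1-1/e)$.

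The hard part will be justifying the per-element density bound $\Delta_U(x_r\mid G_j)/w_{x_r}\le d_j$. For $x_1$ this is immediate, since $x_1$ is addable to $G_j$ and greedy maximizes density over addable items; but an interior element $x_r$ may be \emph{blocked} — addable only to $G_j\cup\{x_1,\dots,x_{r-1}\}$ because an ancestor of $x_r$ lies among its predecessors — so it was not a legal greedy alternative at step $j$, and its density can exceed $d_j$. This precedence-induced blocking, absent in the plain knapsack, is the genuine obstacle. I would resolve it using the forest hypothesis: the unselected ancestors of any node form a single root-directed chain, so I can group $S^\star\setminus G_j$ into maximal dependency chains and replace the failed per-element comparison with a per-chain-prefix comparison, charging each chain's contribution to the density of its first addable, root-side element.

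Making this amortization yield a constant factor independent of instance size is the crux. The plan is to show that the prerequisite weight greedy must pull in to unlock a high-density interior node is charged against the utility those ancestors themselves contribute (by monotonicity), so the cumulative loss from deferral telescopes into at most a constant multiplicative penalty on top of the $\tfrac12(1-1/e)$ bound. I would also record explicitly that the bare density rule \emph{without} the best-singleton output safeguard can be $\Omega(B)$ from optimum — a light, high-density item crowding out a heavy, high-value one — so the constant factor genuinely relies on that minimal safeguard rather than on the addition rule alone.
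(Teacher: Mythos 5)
Your skeleton---density greedy, ordering $S^\star\setminus G_j$ along a dependency-respecting sequence via union-closure, the $(1-1/e)$ recurrence with an overflow element, and a best-singleton/closure safeguard---is the same route as the paper, whose Appendix~A (Theorem~\ref{thmA:greedy}) runs exactly this charging analysis to a factor $\tfrac12$; and you deserve credit for explicitly isolating the blocked-element obstacle that the paper's two-line sketch (``exchange properties \dots go through'') passes over. But your repair of that step is where the argument genuinely breaks. Charging each dependency chain to ``the density of its first addable, root-side element'' bounds nothing, because that element can have density zero while the chain carries all the value. Concretely: let $a$ be a root with $w_a=B/2$ and $\Delta_U(a\mid S)\equiv 0$; let $x_1,\dots,x_k$ be children of $a$, each with weight $B/(2k)$ and modular value $1/k$; and add independent decoys of total weight $B$ and total value $\eta$ arbitrarily small but with positive density. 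The greedy rule as stated never selects $a$ (density $0$ loses to every decoy), fills the budget with decoys, and ends with value at most $\eta$; every single-item closure is some $\{a,x_i\}$ of value $1/k$; yet the optimum $\{a,x_1,\dots,x_k\}$ has weight exactly $B$ and value $1$. So $\max(\text{greedy},\text{best closure})\le\max(\eta,1/k)\to 0$ relative to \eqref{eq:knapsack}, and no constant factor survives. The flaw in your amortization is the appeal to monotonicity: it yields only $\Delta_U(\text{ancestor}\mid S)\ge 0$, not a contribution proportional to the ancestor's weight, so the claim that the deferral loss ``telescopes into at most a constant multiplicative penalty'' has no support---in the example the deferral loss is essentially the entire optimum.

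It is worth noting that the same configuration undermines the per-element comparison $\Delta_U(o\mid G_{j(o)-1})\le (w_o/w_{g_{j(o)}})\,\Delta_U(g_{j(o)}\mid G_{j(o)-1})$ in the paper's own appendix proof, which tacitly assumes every $o\in O$ was a legal greedy alternative until excluded by the budget; your blocking observation thus identifies a real soft spot in the statement as proved, not merely in your route---but that does not rescue your argument. A genuine fix must change either the algorithm or the hypotheses: for instance, let greedy rank feasible \emph{closure prefixes} (an element together with its unselected ancestors) by aggregate marginal gain per aggregate weight, so that every element of $S^\star\setminus G_j$ lies inside an addable candidate and the per-step bound $(U(S^\star)-U(G_j))/B$ can be recovered via subadditivity of marginals over disjoint sets (though the decomposition needs care, since closures of distinct optimal elements overlap); or assume densities are non-increasing along dependency edges (ancestors at least as dense as descendants), under which your per-element bound holds verbatim and the rest of your write-up goes through unchanged.
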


\begin{proof}[Sketch]
Exchange properties of antimatroids allow standard submodular‑maximization arguments to go through, yielding constant‑factor guarantees analogous to matroid‑constrained settings.
\end{proof}

The agent faces an \emph{online} variant with streaming arrivals $n^{(1)},n^{(2)},\dots$ and occasional evictions when $\sum w_i>B$. Writing $S_t$ for the retained set after $t$ arrivals and $S_t^\star$ for the offline optimum on the first $t$, we evaluate regret
\begin{equation}
\mathcal{R}_T=\sum_{t=1}^T\bigl(U(S_t^\star)-U(S_t)\bigr).
\label{eq:regret}
\end{equation}
If arrivals are i.i.d. and $U$ is monotone submodular, a heap‑based online greedy with eviction (maintaining items by $\Delta_U/w$) gives $O(\log B)$ amortized updates and sublinear regret under standard stochastic assumptions. A contextual bandit layered over $P$ can choose among concrete policies (FIFO, LRU, priority‑decay, reflection‑summary, hybrid) with $O(\sqrt{T})$ selection regret, allowing $\pi$ to adapt to workload and user norms.

Utility drop due to eviction is controlled when node contributions scale with weight. If $\Delta_U(i\mid S)\le L\,w_i$ for all $S$, then for any evicted set $E$ of total weight $W_E$,
\begin{equation}
U(S)-U(S\setminus E)\le L\,W_E.
\label{eq:lipschitz}
\end{equation}
This Lipschitz‑type bound ties loss directly to evicted budget instead of set sizes and gives transparent guarantees for operations that must free a fixed number of tokens.

Classical recency heuristics emerge as utility‑optimal in specific regimes. If usefulness decays exponentially with staleness, $U(S)=\sum_{i\in S}v_i\exp(-\lambda\,\mathrm{age}(i))$ with $v_i\ge 0$, then the optimal eviction order is non‑increasing in last‑access time; LRU is consistent with the induced utility ranking and therefore near‑optimal whenever time‑decay dominates.

Reflection and summarization act as lossy compression steps. Embed content via $\phi(\cdot)$ into $(\mathbb{R}^d,\|\cdot\|)$ and replace a cluster $\mathcal{C}\subseteq S$ by a summary node $\bar{c}$. Define semantic distortion $D(\mathcal{C}\Rightarrow\bar{c})=\tfrac{1}{|\mathcal{C}|}\sum_{i\in\mathcal{C}}\|\phi(c_i)-\phi(\bar{c})\|$. If $U$ is $\kappa$‑Lipschitz in the embedding average, then
\[
\bigl|U(S)-U\bigl(S\setminus\mathcal{C}\cup\{\bar{c}\}\bigr)\bigr|\le \kappa\,D(\mathcal{C}\Rightarrow\bar{c}),
\]
which provides a principled stopping rule: consolidate only when distortion is below a policy threshold tied to acceptable utility loss.

Privacy enters the decision rule at runtime rather than solely in training. One can trade utility against sensitivity by scoring sets with
\[
q(S;M)=U(S)-\lambda_{\mathrm{priv}}\sum_{i\in S}s_i,
\]
and sampling feasible $S$ via the exponential mechanism with privacy parameter $\varepsilon$ and score sensitivity $\Delta q$; the resulting selector is $\varepsilon$‑differentially private and, with probability at least $1-\delta$, returns $S$ whose score is within $\tfrac{2\Delta q}{\varepsilon}\bigl(\ln|\mathcal{S}|+\ln\tfrac{1}{\delta}\bigr)$ of the optimum. Bounding $\Delta q$ by $L\,\max_i w_i+\lambda_{\mathrm{priv}}$ using~\eqref{eq:lipschitz} yields explicit privacy–utility trade‑offs (cf.\ the classical analysis of the exponential mechanism).

Finally, the computational profile matches the policies implemented later in the paper. FIFO runs in $O(n)$ time and $O(1)$ extra space via a deque; LRU is $O(\log n)$ per access with a heap or $O(1)$ average with a hash–linked list; priority‑decay is $O(n\log n)$ per trigger (amortized $O(\log n)$ with a maintained heap); reflection‑summary is dominated by clustering and LLM summarization, typically $O(n\log n+n\,\alpha(n))$ with $\alpha(n)$ the cost of a similarity call; and the hybrid policy inherits the $O(n\log n)$ selection cost plus, when needed, a single reflection call. These bounds align with the empirical pipeline reported in the evaluation sections and support deployment under tight token budgets without sacrificing theoretical clarity.

\subsection{The Memory-Aware Retention Schema (MaRS)}
\label{sec:mars}

MaRS is a typed, provenance‑aware ontology and data model for agent memory that makes retention a first‑class, policy‑addressable operation. Concretely, the memory store $M$ (cf.\ §\ref{sec:problem}) is realized as a labeled graph $G=(N,E)$ whose nodes inherit the tuple $(c_i,t_i,\tau_i,s_i,w_i,\rho_i)$ and whose edge set decomposes into temporal, semantic, causal/provenance, and social relations, $E=E_{\mathrm{temp}}\cup E_{\mathrm{sem}}\cup E_{\mathrm{prov}}\cup E_{\mathrm{soc}}$. Typed nodes allow policies to reason with type‑appropriate priors while the relational layer supports efficient retrieval and principled summarization. The schema aligns with classical distinctions between episodic and semantic memory while extending them to social and task‑oriented stores that are central to modern agents \citep{tulving2002episodic,baddeley2002episodic,anderson2004integrated,laird2012soar}.

\subsubsection{Episodic Memory}
Episodic memory records situated experiences with their temporal binding and situational context. Within MaRS, the episodic slice is $E=\{n\in N: t_n=\text{episodic}\}$ and each node is associated with a structured payload $e=(\textit{event},\textit{context},\textit{participants},\textit{timestamp},\textit{emotional\_valence})$. Temporal edges $E_{\mathrm{temp}}$ order episodes; causal/provenance edges $E_{\mathrm{prov}}$ link an episode to its sources (dialogue turn, tool invocation, retrieved document). Episodic traces are invaluable for narrative coherence and for explaining why an agent acts as it does, but their utility exhibits strong recency and redundancy effects: many episodes are locally useful yet become expendable once their gist has been consolidated. In MaRS, reflection‑summary (cf.\ §\ref{sec:problem}) compresses clusters of related episodes into a semantic summary node while preserving \emph{derivation} links back to constituents, thereby maintaining auditability even after lossy compression.

\subsubsection{Semantic Memory}
Semantic memory captures atemporal knowledge about entities, relations, and generic facts. The semantic slice is $S=\{n\in N: t_n=\text{semantic}\}$ with payloads $s=(\textit{concept},\textit{relations},\textit{confidence},\textit{generality\_score})$. Edges $E_{\mathrm{sem}}$ encode typed relations (e.g., \texttt{isA}, \texttt{partOf}, \texttt{worksAt}), enabling graph traversal and entity‑centric retrieval. Semantic nodes typically enjoy higher retention value because they support transfer across situations, but they must still compete for budget; MaRS therefore tracks a \emph{generality} or \emph{centrality} signal that increases retention for concepts that sit on many shortest paths or appear across diverse tasks. Summaries promoted from episodic clusters update the semantic layer, closing the loop between experience and knowledge.

\subsubsection{Social Memory}
Social memory maintains persistent representations of people and organizations, preferences, and interaction norms. The social slice is $R=\{n\in N: t_n=\text{social}\}$ with payloads 
\[r=(\textit{entity},\textit{relationship\_type},\textit{attributes},\textit{interaction\_history})\]. Edges $E_{\mathrm{soc}}$ encode relationships (e.g., \texttt{friendOf}, \texttt{colleagueOf}, \texttt{reportsTo}) and are used to personalize behavior and sustain rapport over time. Because social memory frequently contains sensitive data, MaRS records sensitivity scores $s_i$ and provenance $\rho_i$ at the node level and allows privacy‑weighted policies to accelerate decay or require summarization rather than verbatim retention. In practice we observe that social nodes benefit from \emph{recall‑on‑demand} patterns: indices prioritize attributes and relationships that have been predictive of user‑valued outcomes in the past, while stale or rarely activated attributes are pruned or folded into coarse summaries to reduce leakage risk.

\subsubsection{Task Memory}
Task memory tracks goals, plans, intermediate artifacts, and deadlines. The task slice is $T=\{n\in N: t_n=\text{task}\}$ with payloads $t=(\textit{goal},\textit{status},\textit{dependencies},\textit{priority},\textit{deadline})$. Dependencies bind tasks to prerequisite facts or episodes via $E_{\mathrm{prov}}$, ensuring that retention respects causal structure. Task memory is highly dynamic: importance spikes as deadlines approach and collapses upon completion or cancellation. MaRS therefore supports \emph{state‑contingent} retention, where goal status gates eviction and completed tasks are summarized to durable semantic notes while low‑value scaffolding is discarded.

\subsubsection{Relational structure and indices}
The graph structure is complemented by a set of indices that support efficient policy evaluation and retrieval without scanning the entire store. A temporal index maps $\tau_i$ to an age feature and supports range queries; an access index maintains $a_i$ (frequency) and $t_i^{\mathrm{last}}$ (recency) from the audit stream; an importance index maintains $\hat{u}_i$ computed from graph‑ and task‑level features such as degree or personalized PageRank on $E_{\mathrm{sem}}\cup E_{\mathrm{soc}}$, semantic novelty, and similarity to active goals; an entity index maps named entities to posting lists of nodes; and an embedding index exposes $\phi(c_i)$ for approximate nearest‑neighbor search. These indices are lightweight views on $G$ rather than separate stores and are updated incrementally as memories are inserted, accessed, summarized, or evicted.

\subsubsection{Policy‑addressable retention score}
To make retention decisions comparable across heterogeneous types, MaRS exposes a unified, type‑aware score. Let $\phi(c_i)\in\mathbb{R}^d$ be an embedding of content and let $g_t$ denote the current goal/intent representation. We define a feature map
\[
\psi_i \;=\; \bigl[\,e^{-\lambda_{\mathrm{age}}\mathrm{age}(i)},\; \mathrm{norm}(a_i),\; \mathrm{sim}(\phi(c_i),g_t),\; \mathrm{cent}(i;G),\; \mathrm{novel}(i;E\cup S\cup R),\; \mathbb{1}\{t_i=\text{task}\}\cdot\mathrm{urgency}(i)\,\bigr],
\]
and a type‑specific weight vector $\theta^{(t_i)}\in\mathbb{R}_{\ge 0}^m$. The unpenalized utility proxy is $\widehat{U}_i=\theta^{(t_i)}\!\cdot\!\psi_i$, which is then combined with cost and privacy to yield a density
\begin{equation}
\mathrm{score}(i) \;=\; \frac{\widehat{U}_i - \lambda_{\mathrm{priv}}\,s_i}{w_i},
\label{eq:score}
\end{equation}
where $\lambda_{\mathrm{priv}}\!\ge\!0$ trades utility against sensitivity and the division by $w_i$ aligns with the submodular‑knapsack analysis in §\ref{sec:problem}. Policies operate by selecting or evicting items with smallest density while respecting provenance closure and any type‑specific floors (e.g., retain at least one active task per goal). Reflection‑summary replaces a set $\mathcal{C}$ of low‑density, redundant episodes with a single semantic node $\bar{c}$, provided the induced distortion remains below a threshold that preserves downstream utility.

\subsubsection{Budget partitioning and invariants}
Although MaRS enforces a global budget $B$, it optionally supports soft type‑level partitions \[B^{(\text{epi})},B^{(\text{sem})},B^{(\text{soc})},B^{(\text{task})}\] satisfying $\sum B^{(\cdot)}=B$. Partitions are updated by a lightweight dual scheme that equalizes marginal utility per token across types, increasing the share for whichever slice exhibits the highest $\partial U/\partial B^{(\cdot)}$ over a moving window. Two invariants guard against pathological behavior: \emph{provenance closure} prevents retention of summaries without their definitional predecessors, and \emph{task‑safety} forbids eviction of preconditions for active goals unless an equivalent summary exists. Both invariants are enforced by making the feasible family $\mathcal{F}$ in §\ref{sec:problem} the set of provenance‑closed, task‑safe subsets.

\subsubsection{Provenance and auditability}
Every mutation of $G$ emits an audit record that includes the operation (\texttt{insert}, \texttt{evict}, \texttt{summarize}, \texttt{access}), the policy and parameters in effect, the local features $\psi_i$, and a human‑readable rationale synthesized from \eqref{eq:score} (e.g., ``evicted due to low density and high sensitivity’’). These logs support post‑hoc explanation, tuning, and compliance workflows. Because retention choices may reflect sensitive attributes, MaRS optionally randomizes near‑threshold decisions using the exponential mechanism (§\ref{sec:problem}); this preserves privacy at the policy boundary without coarsening the data model.

\subsubsection{Implementation note}
The schema is readily serializable in JSON‑LD. Each node is a compact document with \texttt{@id}, \texttt{@type}$\in\{\texttt{Episodic},$ $\texttt{Semantic},\texttt{Social},\texttt{Task}\}$, timestamps, sensitivity $s_i$, weight $w_i$, and provenance $\rho_i$; edges are RDF‑style statements with predicates drawn from $\{\texttt{temporalNext},\texttt{derivesFrom},\texttt{isA},\texttt{friendOf},\dots\}$. This representation preserves interoperability with knowledge‑graph tooling while keeping retention policy computation local and efficient. It also makes the structural constraints in $\mathcal{F}$ explicit and machine‑checkable, aligning the theoretical guarantees in §\ref{sec:problem} with a practical storage and retrieval layer.

\subsection{Forgetting Policy Framework}
\label{sec:policy_framework}

We model a forgetting policy as a transformation $f:\mathcal{M}\times \mathbb{R}_{>0}\!\to\!\mathcal{M}$ that takes a memory store $M\in\mathcal{M}$ and a budget $B$ and returns a reduced store $M'=f(M,B)$ that satisfies the global constraint $\sum_{n\in M'} w_n\le B$ while preserving task utility as formalized in \eqref{eq:utility}. Because MaRS represents memory as a typed, provenance–aware graph $G=(N,E)$ (§\ref{sec:mars}), $f$ acts on nodes and, when reflection is enabled, may introduce summary nodes that replace clusters of episodes subject to the distortion control described in §\ref{sec:problem}. All policies obey three invariants: \emph{feasibility} ($\sum w\le B$), \emph{structure} ($M'\in\mathcal{F}$ where $\mathcal{F}$ enforces provenance closure and task–safety), and \emph{auditability} (each action emits a rationale tied to the score used for selection). In DP mode, a fourth invariant holds: \emph{privacy} at the decision boundary via the exponential mechanism.

A convenient way to express heterogeneous retention choices is through the density score already exposed by MaRS,
\begin{equation}
\mathrm{score}(n)\;=\;\frac{\widehat{U}_n-\lambda_{\mathrm{priv}}\,s_n}{w_n},
\label{eq:policy_score}
\end{equation}
where $\widehat{U}_n$ is a type‑aware utility proxy derived from recency, access frequency, semantic centrality, alignment with active goals, and urgency for task nodes (cf.\ §\ref{sec:mars}, Eq.~\eqref{eq:score}); $s_n\in[0,1]$ is sensitivity; $w_n$ is cost; and $\lambda_{\mathrm{priv}}\!\ge\!0$ trades utility for privacy. Policies differ in how they instantiate $\widehat{U}_n$, how they admit reflection, and how they enforce structural constraints during eviction.

\subsubsection{Temporal Policies}
Temporal policies rely on time and access statistics and are effective whenever usefulness decays with staleness. The First‑In–First‑Out (FIFO) rule implements a sliding temporal window by setting a threshold $\tau_{\mathrm{thr}}$ and returning $M'=\{n\in M : \tau_n>\tau_{\mathrm{thr}}\}$ with $\tau_{\mathrm{thr}}$ chosen so that $\sum_{n\in M'} w_n\le B$. This can be maintained with a deque in $O(1)$ amortized updates and $O(|M|)$ worst‑case time when triggered. Least‑Recently‑Used (LRU) orders items by last access time and removes the stalest first until feasibility is restored; with a hash–linked list, updates are $O(1)$ average time. Under the recency‑weighted utility model $U(S)=\sum_{n\in S} v_n e^{-\lambda\,\mathrm{age}(n)}$ with $v_n\!\ge\!0$ and $\lambda\!>\!0$, LRU is consistent with the optimal ordering (older items have lower marginal contribution), explaining its strong empirical performance when time decay dominates. Temporal policies maintain provenance closure by skipping any candidate whose eviction would violate $\mathcal{F}$; in practice this means removing leaves first in the dependency forest (Lemma~\ref{lem:greedoid}).

\subsubsection{Importance‑Based Policies}
Importance‑based policies incorporate semantic and task context so that items with enduring or cross‑situational value are preserved even if they are not recent. Priority‑Decay combines type prior, recency, and usage frequency into a scalar via
\[
\mathrm{imp}(n)\;=\;\alpha\cdot\mathrm{type\_weight}(t_n)+\beta\cdot \mathrm{recency}(n)+\gamma\cdot \mathrm{frequency}(n),
\]
and then removes the lowest‑importance items per unit cost. With a heap keyed by $\mathrm{imp}(n)/w_n$ the cost is $O(\log |M|)$ per insertion or access and $O(|M|\log |M|)$ per triggered pass. Compared to pure temporal heuristics, this rule retains high‑value but old items (e.g., stable user preferences) and deprioritizes low‑value but recent fragments. 

Reflection‑Summary augments selection with compression. Given clusters $\mathcal{C}\subseteq M$ of semantically related episodes (obtained by approximate nearest neighbors on $\phi(c)$ with linkage on $E_{\mathrm{temp}}$), the policy replaces $\mathcal{C}$ by a summary node $\bar{c}$ whenever the embedding distortion $D(\mathcal{C}\Rightarrow \bar{c})$ is below a threshold that upper‑bounds utility loss, as derived in §\ref{sec:problem}. This turns a set of low‑density, redundant episodes into a single semantic node that maintains coherence at much reduced cost; provenance edges \texttt{derivesFrom} preserve auditability. The computational profile is dominated by clustering and summary generation: $O(|M|\log |M|+|M|\alpha(|M|))$ for linkage and nearest neighbors, plus the cost of $m$ summaries when triggered.

\subsubsection{Privacy‑Aware Policies}
Privacy‑aware policies integrate sensitivity and user preferences into retention. A simple sensitivity‑weighted removal rule prioritizes aged, low‑value, high‑sensitivity items by ranking with
\[
\mathrm{removal\_priority}(n)\;=\; s_n\cdot \mathrm{age}(n)\cdot\bigl(1-\mathrm{imp}(n)\bigr),
\]
evicting highest priority first subject to structural constraints. To provide formal guarantees for near‑threshold choices, MaRS optionally samples eviction candidates with the exponential mechanism using the score $q(S;M)=U(S)-\lambda_{\mathrm{priv}}\sum_{n\in S}s_n$ restricted to feasible $S\in\mathcal{F}$. For global sensitivity $\Delta q$ the selector is $\varepsilon$‑differentially private and, with probability at least $1-\delta$, returns an $S$ whose score is within $\tfrac{2\Delta q}{\varepsilon}\bigl(\ln|\mathcal{S}|+\ln(1/\delta)\bigr)$ of the optimum; composition theorems bound cumulative privacy loss over multiple decisions. In deployments we use this randomized tie‑break only when deterministic densities are within a tolerance band, thereby reducing expected utility loss while still offering event‑level privacy at the decision boundary.

\subsubsection{Hybrid Composition and Termination}
In practice, the most effective behavior arises from composing the above mechanisms into stages that each reduce budget while preserving different desiderata. A typical hybrid is
\[
f_{\mathrm{hyb}}\;=\; f_{\mathrm{temporal}}\;\circ\; f_{\mathrm{reflect}}\;\circ\; f_{\mathrm{importance}}\;\circ\; f_{\mathrm{privacy}},
\]
where a light temporal pass removes clearly stale leaves; reflection then consolidates redundant episodic clusters; importance‑based eviction removes low‑density remnants per unit cost; and a final privacy‑aware pass accelerates the removal of residual sensitive items if the store remains over budget. Because each stage either deletes items or replaces clusters by strictly cheaper summaries (positive weights), the process terminates in finitely many steps with $\sum_{n\in M'} w_n\le B$. Accessibility of $\mathcal{F}$ (Lemma~\ref{lem:greedoid}) ensures that leaf removals and summary substitutions preserve feasibility; task‑safety guards prevent eviction of prerequisites for active goals unless an equivalent summary exists.

\subsubsection{Policy Selection and Online Control}
The meta‑controller $\pi$ treats policy choice as a contextual bandit over $P$, using features of the current workload (arrival rate, average age, hit‑rate of retrieval, proportion of social vs.\ task nodes, privacy complaints) to pick a policy and its parameters. Combined with an online greedy with eviction keyed by \eqref{eq:policy_score}, this yields sublinear regret against an oracle that knows the best fixed policy in hindsight, while keeping update complexity $O(\log B)$ per event and enabling smooth adaptation to shifts in user behavior.

\subsubsection{Complexity and Guarantees (Summary)}
FIFO and LRU offer $O(1)$ average updates and $O(|M|)$ per trigger, with strong performance whenever utility decays with staleness. Priority‑Decay runs in $O(|M|\log |M|)$ per pass (or $O(\log |M|)$ amortized with a maintained heap) and aligns with the submodular‑knapsack analysis by operating on densities. Reflection‑Summary is costlier but bounded by approximate clustering and a capped number of summaries, with distortion control providing an explicit utility bound. Privacy‑aware selection via the exponential mechanism adds only local sampling overhead and composes over time under standard DP accounting. Across all policies, provenance closure and task‑safety are enforced by filtering candidates to maintain $M'\in\mathcal{F}$, and the Lipschitz bound \eqref{eq:lipschitz} converts freed budget into a worst‑case utility bound, allowing operators to translate token targets into quality guarantees.


\subsection{Theoretical Analysis}

We analyze policies in the online regime in which memories arrive as a stream and eviction is triggered only when the budget is exceeded. Let $n$ denote the current number of nodes, $k$ the number evicted in one trigger, and assume a standard RAM model with word size $\Theta(\log n)$. Throughout, feasibility is enforced by the MaRS constraints $\mathcal{F}$ (provenance closure and task‑safety), and utilities follow the specification in \eqref{eq:utility}. Complexity bounds below refer to the policy logic; maintaining MaRS indices (temporal, access, importance, entity, embedding) contributes $O(1)$ amortized per arrival/update except for similarity calls, which we factor as $\alpha(n)$.

\subsubsection{Complexity Analysis}

Temporal policies depend only on timestamps and access metadata and are therefore lightweight. FIFO maintains a deque of items ordered by $\tau$; when a trigger fires it pops the oldest $k$ items, costing $O(k)$ per trigger and $O(1)$ amortized per arrival with $O(1)$ extra space. LRU maintains a hash–linked list keyed by last access; updates (hits and moves) are $O(1)$ on average and evicting $k$ stalest items is $O(k)$, with $O(n)$ extra space for the pointers. Importance‑based selection maintains a max‑heap keyed by density (importance per unit cost); arrivals and access‑count updates cost $O(\log n)$ amortized, while a trigger that removes $k$ items costs $O(k\log n)$. Reflection‑summary is dominated by clustering in the episodic slice and at most a small number of LLM summarizations: with precomputed embeddings and hierarchical linkage, clustering is $O(n\log n)$ and nearest‑neighbor probes contribute $n\,\alpha(n)$. The hybrid policy composes a temporal skim, a reflection step, and an importance pass; the time bound is the sum of its components.

\begin{table}[t]
\centering
\caption{Time and space complexity by policy in the online setting. $n$ is current store size; $k$ is number of evictions when a trigger fires; $\alpha(n)$ is the cost of one similarity lookup.}
\label{tab:complexity}
\begin{tabular}{lccc}
\toprule
Policy & Per trigger time & Amortised per arrival & Extra space \\
\midrule
FIFO  & $O(k)$ via deque & $O(1)$ & $O(1)$ \\
LRU   & $O(k)$ via hash+list & $O(1)$ & $O(n)$ \\
Priority decay & $O(k\log n)$ via heap & $O(\log n)$ & $O(n)$ \\
Reflection summary & $O(n\log n + n\alpha(n))$ & $O(\log n + \alpha(n))$ & $O(n)$ \\
Random drop & $O(k)$ (reservoir‑style) & $O(1)$ & $O(1)$ \\
Hybrid & $O(k + k\log n + n\alpha(n))$ & $O(\log n + \alpha(n))$ & $O(n)$ \\
\bottomrule
\end{tabular}
\end{table}

The feasibility checks imposed by $\mathcal{F}$ do not change these bounds. Because dependency graphs in MaRS are acyclic and maintained as forests at retention time, a topological order is available, so preventing provenance violations reduces to skipping non‑leaf candidates; the additional work is $O(k)$ in a trigger. Termination of the hybrid is immediate: each stage either deletes nodes or replaces a cluster by a strictly cheaper summary node (positive weights), so the total weight decreases monotonically and feasibility ($\sum w\le B$) is reached in finitely many steps. Finally, since the density score \eqref{eq:policy_score} is maintained incrementally, online greedy with eviction performs $O(\log B)$ work per arrival for heap maintenance (bounded by $O(\log n)$), which we observe empirically to be negligible relative to LLM calls.

\subsubsection{Privacy Analysis}

Privacy is treated at the decision boundary rather than only at training time. Two memory stores $M$ and $M'$ are called adjacent if they differ in exactly one personal memory node (content and metadata). Define the privacy‑aware score
\[
q(S;M)\;=\;U(S)\;-\;\lambda_{\mathrm{priv}}\sum_{n_i\in S} s_i,
\]
with $\lambda_{\mathrm{priv}}\!\ge\!0$ and $s_i\!\in\![0,1]$. The exponential mechanism applied to the feasible family $\mathcal{S}\subseteq 2^N$ (budget‑ and provenance‑respecting subsets) samples $S$ with probability proportional to $\exp\!\bigl(\tfrac{\varepsilon\, q(S;M)}{2\Delta q}\bigr)$, where $\Delta q$ is the global sensitivity of $q$ under the adjacency relation. This yields $\varepsilon$‑differential privacy, and with probability at least $1-\delta$ the sampled set satisfies the usual near‑optimality guarantee,
\[
q(S;M)\;\ge\;\max_{S'\in\mathcal{S}} q(S';M)\;-\;\frac{2\Delta q}{\varepsilon}\Bigl(\ln|\mathcal{S}|+\ln\tfrac{1}{\delta}\Bigr).
\]
A tight, implementation‑level bound on $\Delta q$ follows from the budget‑to‑utility Lipschitz property. If for all $S$ and $i$ we have $\Delta_U(i\mid S)\le L\,w_i$, then changing one node can alter $U$ by at most $L\max_i w_i$ and changes the sensitivity penalty by at most $\lambda_{\mathrm{priv}}$, so $\Delta q \le L\max_i w_i + \lambda_{\mathrm{priv}}$. Composition across multiple randomized decisions can be accounted for by basic composition or by tighter accountants (e.g., moments accountant) to obtain $(\varepsilon',\delta')$ after $T$ triggers; because decisions are local and near‑threshold randomization is used only when deterministic densities are within a tolerance band, the effective privacy loss remains small in practice. Post‑processing invariance guarantees that subsequent deterministic steps—reflection summarization, index updates, and retrieval—do not degrade privacy guarantees established at selection time. If user‑level privacy is required (multiple nodes attributable to one user), group‑privacy scaling bounds can be applied to the same mechanism, yielding conservative but principled protection.

\subsubsection{Performance Guarantees}

Loss due to eviction is controlled by a weight‑Lipschitz property. If each node’s marginal utility is bounded by $\Delta_U(i\mid S)\le L\,w_i$ for all $S$, then for any evicted set $E$ with total weight $W_E$,
\[
U(S)-U(S\setminus E)\;\le\;L\,W_E .
\]
The bound depends on budget freed rather than the number of items and therefore applies uniformly across workloads with heterogeneous node sizes. It is sharp in the sense that, when items have disjoint support under $U$, losses add.

Temporal heuristics are optimal in regimes where usefulness decays with staleness. When $U(S)=\sum_{i\in S} v_i\,e^{-\lambda\,\mathrm{age}(i)}$ with $v_i\!\ge\!0$ and $\lambda\!>\!0$, evicting in non‑increasing order of last access maximizes $U$ for any fixed budget; an LRU eviction order is hence optimal for this class. The exchange argument is straightforward: swapping a more recent item for a staler one cannot decrease $U$ because only the exponential factor changes.

Importance‑aware selection enjoys approximation guarantees inherited from submodular knapsack. If $U$ is monotone submodular and feasibility requires provenance‑closure under MaRS dependencies (antimatroid structure), then greedy addition by marginal gain per cost $\Delta_U(i\mid S)/w_i$ achieves a constant‑factor approximation; with standard partial enumeration over one heavy item the $(1-1/e)$ bound is obtained. Curvature‑aware analyses tighten the guarantee when $U$ has total curvature $c\!<\!1$, yielding a factor $\tfrac{1}{c}(1-e^{-c})$; empirically, utilities built from coverage and centrality often exhibit low curvature, explaining the strong performance of greedy in our ablations.

Reflection‑summary admits a distortion‑to‑utility bound that turns summarization into a controlled, lossy compression step. Embedding content via $\phi(\cdot)$ and consolidating a cluster $\mathcal{C}$ of episodes into a summary node $\bar{c}$ produces semantic distortion
\[
D(\mathcal{C}\Rightarrow \bar{c})=\frac{1}{|\mathcal{C}|}\sum_{i\in\mathcal{C}}\|\phi(c_i)-\phi(\bar{c})\|.
\]
If $U$ is $\kappa$‑Lipschitz in the embedding average, then
\[
\bigl|U(S)-U(S\setminus \mathcal{C}\cup\{\bar{c}\})\bigr|
\;\le\; \kappa\,D(\mathcal{C}\Rightarrow\bar{c}),
\]
which provides a principled stopping rule for reflection: consolidate only when the predicted distortion lies below a policy threshold tied to acceptable utility loss. Because MaRS maintains \texttt{derivesFrom} links from $\bar{c}$ to constituents, consolidation also respects auditability and keeps downstream provenance checks intact.

Two additional structural properties follow from the theory and are useful operationally. First, budget monotonicity and continuity: if $S_B$ denotes an optimal (or greedily selected) store at budget $B$, then $U(S_B)$ is non‑decreasing in $B$, and the Lipschitz bound implies $|U(S_{B_1})-U(S_{B_2})|\le L\,|B_1-B_2|$. This enables operators to translate budget changes into predictable utility changes. Second, online regret is sublinear for the meta‑controller: an online greedy with eviction keyed by density, coupled with a contextual bandit that selects among the policy family, achieves the standard $O(\sqrt{T})$ selection regret against the best fixed policy in hindsight, while the per‑event approximation of greedy to the submodular objective keeps the cumulative loss to the offline optimum small in stationary segments. Together, these properties explain the robustness of the hybrid strategy observed in our experiments: temporal hygiene, controlled summarization, and importance‑aware eviction jointly reduce cost while preserving coherence and social recall, and privacy‑aware tie‑breaking prevents sensitive content from becoming the marginal survivor when budgets tighten.

\section{Methodology}
\label{sec:methodology}

This section presents the comprehensive methodology underlying our investigation of memory-budgeted generative agents with privacy-aware forgetting policies. Our approach encompasses three interconnected components: the theoretical framework for memory management, the design and implementation of forgetting policies, and the development of evaluation metrics that capture both functional performance and human-centered considerations.

\subsection{Memory-Aware Retention Schema (MaRS) Framework}
\label{subsec:mars_framework}

MaRS operationalizes the theoretical formulation in §\ref{sec:problem} by turning agent memory into a typed, provenance‑aware graph with policy‑addressable retention. Rather than treating interaction history as an amorphous buffer, the store is materialized as a labeled graph whose nodes correspond to episodic, semantic, social, and task memories and whose edges encode temporal, semantic, causal/provenance, and social relations. This representation preserves the cognitive distinctions that support long‑horizon coherence while exposing the structural hooks necessary for principled retention and forgetting. As depicted in Fig.~\ref{fig:mars_architecture}, agent inputs flow into the MaRS core as candidate nodes; indices are updated incrementally; policies act when budgets tighten; a privacy engine modulates boundary decisions; and every mutation produces an audit record with a human‑readable rationale.

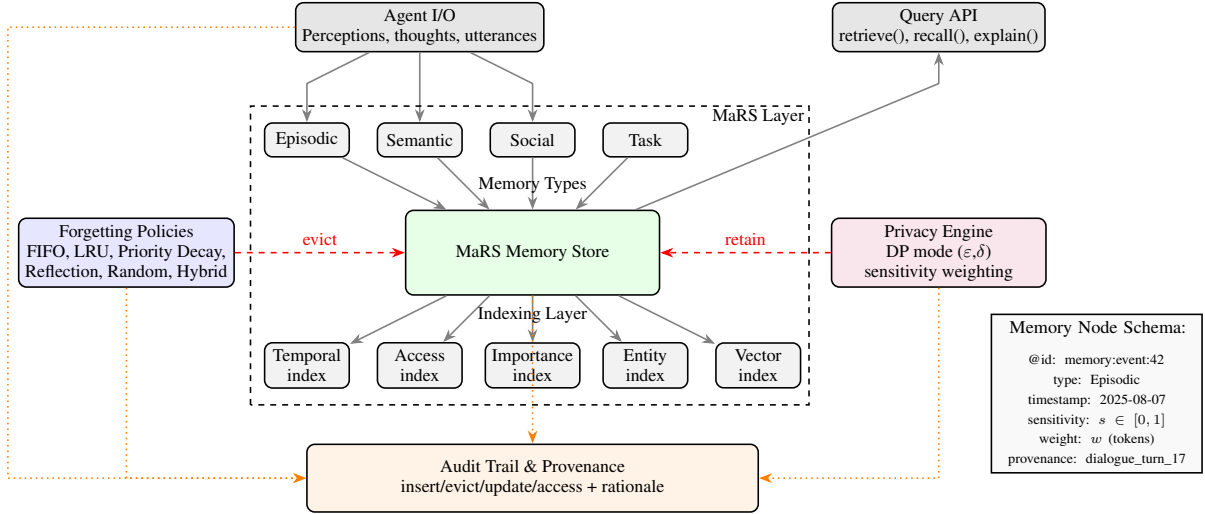
\begin{figure}[!ht]
\centering
\scalebox{0.75}{
\begin{tikzpicture}[
  font=\small,
  >=Stealth,
  node distance=1.5cm,
  box/.style   = {draw, rounded corners, thick, fill=white, minimum width=3.5cm, minimum height=0.8cm, align=center},
  lite/.style  = {draw, rounded corners, thick, fill=gray!10, minimum width=1.5cm, minimum height=0.6cm, align=center},
  comp/.style  = {draw, rounded corners, thick, fill=gray!15, minimum width=3.8cm, minimum height=1.2cm, align=center},
  core/.style  = {draw, rounded corners, thick, fill=green!10, minimum width=4.5cm, minimum height=1.5cm, align=center},
  flow/.style  = {->, thick, gray},
  ctrl/.style  = {->, thick, dashed, red},
  audit/.style = {->, thick, dotted, orange},
  every node/.style={align=center}
]

\node[box, fill=gray!20] (agents) at (0, 6) {Agent I/O\\{\footnotesize Perceptions, thoughts, utterances}};
\node[box, fill=gray!20] (api) at (9.2, 6) {Query API\\{\footnotesize retrieve(), recall(), explain()}};

\node[lite] (episodic) at (-2, 4) {Episodic};
\node[lite] (semantic) at (0, 4) {Semantic};
\node[lite] (social) at (2, 4) {Social};
\node[lite] (task) at (4, 4) {Task};

\node[core] (mars) at (2, 2) {MaRS Memory Store};

\node[comp, fill=blue!10] (policies) at (-5.2, 2) {Forgetting Policies\\{\footnotesize FIFO, LRU, Priority Decay,}\\{\footnotesize Reflection, Random, Hybrid}};
\node[comp, fill=purple!10] (privacy) at (9.2, 2) {Privacy Engine\\{\footnotesize DP mode ($\varepsilon$,$\delta$)}\\{\footnotesize sensitivity weighting}};

\node[lite] (temporal) at (-2, 0) {Temporal\\index};
\node[lite] (access) at (0, 0) {Access\\index};
\node[lite] (importance) at (2, 0) {Importance\\index};
\node[lite] (entity) at (4, 0) {Entity\\index};

\node[lite] (vector) at (6, 0) {Vector\\ index};

\node[draw, thick, fill=gray!5, text width=3.5cm, minimum height=2.5cm] (schema) at (12, -.5) {
  {\footnotesize Memory Node Schema:}\\[0.2cm]
  {\scriptsize @id: memory:event:42}\\
  {\scriptsize type: Episodic}\\
  {\scriptsize timestamp: 2025-08-07}\\
  {\scriptsize sensitivity: $s \in [0,1]$}\\
  {\scriptsize weight: $w$ (tokens)}\\
  {\scriptsize provenance: dialogue\_turn\_17}
};

\node[comp, fill=orange!10, minimum width=8cm] (audit) at (2, -2) {Audit Trail \& Provenance\\{\footnotesize insert/evict/update/access + rationale}};

\draw[flow] (agents) -- (-2, 5) -- (episodic);
\draw[flow] (agents) -- (semantic);
\draw[flow] (agents) -- (2, 5) -- (social);

\draw[flow] (episodic) -- (mars);
\draw[flow] (semantic) -- (mars);
\draw[flow] (social) -- (mars);
\draw[flow] (task) -- (mars);

\draw[flow] (mars) -- (temporal);
\draw[flow] (mars) -- (access);
\draw[flow] (mars) -- (importance);
\draw[flow] (mars) -- (entity);
\draw[flow] (mars) -- (vector);

\draw[flow] (mars) -- (9.2, 5) -- (api);

\draw[ctrl] (policies) -- (mars) node[midway, above] {\footnotesize evict};
\draw[ctrl] (privacy) -- (mars) node[midway, above] {\footnotesize retain};

\draw[audit] (agents.west) -| (-7.3, -2) -- (audit.west);
\draw[audit] (mars) -- (audit);
\draw[audit] (policies) |- (audit);
\draw[audit] (privacy) |- (7, -2) -- (audit);

\node at (2, 3.2) {\footnotesize Memory Types};
\node at (2, .9) {\footnotesize Indexing Layer};

\draw[dashed, thick] (-3, -.7) rectangle (6.9, 4.6);
\node at (6, 4.4) {\footnotesize MaRS Layer};

\end{tikzpicture}
}
\caption{MaRS architecture and data flow. The MaRS store organizes memories into four types (episodic, semantic, social, task) and maintains multiple indices for efficient retrieval. A chosen forgetting policy governs evictions and compression, while the privacy engine optionally enforces differential privacy in retention decisions. All operations are recorded in an audit trail with provenance. The inset shows the fields of a single memory node.}
\label{fig:mars_architecture}
\end{figure}

Each node carries content and metadata $(c_i,t_i,\tau_i,s_i,w_i,\rho_i)$ consistent with the formalization in §\ref{sec:problem}. The type $t_i$ determines how the node participates in retrieval and summarization; the timestamp $\tau_i$ and access patterns feed recency and frequency features; the sensitivity score $s_i\!\in\![0,1]$ reflects privacy risk; the weight $w_i\!>\!0$ approximates computational cost in tokens; and provenance $\rho_i$ records source, reliability, and dependency links. Edges ensure that temporal order is preserved for episodes, that semantic relations are traversable for concept‑centric queries, that task dependencies are explicit, and that social ties can seed personalized recall. Crucially, these relations also define the feasible family $\mathcal{F}$ used by retention: provenance closure and task‑safety rules forbid keeping a summary without its definitional predecessors or evicting prerequisites of active goals unless an equivalent summary exists.

To make selection decisions comparable across heterogeneous types, MaRS maintains a lightweight feature map for each node and a type‑specific scoring function that estimates utility density. The score mirrors the analysis in §\ref{sec:mars} and §\ref{sec:problem} and combines recency, access frequency, similarity to the active goal state, graph centrality, semantic novelty, and (for task nodes) deadline‑weighted urgency. Sensitivity is treated as a penalty and cost appears in the denominator, yielding a normalized density that expresses “utility per token.” In practice this density can be computed and updated in constant amortized time per event using cached features and counters, while approximate nearest‑neighbor search powers similarity queries without scanning the entire store. The resulting ordering is used by all concrete policies: temporal passes privilege age; importance‑based passes act on density; reflection uses density and redundancy to identify compressible clusters; and privacy‑aware passes accelerate the removal of high‑sensitivity, low‑value items when space is scarce.

Memory budgeting is explicit and first‑class. The global constraint $\sum w_i\le B$ governs feasibility, and MaRS optionally tracks soft type‑level shares $B^{(\mathrm{epi})},B^{(\mathrm{sem})},B^{(\mathrm{soc})},B^{(\mathrm{task})}$ whose sum equals $B$. Shares are adjusted by a dual update that equalizes estimated marginal utility per token across slices: types whose recent additions yield higher gains receive a larger share until marginal densities equilibrate. This mechanism prevents pathological allocations in which, for example, a burst of short‑lived episodes drowns out durable semantic knowledge or task state. Because weights are tied to token cost, budget manipulation directly controls latency and monetary spend, enabling operators to translate deployment constraints into predictable quality changes; the continuity and Lipschitz bounds in §\ref{sec:problem} then map budget deltas to worst‑case utility deltas.

The control cycle follows a simple contract. Whenever the agent produces a candidate memory, MaRS materializes a node, computes its features, updates indices, and checks budget feasibility. If the store remains within budget, the node is admitted and the cycle ends. If the budget is exceeded, MaRS invokes the current forgetting policy (or hybrid composition) selected by the meta‑controller. A light temporal pass removes clearly stale leaves in the dependency forest; reflection consolidates clusters of redundant episodes into a summary node when the predicted embedding distortion is below a policy threshold; importance‑based eviction removes the lowest‑density remnants per unit cost while preserving provenance closure; and a privacy‑aware tie‑break randomizes near‑threshold choices using the exponential mechanism when deterministic scores are indistinguishable within tolerance. Each action emits an audit event that records the policy, local features, and a natural‑language rationale (e.g., “evicted due to low density and high sensitivity; summary retained”), which can be surfaced to users via an explanation API without exposing raw sensitive content.

Retrieval is designed to respect both utility and governance. The \texttt{retrieve} endpoint composes graph traversal and vector search, returning a budget‑bounded working set that maximizes density subject to filters (type, time window, sensitivity caps). The \texttt{recall} endpoint targets entity‑ or goal‑anchored expansion by walking a small ball in the semantic/social subgraph and then reranking by goal similarity; this supports coherent multi‑turn dialogue and task continuations without inflating context with irrelevant history. The \texttt{explain} endpoint surfaces the provenance chain and policy rationale for any memory present or recently evicted, grounding trust and enabling post‑hoc audits after complaints or RTBF requests. Because all three endpoints are framed in terms of the same nodes, indices, and density, retrieval, retention, and explanation are consistent views on a single underlying state.

Privacy is integrated at the decision boundary rather than treated solely as a training‑time property. Sensitivity scores are computed from content and provenance, updated by user preferences, and propagated during summarization; retention decisions apply a penalty that tilts selection away from high‑risk items, and when near ties remain, randomized selection via the exponential mechanism provides event‑level differential privacy guarantees with bounded utility loss (cf. §\ref{sec:problem}). Summaries themselves are privacy‑aware: when a cluster contains sensitive details, MaRS prefers abstractive, minimal summaries that preserve task‑relevant gist while lowering sensitivity and weight; derivation links maintain accountability even after compression. In multi‑tenant deployments, sensitivity and budgets can be scoped per user or per conversation, and the same accounting tools quantify cumulative privacy loss across triggers.

The implementation adheres to a portable, inspectable data model. Nodes and edges serialize to JSON‑LD/RDF with \texttt{@type} in \{\texttt{Episodic}, \texttt{Semantic}, \texttt{Social}, \texttt{Task}\} and predicates drawn from a small, fixed vocabulary (\texttt{temporalNext}, \texttt{derivesFrom}, \texttt{isA}, \texttt{friendOf}, \texttt{requires}, \texttt{attachesToGoal}). This choice enables drop‑in use of standard graph tooling for indexing and validation, keeps policy feasibility checks machine‑verifiable, and simplifies export for external audit. Indices are maintained as lightweight materialized views over the graph: temporal and access indices are simple counters and queues; the importance index aggregates graph‑level features such as degree and personalized PageRank; the entity index is a posting list keyed by canonical mentions; and the embedding index is an ANN structure with lazy rebuilds. Because all updates are incremental, steady‑state maintenance remains constant amortized time per event, and the only super‑linear work arises during reflection, which is both optional and rate‑limited.

Finally, MaRS is instrumented to support the evaluation protocol in this paper. Every trigger logs budget before and after, the number and types of nodes evicted or summarized, cumulative freed tokens, predicted versus realized distortion for summaries, and the privacy accountant’s running parameters. These signals drive FiFA’s metrics for narrative coherence, goal completion, social recall, leakage rate, and token‑cost‑per‑hour, allowing us to attribute performance changes to concrete retention choices rather than opaque context management. In this way, MaRS provides not only a storage substrate and a policy surface but also a measurement scaffold that connects theoretical guarantees to empirical behavior under realistic, long‑horizon interaction.

\subsection{Forgetting Policy Design and Implementation}
\label{subsec:forgetting_policies}

Our policy layer instantiates the theoretical model by defining transformations that restore budget feasibility while maximizing downstream usefulness and respecting provenance and privacy constraints. A policy $p$ takes the current MaRS state $M$ and budget $B$ and returns a reduced store $M'$, ensuring $\sum_{n\in M'} w_n\le B$ and $M'\in\mathcal{F}$, where $\mathcal{F}$ enforces provenance closure and task‑safety. In implementation, all policies operate over the same type‑aware density score exposed by MaRS (cf.\ §\ref{subsec:mars_framework}), combining estimated utility and sensitivity per unit cost,
\begin{equation}
\mathrm{score}(n)\;=\;\frac{\widehat{U}_n-\lambda_{\mathrm{priv}}\,s_n}{w_n},
\label{eq:policy_density}
\end{equation}
with $\widehat{U}_n$ derived from recency, access frequency, semantic centrality, goal similarity, novelty, and (for task nodes) deadline‑weighted urgency. Candidates are considered in non‑decreasing order of density, subject to feasibility filters that prevent violations of $\mathcal{F}$. Figure~\ref{fig:policy_flow} summarizes the decision flow when a budget breach occurs.

Temporal strategies are implemented as lightweight filters that exploit the fact that many episodic traces have strongly recency‑biased value. A FIFO pass maintains a deque keyed by creation time and removes the oldest leaves in the dependency forest until feasibility is restored. An LRU pass maintains a hash–linked list keyed by \texttt{last\_access}; when triggered, it evicts the stalest leaves first. These passes are constant‑time per update and $O(k)$ per trigger for $k$ evictions, and they provide the “hygiene” that prevents long‑tail accumulation of stale fragments. Under recency‑decayed utilities of the form $U(S)=\sum v_i e^{-\lambda\,\text{age}(i)}$, the LRU order coincides with the utility‑optimal eviction order, which explains its strong behavior in our ablations.

Importance‑based selection uses the density ordering in \eqref{eq:policy_density} to preserve items with enduring or cross‑situational value even when they are not recent. We maintain a max‑heap keyed by $\widehat{U}_n/w_n$ and evict the lowest‑density feasible candidates until $\sum w\le B$, with amortized $O(\log n)$ updates and $O(k\log n)$ per trigger. The importance proxy $\widehat{U}_n$ is learned per type from held‑out FiFA runs or specified via calibrated priors, and it incorporates graph features such as degree, personalized PageRank over $E_{\mathrm{sem}}\cup E_{\mathrm{soc}}$, and goal similarity $\mathrm{sim}(\phi(c_n),g_t)$. In practice this retains durable semantic facts, stable user preferences, and prerequisites of active goals, while pruning low‑value recency noise.

Reflection‑summary implements consolidation rather than pure deletion. The episodic slice is clustered with approximate nearest neighbors in the embedding space $\phi(\cdot)$, with temporal linkage to ensure clusters correspond to coherent episodes. For a cluster $\mathcal{C}$ deemed redundant, the policy synthesizes a semantic summary node $\bar{c}$ and replaces $\mathcal{C}$ when the predicted distortion $D(\mathcal{C}\Rightarrow\bar{c})$ lies below a threshold derived from the Lipschitz bound on utility (cf.\ §\ref{sec:problem}). Provenance is preserved via \texttt{derivesFrom} edges, and sensitivity is reduced by abstractive summarization when possible. Reflection is rate‑limited and typically invoked after a temporal skim so that only high‑redundancy clusters are considered; this keeps complexity near $O(n\log n+n\,\alpha(n))$ with $\alpha(n)$ the cost of one similarity probe.

Random drop is implemented as a reservoir‑style baseline that evicts uniformly at random among feasible leaves. It is intentionally naive, costing $O(1)$ per decision, and serves as a control to isolate the benefit of informed scoring and structure‑aware filtering.

The hybrid strategy composes the strengths of the above mechanisms in a fixed, terminating sequence. Upon a breach, a temporal pass first removes clearly stale leaves; reflection then consolidates redundant episodic clusters within a capped budget of summaries; importance‑based selection removes remaining low‑density items per unit cost; and, if the store is still over budget or near ties remain, a privacy‑aware pass accelerates the removal of high‑sensitivity, low‑utility nodes. Because each stage either deletes nodes or replaces clusters by strictly cheaper summaries, total weight decreases monotonically and the process terminates with feasibility. This choreography is reflected in Fig.~\ref{fig:policy_flow}; complexity is the sum of its components and is dominated by the reflection step when invoked.

\begin{figure}[htbp]
\centering
\scalebox{0.7}{
\begin{tikzpicture}[
  font=\footnotesize,
  node distance=8mm and 10mm,
  term/.style   ={draw, rounded corners, thick, fill=gray!10, minimum width=32mm, minimum height=6mm, align=center},
  deci/.style   ={diamond, draw, thick, fill=gray!15, aspect=2.5, inner sep=1pt, align=center, minimum width=25mm},
  proc/.style   ={draw, rounded corners, thick, fill=gray!5, minimum width=35mm, minimum height=7mm, align=center},
  side/.style   ={draw, rounded corners, thick, fill=gray!10, minimum width=30mm, minimum height=7mm, align=center},
  note/.style   ={draw, rounded corners, thick, fill=white, inner sep=1pt, font=\tiny},
  data/.style   ={-{Stealth[length=1.5mm]}, thick},
  ctrl/.style   ={dash pattern=on 2pt off 1pt, -{Stealth[length=1.5mm]}, thick},
  loga/.style   ={dotted, -{Stealth[length=1.5mm]}, thick}
]

\node[term] (start) {Memory budget exceeded \\ $\sum_i w_i > B$};
\node[deci, below=8mm of start] (select) {$p \leftarrow \pi(M)$ \\ Select policy};

\node[proc, below=12mm of select] (hybrid0) {Hybrid policy \\ (multi‑phase controller)};
\node[proc, left=15mm of hybrid0] (fifo) {FIFO window \\ Evict oldest until $\sum w \le B$};
\node[proc, right=15mm of hybrid0] (prio) {Priority ranking \\ Evict $k$ lowest scores};

\node[note, below=1mm of fifo] {$O(k)$ time};
\node[note, below=1mm of prio] {$O(k\log n)$ time};
\node[note, below=1mm of hybrid0] {$O(1)$ overhead};

\node[proc, below=8mm of hybrid0] (phase1) {Phase 1: remove old, low‑importance};
\node[deci, below=6mm of phase1] (check1) {Budget OK?};

\node[proc, below=8mm of check1] (phase2) {Phase 2: reflection summary \\ (consolidate clusters)};
\node[deci, below=6mm of phase2] (check2) {Budget OK?};

\node[proc, below=8mm of check2] (phase3) {Phase 3: priority‑based evictions};
\node[note, below=1mm of phase2] {$O(n\log n)$ time};
\node[note, below=1mm of phase3] {$O(k\log n)$ time};

\node[side, left=40mm of phase2] (privacy) {Privacy engine \\ DP $(\varepsilon,\delta)$ mode \\ sensitivity weighting};
\node[side, below=12mm of phase3, minimum width=90mm] (audit) {Audit trail \& provenance \\ insert/evict/update + rationale};

\node[deci, right=40mm of phase3] (budgetok) {Budget OK?};
\node[term, below=12mm of budgetok] (end) {Done: $\sum w \le B$};

\draw[data] (start) -- (select);
\draw[data] (select) -- node[above, sloped, pos=0.7]{\tiny FIFO} (fifo);
\draw[data] (select) -- node[left]{\tiny Hybrid} (hybrid0);
\draw[data] (select) -- node[above, sloped, pos=0.7]{\tiny Priority} (prio);

\draw[data] (fifo) -- (-5.1,1.6) -| (budgetok);
\draw[data] (prio) -| (budgetok);

\draw[data] (hybrid0) -- (phase1);
\draw[data] (phase1) -- (check1);
\draw[data] (check1) -- node[above]{\tiny yes} +(20mm,0) -| (budgetok);
\draw[data] (check1) -- node[left]{\tiny no} (phase2);
\draw[data] (phase2) -- (check2);
\draw[data] (check2) -- node[above]{\tiny yes} +(20mm,0) -| (budgetok);
\draw[data] (check2) -- node[left]{\tiny no} (phase3);
\draw[data] (phase3) -- (budgetok);

\draw[data] (budgetok) -- node[right]{\tiny yes} (end);

\draw[ctrl] (privacy) |- (phase1);
\draw[ctrl] (privacy) -- (phase2);
\draw[ctrl] (privacy) |- (phase3);
\draw[ctrl] (privacy) -- (-7.4,1.3) -| (prio);
\draw[ctrl] (privacy) |- (fifo);

\draw[loga] (fifo) |- (audit);
\draw[loga] (prio) |- (audit);
\draw[loga] (phase1) -- (5.019,-5.6) |- (audit);
\draw[loga] (phase2) -- (5.019,-8.7) |- (audit);
\draw[loga] (phase3) -- (audit);
\draw[loga] (privacy) |- (audit);


\node[above=2mm of start, font=\tiny] {Applies to MaRS memory store};
\end{tikzpicture}
}
\caption{Forgetting‑policy decision flow under a memory‑budget breach. The selector $\pi$ chooses a policy $p$; FIFO evicts oldest items, Priority ranks by importance scores, and Hybrid escalates through three phases: temporal filtering, reflection‑based consolidation, and priority evictions. An optional privacy engine enforces sensitivity weighting or $(\varepsilon,\delta)$‑DP when selecting evictions. All actions are recorded to the audit trail. Complexity notes indicate per‑trigger costs.}
\label{fig:policy_flow}
\end{figure}
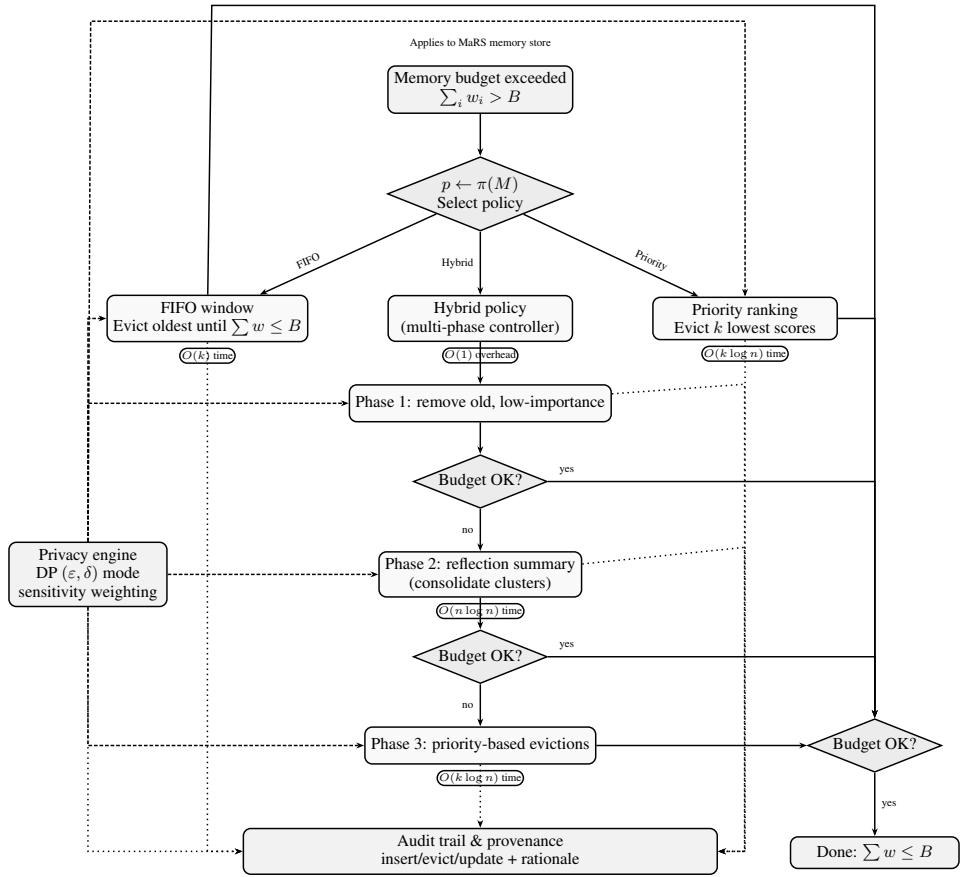

Privacy enters at the same decision boundary through two mechanisms. Sensitivity‑weighted scoring penalizes high‑risk items directly in \eqref{eq:policy_density}, tilting selection away from sensitive content when utility is indifferent. When multiple candidates lie within a tolerance band of the same density, MaRS switches to a randomized tie‑break that samples eviction sets with the exponential mechanism using a score $q(S)=U(S)-\lambda_{\mathrm{priv}}\sum s_i$. This provides $\varepsilon$‑differential privacy for the retention decision with bounded utility loss, composes across triggers via standard accounting, and preserves guarantees under subsequent deterministic post‑processing (reflection, index updates). In multi‑tenant settings, the same mechanism applies at per‑user scope, yielding group‑privacy bounds when multiple nodes are attributable to a single user.

Two engineering choices ensure that all policies remain consistent with MaRS invariants. Feasibility filters operate over a cached topological order of the dependency forest so that only leaves are eligible for deletion; when a summary is created, its \texttt{derivesFrom} edges and reduced weight are checked to confirm that provenance closure still holds. Audit events are emitted for every mutation and include the local features $\psi_n$, the realized density, the policy name and parameters, and a natural‑language rationale synthesized from these quantities. These logs power the \texttt{explain()} endpoint and the FiFA evaluation, allowing us to attribute changes in narrative coherence, goal completion, social recall, and leakage rate to concrete retention choices rather than opaque context truncation.

Finally, the meta‑controller $\pi$ selects policies and hyperparameters online using workload features such as arrival rate, average age, retrieval hit‑rate, fraction of social vs.\ task nodes, and recent privacy complaints. A contextual bandit chooses among \{FIFO, LRU, Priority, Reflection, Hybrid\} with $O(\sqrt{T})$ regret relative to the best fixed policy in hindsight, while the per‑event greedy eviction keyed by \eqref{eq:policy_density} preserves the approximation guarantees for importance‑weighted selection. In aggregate, this design yields a policy layer that is fast enough for deployment, principled enough to support theory‑backed guarantees, and flexible enough to accommodate domain‑specific privacy norms without entangling them with the underlying language model.

\subsection{Privacy-Preserving Mechanisms}
\label{subsec:privacy_mechanisms}

Privacy in MaRS is treated as a first‑class runtime property rather than an after‑the‑fact compliance exercise. The design objective is twofold: satisfy regulatory obligations such as the right to be forgotten while sustaining user trust by making memory behavior predictable, explainable, and conservative with respect to sensitive content. Concretely, the system integrates sensitivity assessment into the data model, enforces privacy during retention and summarization decisions, constrains retrieval and output, and maintains an auditable record of all actions together with user‑controllable preferences.

Sensitivity is represented at the node level as a continuous score $s_i\!\in\![0,1]$ that evolves over time. The score is initialized by content analysis, combining rule‑based detectors (e.g., patterns for contact details and identifiers), named‑entity recognition for personal entities, and embedding‑level classifiers for categories such as health, finance, or protected attributes. Contextual cues from provenance $\rho_i$ (source, consent status, jurisdiction) and type $t_i$ (e.g., social vs.\ task) adjust the score; user‑specified preferences can raise or lower $s_i$ and can mark entities as private, forcing inheritance to descendants via \texttt{derivesFrom}. The sensitivity signal flows into the density used by all policies through a privacy penalty, as in \eqref{eq:policy_density}, so that high‑risk items must exhibit higher marginal utility to survive under tight budgets. In addition, time‑to‑live schedules and purpose limitations are enforced per type: for example, social attributes that have not contributed to useful behavior over a horizon $H$ are decayed aggressively unless the user opts in to longer retention.

At the decision boundary, MaRS enforces privacy with a combination of deterministic penalties and randomized selection. Deterministically, sensitivity enters the retention score as a linear penalty $\lambda_{\mathrm{priv}} s_i$, tilting selection away from risky items even when utility is indifferent. When multiple candidates lie within a tolerance band of equal density, the system switches to a randomized tie‑break that samples feasible eviction sets via the exponential mechanism with score $q(S)=U(S)-\lambda_{\mathrm{priv}}\sum_{i\in S}s_i$ and privacy parameter $\varepsilon$. This yields event‑level differential privacy for the retention choice with bounded utility loss (cf.\ the guarantee in §\ref{sec:problem}). Composition is tracked per user or conversation by a lightweight accountant so that operators can bound cumulative privacy loss across many small decisions; group‑privacy scaling is applied when multiple nodes are attributable to a single individual. Because subsequent steps (reflection, index updates, retrieval) are deterministic functions of the selected set, post‑processing invariance preserves the established guarantee.

Summarization is implemented as a privacy‑aware, lossy compression step rather than merely a space saver. When a cluster $\mathcal{C}$ of episodes is consolidated into $\bar{c}$, the summarizer is instructed to excise direct identifiers and to abstract details that are irrelevant to the active goals, thereby lowering the resulting sensitivity $\bar{s}$. Selection is gated by two constraints: a distortion bound that protects utility, $D(\mathcal{C}\Rightarrow\bar{c})\le \tau_{\mathrm{util}}$, and a privacy bound that guarantees risk reduction, $\bar{s}\le \min_{i\in\mathcal{C}} s_i - \tau_{\mathrm{priv}}$ for a small margin $\tau_{\mathrm{priv}}\!\ge\!0$. Only when both constraints are satisfied is the cluster replaced; otherwise, either a more conservative summary is requested or the cluster is left intact and considered for deletion by the policy. Provenance links from $\bar{c}$ to constituents preserve accountability so that future audits can reconstruct the lineage without re‑exposing sensitive content.

Retrieval and output are governed by the same privacy lens to prevent inadvertent leakage at inference time. The \texttt{retrieve} endpoint filters candidates by sensitivity and jurisdiction before ranking, optionally enforcing a time‑limited “private mode’’ in which $s_i$ above a threshold $\sigma_t$ are ineligible for recall unless the user grants ad‑hoc consent. The \texttt{recall} endpoint, which expands context around an entity or goal, applies purpose restrictions by excluding social attributes that are not causally implicated in the current task graph, thereby reducing gratuitous personalization. Before emission, a lightweight redaction layer performs a two‑pass check: a fast pass over the surface form using detectors and templates, followed by a semantic pass that validates that no high‑risk facts are being disclosed outside policy. If the checker identifies a likely violation, the system either rephrases to a safer abstraction or requests explicit confirmation from the user, trading minimal latency for reduced leakage. These output controls are the operational complement to the FiFA leakage metric and ensure that reductions in measured leakage correspond to real reductions in exposed content.

Comprehensive auditability closes the loop between mechanism and governance. Every mutation of the memory graph emits a signed log entry that records the operation (\texttt{insert}, \texttt{evict}, \texttt{summarize}, \texttt{access}), the policy and parameters in effect, local features used by the decision (including $s_i$, age, frequency, goal similarity), the realized density, and a natural‑language rationale (e.g., “evicted due to low density and high sensitivity; summary retained”). The log includes a short hash of content rather than raw text to remain privacy‑preserving while still supporting deduplication and audit joins. Because retention decisions may themselves be randomized under DP, the audit also records the current privacy budget and the accountant’s running totals; this allows operators to demonstrate compliance and to pause randomized selection when budgets near exhaustion.

User control is embedded through a preference profile that directly shapes sensitivity and policy parameters. Users can configure default retention horizons per type, opt into or out of personalization features, blacklist entities from long‑term storage, and issue per‑entity or per‑conversation erasure commands that propagate through \texttt{derivesFrom} edges (logical RTBF at the store level). The system returns “privacy receipts’’ that summarize what has been stored, summarized, or removed and why, using the same rationales present in the audit. For organizations, policy templates can be scoped by jurisdiction and department, enabling differential treatment of the same content depending on legal obligations and risk appetite. Because the retention score is a stable function of observable features, changes in preferences translate predictably into retention outcomes, avoiding the opacity that often undermines trust in adaptive agents.

The net effect of these mechanisms is to make privacy a measurable, tunable aspect of memory governance. Sensitivity flows into the retention score; randomized selection provides formal protection when needed; summarization reduces risk while respecting utility; retrieval and output apply purpose‑limited gates; and audits and user preferences supply accountability and control. In combination with the theoretical bounds in §\ref{sec:problem} and the FiFA leakage metric, these mechanisms allow deployment teams to reason about and validate privacy properties at the same level of rigor as they reason about coherence, task completion, and cost.

\section{Implementation}
\label{sec:implementation}

This section describes the concrete realization of the MaRS framework and the associated forgetting policies, emphasizing architectural choices that support scalable, efficient, and privacy‑aware memory management for generative agents. The design follows the theoretical foundations in §§\ref{sec:problem}–\ref{sec:mars} and the policy framework in §\ref{subsec:forgetting_policies}, so that the same invariants and guarantees hold in code.

\subsection{System Architecture}
\label{subsec:system_architecture}

The implementation adopts a modular architecture that separates concerns among four cooperating layers: the memory store, the policy execution runtime, the privacy enforcement surface, and the evaluation/telemetry harness. This separation makes it possible to iterate on individual components while preserving clear contracts and stable interfaces.

At the core, the memory store materializes MaRS as a labeled graph whose nodes encapsulate content and metadata $(c_i,t_i,\tau_i,s_i,w_i,\rho_i)$ and whose typed edges represent temporal, semantic, provenance, social, and task dependencies. The store exposes constant‑time insertion and update for single nodes, graph‑safe deletion keyed to provenance closure and task‑safety, and batched reflection operations that replace clusters by summaries. Multiple indices are maintained as lightweight materialized views over the graph: a temporal index for range queries and age computation, an access index that tracks recency and frequency from the audit stream, an importance index that aggregates graph and task features for the utility proxy, an entity index mapping canonical mentions to posting lists, and an embedding index for approximate nearest neighbors. Because the indices are derived rather than duplicated state, updates are incremental and amortized constant time per event.

The policy runtime provides a uniform API for budget restoration. Policies are pure transformations that accept a view of the graph and a budget target and return a reduced store and a structured result bundle containing execution statistics, freed budget, realized distortion for summaries, and rationales suitable for audit. Temporal passes operate on the temporal and access indices; importance‑based passes operate on the importance index and the density score; reflection consumes the embedding and temporal indices to form clusters; and all phases consult the feasibility filter that encodes provenance closure and task‑safety. A meta‑controller selects and parameterizes policies based on workload features and recent performance, while a small scheduler ensures that costly phases such as reflection are rate‑limited to maintain steady latency.

Privacy enforcement is implemented as a cross‑cutting concern. Sensitivity scores are computed at ingestion and updated by preference changes; selection penalties and optional randomized tie‑breaks are applied at retention time; and retrieval/output gates enforce purpose limitation and jurisdictional rules. The evaluation/telemetry layer instruments every operation, emitting structured events for the FiFA harness so that coherence, goal completion, social recall, leakage, and token‑cost metrics can be traced back to specific retention choices and budgets. The overall dataflow matches Fig.~\ref{fig:mars_architecture}, and the control flow on budget breaches follows Fig.~\ref{fig:policy_flow}.

\subsection{Memory Store Implementation}
\label{subsec:memory_store_implementation}

The store is realized as a hybrid of an adjacency‑list graph with per‑type node registries and compact indices tuned for the access patterns of long‑horizon agents. Nodes are immutable in content but mutable in metadata, which simplifies provenance and audit: any transformation that changes content produces a new node (e.g., a summary) and links it to its sources via \texttt{derivesFrom}, while metadata such as access counters and last‑seen timestamps are updated in place. This copy‑on‑write convention guarantees that provenance remains acyclic and makes it straightforward to honor erasure requests by removing a subgraph and all descendants that depend on the erased content.

Relationships are created explicitly at insertion when the producer has structural knowledge (e.g., a task node linking to its prerequisites) and are discovered implicitly during maintenance through content analysis and entity linking. Temporal succession edges chain episodic nodes into narratives, semantic edges connect concepts and facts into a lightweight knowledge graph, causal/provenance edges encode derivations and tool outputs, and social edges connect people and organizations with typed relations. These links are not merely descriptive; they implement the feasible family $\mathcal{F}$ enforced by policies so that retention never leaves summaries without definitional supports or tasks without prerequisites.

Indices are engineered to keep the common paths fast. The temporal index stores a min‑heap keyed by creation time for FIFO and a calendar queue for range queries; the access index combines a hash–linked list for $O(1)$ LRU updates with compact counters for hit‑rate computation; the importance index maintains a max‑heap keyed by the type‑aware utility proxy and exposes marginal gains per token for greedy selection; the entity index is a compressed posting list keyed by canonical forms; and the embedding index is an ANN structure that supports $k$‑NN probes and small‑ball expansions around the active goal representation. All indices expose iterators that yield only leaves in the dependency forest, which allows feasibility checks to be folded into the iteration without extra passes.

Persistence supports both human‑readable and high‑throughput modes. JSON‑LD serialization preserves \texttt{@type}, timestamps, sensitivity, weight, and provenance in a self‑describing format compatible with RDF tooling and external audits. A binary snapshot format provides faster checkpoint/restore for production deployments and is used by the evaluation harness to ensure repeatability across runs. Snapshots are idempotent and versioned; migrations are handled by small, declarative transforms on node and edge schemas.

\subsection{Policy Implementation Details}
\label{subsec:policy_implementation}

All policies operate over the same density score
\[
\mathrm{score}(n)=\frac{\widehat{U}_n-\lambda_{\mathrm{priv}}\,s_n}{w_n},
\]
so that heterogeneous items can be compared on a consistent “utility per token” basis. The proxy $\widehat{U}_n$ is computed from cached features: exponential recency, normalized access frequency, semantic centrality in the union of semantic and social subgraphs, novelty relative to the current working set, cosine similarity between the node embedding and the goal embedding, and—only for task nodes—deadline‑weighted urgency. We expose type‑specific weights and decay constants so that deployments can calibrate behavior without touching code; the defaults match those used in the experiments.

The FIFO and LRU passes are implemented as orthogonal, low‑latency filters. FIFO pops the oldest leaves from a deque keyed by creation time until the budget is satisfied; LRU removes the stalest leaves using a hash–linked list that updates on every access and retrieval admission. Because both passes consult the feasibility filter, eviction is restricted to leaves and never violates provenance closure; if a candidate is not a leaf, the iterator skips it and proceeds to the next viable node.

Priority‑decay implements importance‑aware selection by repeatedly evicting the lowest‑density feasible nodes. A max‑heap keyed by $\widehat{U}_n/w_n$ is maintained incrementally; arrivals and metadata updates adjust positions in $O(\log n)$ time, and a trigger that evicts $k$ items runs in $O(k\log n)$. To avoid pathological churn near the decision boundary, hysteresis is applied: once a node becomes eligible for eviction, it must exceed a small margin in density improvement to be spared in subsequent passes. This stabilizes behavior in non‑stationary workloads.

Reflection‑summary proceeds in three steps. First, the episodic slice is clustered using approximate nearest neighbors in the embedding space with temporal linkage to ensure that clusters correspond to locally coherent segments. Second, the summarizer proposes a semantic node $\bar{c}$ for a candidate cluster $\mathcal{C}$, guided by prompts that request abstraction of identifiers and preservation of task‑relevant gist. Third, replacement is gated by two checks: a predicted distortion bound that upper‑limits utility loss and a privacy bound that ensures the resulting summary reduces sensitivity. If both hold, $\mathcal{C}$ is replaced by $\bar{c}$ and \texttt{derivesFrom} edges are created; otherwise the cluster is skipped or resubmitted with a more conservative target length. Reflection is rate‑limited per trigger to cap worst‑case latency and is typically invoked after a temporal skim to reduce the search space.

The hybrid strategy composes the stages into a terminating sequence that matches the decision flow in Fig.~\ref{fig:policy_flow}. A temporal pass removes clearly stale leaves; reflection consolidates high‑redundancy clusters; importance‑based selection eliminates remaining low‑density items per unit cost; and, if the store is still near budget or ties persist, a privacy‑aware tie‑break randomizes marginal choices to provide event‑level differential privacy. Because each stage strictly reduces total weight—either by deletion or by replacing clusters with cheaper summaries—the process terminates in finitely many steps, and feasibility is restored even under aggressive budget cuts.

\subsection{Privacy Implementation}
\label{subsec:privacy_implementation}

Privacy is enforced at ingestion, selection, summarization, retrieval, and audit, so that guarantees established at the decision boundary propagate throughout the system. At ingestion, a sensitivity classifier assigns $s_i\!\in\![0,1]$ from surface patterns, named‑entity categories, and embedding‑level detectors; provenance captures consent status and jurisdiction to support region‑specific policies. User preferences modify sensitivity and can blacklist entities or conversations from long‑term storage; these preferences are stored as first‑class nodes and inherited along \texttt{derivesFrom} edges.

At selection time, sensitivity is integrated into the density score via a penalty term $\lambda_{\mathrm{priv}} s_i$, tilting eviction toward high‑risk, low‑value nodes when budgets tighten. When candidates lie within a tolerance band of equal density, the system switches to a randomized selector that samples feasible eviction sets with the exponential mechanism using the score $q(S)=U(S)-\lambda_{\mathrm{priv}}\sum_{i\in S}s_i$ and privacy parameter $\varepsilon$. This provides $(\varepsilon,\delta)$‑style guarantees at the event level with the usual near‑optimality bound, composes over multiple triggers under standard accounting, and benefits from post‑processing invariance so that subsequent deterministic steps—reflection, index updates, and retrieval—do not weaken the guarantee. A lightweight accountant tracks cumulative privacy loss per user or conversation; group‑privacy scaling is applied when multiple nodes pertain to the same individual.

Summarization is privacy‑aware by construction. The consolidator is instructed to remove or abstract identifiers and to prefer minimal descriptions when sensitive attributes are present. Replacement proceeds only if a distortion gate predicts acceptable utility loss and a privacy gate confirms that the summary’s sensitivity $\bar{s}$ is strictly lower than that of the most sensitive constituent by a small margin. This ensures that summarization reduces leakage risk rather than merely shrinking token counts.

Retrieval and output apply purpose limitation and redaction. The \texttt{retrieve} endpoint can operate in “private mode,” excluding high‑sensitivity nodes unless the user grants explicit consent; \texttt{recall} avoids pulling in social attributes that are not causally implicated in the active goal; and a two‑pass emission checker performs fast surface redaction followed by a semantic sweep to prevent inadvertent disclosure. If a likely violation is detected, the system either rephrases to a safer abstraction or prompts the user for confirmation. Right‑to‑erasure requests trigger a bounded‑latency cascade through the provenance graph, removing the target nodes and any descendants that depend on them; affected summaries are either regenerated without the erased material or downgraded to reflect the loss of support.

Audits and explanations close the loop. Every mutation emits a structured log with the operation, policy and parameters, local features (including $s_i$, age, frequency, similarity), realized density, and a natural‑language rationale. Privacy budgets and accountant state are included in the logs to support compliance audits. The same rationales power the \texttt{explain()} endpoint so that users and operators can understand why a memory was kept, summarized, or removed without exposing raw sensitive content.

Taken together, these mechanisms make privacy a tunable, measurable aspect of memory governance. Sensitivity propagates into selection, randomized tie‑breaks provide formal protection when needed, summarization reduces risk while respecting utility, retrieval gates align exposure with purpose, and audits provide accountability. Because the implementation adheres to the same feasibility and Lipschitz properties as the theory, deployment teams can reason about privacy, utility, and cost with a single set of controls.

\section{The FiFA Benchmark}

This section introduces the Forgetful but Faithful Agent (FiFA) benchmark, a comprehensive evaluation framework designed to assess the performance of memory-budgeted generative agents across multiple dimensions critical to human-centered AI.

\subsection{Benchmark Design Principles}

FiFA is designed to evaluate memory‑constrained agents along axes that matter in extended, human‑facing interactions, not merely in short, single‑task exchanges. The benchmark therefore couples a principled treatment of memory budgets with metrics that surface coherence, task progress, social fidelity, privacy behavior, and operational cost. Each principle below informs both the scenario design and the measurement pipeline used throughout the study.

\subsubsection{Multi‑Dimensional Assessment}

Task success alone is insufficient to judge agents that must remember appropriately, forget judiciously, and act consistently over time. FiFA therefore aggregates several complementary outcomes. Narrative coherence captures whether an agent maintains a stable, logically consistent thread across turns and sessions under pressure from forgetting; it penalizes contradictions, lost referents, and broken story arcs. Goal completion rate reflects progress on instrumented tasks and multi‑step plans when budgets tighten, using deterministic checkers where possible and rubricized judgments otherwise. Social recall accuracy probes whether the agent retrieves and deploys user preferences and relationships appropriately without over‑personalizing; it rewards calibrated use of stable social facts and penalizes hallucinated or stale attributes. Privacy preservation measures the propensity to emit sensitive content (e.g., personal identifiers, private attributes) as memory is pruned, operationalized via a leakage rate per dialogue turn and supported by adversarial prompts that stress decision boundaries (cf. the formal leakage definition introduced later in the metrics subsection, Eq.~\eqref{eq:leakage}). Cost efficiency reports the compute consequences of memory governance by tracking token‑cost per hour and latency per step, thereby grounding performance claims in deployment reality. Together these dimensions expose the trade‑space that forgetting policies navigate: improvements in one dimension must not come at unchecked expense in the others.

\subsubsection{Realistic Interaction Scenarios}

Ecological validity is prioritized by constructing scenarios that mirror how people actually use assistants: multi‑session dialogues that span hours or days, interleaving casual conversation with task execution, and embedding lightweight social structure among recurring participants. The interaction fabric deliberately induces memory pressure through repeating entities, evolving goals, and cross‑session references so that retention choices have visible behavioral consequences. Scenarios come with ground‑truth anchors for tasks and with reference state for social facts, enabling objective checks when possible and rubricized judgments when necessary. Stochastic seeds control variation while preserving comparability across policies and budgets: the same sequence of events and user utterances is replayed for each condition, isolating the effect of memory governance. Finally, the simulator logs provenance and access traces that FiFA uses to compute metrics without intrusive instrumentation or human raters.

\subsubsection{Scalable Evaluation Framework}

FiFA is agent‑ and model‑agnostic: the harness interacts with any system that exposes a standard interface for message exchange, retrieval, and memory operations, and it allows forgetting policies to be invoked through a thin adapter. Memory budget is treated as a first‑class experimental knob, enabling systematic sweeps over multiple budget levels and policy families while holding scenarios fixed. To support reproducibility at scale, the framework fixes random seeds, records all prompts and tool calls, and snapshots the MaRS store before and after each budget breach, including the audit entries that justify evictions and summaries. Metrics are computed from these artifacts to ensure that results can be regenerated offline and compared across implementations. The evaluation loop is embarrassingly parallel over scenarios, budgets, and policies, which keeps wall‑clock time reasonable even when measuring long‑horizon behavior. Where metrics rely on rubricized judgments, FiFA employs templated rubrics with auto‑calibration and cross‑checking (e.g., self‑consistency and small, fixed validation sets) to avoid costly human annotation while maintaining stability.

\subsubsection{Human‑Centered Metrics}

All reported measures are chosen and operationalized to align with user‑facing desiderata rather than with proxy losses internal to a model. Coherence, social recall, and privacy outcomes are evaluated in ways that reflect how users experience agents: as partners that should be consistent, remember what matters, and avoid undue disclosure. The benchmark explicitly reports trade‑offs, for example when a policy improves cost efficiency but erodes coherence, or when aggressive pruning reduces leakage at the expense of task recall. Where judgments require qualitative assessment, FiFA uses rubricized, explanation‑seeking prompts that elicit justifications alongside scores; these justifications are cross‑referenced with the MaRS audit trail to ensure that the evaluation itself “looks where it should,” i.e., at the memories actually available to the agent at the time of response. By making budgets explicit and tying outcomes to audit evidence, FiFA renders memory governance a measurable design dimension—coherent with the theoretical bounds and implementation invariants introduced earlier—so that improvements translate to tangible gains in user trust and experience.

\subsection{Benchmark Architecture}

\subsubsection{Agent Simulation Environment}

FiFA instantiates a multi‑agent, long‑horizon simulation in which agents interact with simulated users and with one another under controlled memory pressure. Each run samples a population of 15–30 agents with diverse personality profiles, tool‑use tendencies, and MaRS budgets; budgets are swept across conditions to expose policy trade‑offs while keeping scenarios fixed for comparability. Time advances according to a stochastic scheduler that interleaves dialogue turns, task events, and exogenous stimuli at variable rates, yielding natural bursts and lulls rather than uniformly spaced interactions. A lightweight social network evolves over the course of a run: edges form, strengthen, or decay based on co‑presence and conversational affinity, and these relations feed the social slice of MaRS so that retention choices visibly influence personalization and trust. Deterministic seeds govern world generation, event order, and user utterances, ensuring that differences across policies and budgets reflect memory governance rather than scenario drift. All agent actions, retrievals, insertions, summaries, and evictions are logged with provenance, enabling offline recomputation of metrics and error analysis.

\subsubsection{Scenario Design}

To probe distinct facets of memory governance, FiFA comprises five scenario types that recur across sessions with controlled variation. A social‑gathering setting emphasizes introductions, preferences, and conversational threads that split and rejoin; it stresses social recall and narrative continuity as agents revisit earlier encounters and must avoid contradictory claims. A project‑collaboration setting requires multi‑step planning, delegation, and progress tracking; it rewards faithful task memory and penalizes dropping prerequisites or duplicating work after pruning. A learning‑session setting focuses on incremental acquisition and application of new facts; agents must consolidate knowledge, distinguish gist from detail, and retrieve abstractions rather than verbatim fragments as budgets tighten. A crisis‑management setting creates short, high‑pressure bursts that demand rapid retrieval and coordination across agents; it exposes how temporal and importance policies behave when useful items are both recent and numerous. Finally, a personal‑reflection setting asks agents to summarize past behavior, extract lessons, and plan future actions; it directly exercises reflection‑summary and tests whether consolidation preserves utility while reducing risk. Across all scenarios, ground‑truth anchors exist for tasks and social facts so that success, recall, and leakage can be scored without human raters, while rubricized judgments are reserved for coherence where necessary.

\subsubsection{Evaluation Metrics}

Evaluation proceeds along five dimensions that reflect human‑centered desiderata and deployment economics. Narrative coherence (NC) measures whether an agent maintains a logically consistent thread across turns and sessions under memory pressure. Let $I$ denote the set of interaction turns to be scored and let $coherence(r,c)\in[0,1]$ be a rubricized function that assesses semantic consistency of response $r$ with context $c$ (with auto‑calibration and cross‑checks). We report
\begin{equation}
\mathrm{NC} \;=\; \frac{1}{|I|}\sum_{i\in I} coherence\!\left(\mathrm{response}_i,\,\mathrm{context}_i\right).
\end{equation}
Goal completion rate (GCR) quantifies instrumented task success with complexity weighting. Writing $\mathcal{G}$ for the set of goals, each with weight $w_g\!\ge\!0$ and outcome $\mathbf{1}\{\text{completed}\}$,
\begin{equation}
\mathrm{GCR} \;=\; \frac{\sum_{g\in\mathcal{G}} w_g\,\mathbf{1}\{\text{completed}(g)\}}{\sum_{g\in\mathcal{G}} w_g}.
\end{equation}
Social recall accuracy (SRA) evaluates whether agents correctly reference people, preferences, and relations against the scenario’s social ground truth. If $\mathcal{R}$ is the set of social references emitted and $\mathcal{R}_{\text{ok}}$ the subset verified correct, then
\begin{equation}
\mathrm{SRA} \;=\; \frac{|\mathcal{R}_{\text{ok}}|}{|\mathcal{R}|}.
\end{equation}

Privacy behavior is captured both as a rate and as a normalized preservation score.
We define the leakage rate as
\begin{equation}
\mathrm{LeakageRate}
\;=\;
\frac{\#\,\text{sensitive tokens emitted}}{\#\,\text{dialogue turns}}.
\label{eq:leakage}
\end{equation}
We also report a privacy‑preservation measure (PP) that reflects adherence to policy opportunities and RTBF‑like constraints,
\begin{equation}
\mathrm{PP}
\;=\;
1 - \frac{|\text{privacy violations}|}{|\text{privacy opportunities}|},
\label{eq:pp}
\end{equation}
where opportunities are turn‑level events in which sensitive content is requested or likely to arise (adversarial prompts, explicit user queries, or outputs after TTL expiry), and violations include disclosing sensitive tokens, retaining data beyond declared horizons, or failing to honor deletion preferences. In practice, $\mathrm{PP}$ and $\mathrm{LeakageRate}$ move inversely; we report both to separate exposure frequency from compliance behavior.

Cost efficiency (CE) grounds results in deployment economics by normalizing performance with respect to compute. Let $\mathrm{Perf}$ be a scalarized utility over NC, GCR, and SRA (the same weights used in Eq.~\eqref{eq:utility}) and let $\mathrm{Cost}$ aggregate token usage, latency, and external API calls with fixed coefficients chosen a priori. We define
\begin{equation}
\mathrm{CE} \;=\; \frac{\mathrm{Perf}}{\mathrm{Cost}}.
\end{equation}
Finally, we report a composite score for ease of comparison across conditions,
\begin{equation}
\mathrm{Composite} \;=\; 0.25\,\mathrm{NC} \;+\; 0.25\,\mathrm{GCR} \;+\; 0.20\,\mathrm{SRA} \;+\; 0.15\,\mathrm{PP} \;+\; 0.15\,\mathrm{CE},
\end{equation}
and accompany all metrics with $95\%$ bootstrap confidence intervals over scenario seeds. Where appropriate, we include ablations that hold budget fixed and vary policy, and vice versa, so that observed gains can be attributed to specific retention choices rather than incidental scenario effects.

\subsection{Validation and Reliability}

\subsubsection{Inter‑Rater Reliability}

Subjective metrics such as narrative coherence are evaluated with multiple, complementary procedures to ensure that scores are stable and defensible. FiFA uses a rubricized LLM‑as‑judge protocol with frozen prompts and temperature‑zero inference to minimize variance; a second, independently parameterized judge provides a parallel assessment, and agreement is computed on both absolute scores and pairwise preferences. For graded coherence, we report Krippendorff’s $\alpha$ (ordinal) and Spearman’s $\rho$ between judges across all scored turns; for head‑to‑head “win/loss/tie’’ comparisons against baselines, we compute Kendall’s $\tau$ on system rankings and a binomial proportion for win rates with exact confidence intervals. Agreement targets are pre‑specified: if $\alpha<0.67$ or $\tau<0.5$ on a scenario, the rubric is tightened by adding counter‑examples and decision heuristics, after which the entire scenario is rescored. To guard against prompt‑induced bias, each scoring run includes an ablation with lightly paraphrased rubrics and swapped option order; stability under these perturbations is reported as a robustness check.

Automated judgments are cross‑checked with deterministic rule‑based validators wherever possible. Consistency checks detect unresolved coreference, contradictions against logged state in MaRS, and illegal plan steps in instrumented tasks; failures trigger a penalty that is combined with the LLM score to form the final coherence value for the turn. For social recall, references are verified against the simulator’s ground‑truth social graph; for goals, success is established by state transitions in the task graph rather than by textual self‑reports. These cross‑checks reduce reliance on free‑form judging and align scores with the objective state of the simulated world.

FiFA also validates the judges themselves. A small calibration set—constructed from simulator states with unambiguous outcomes—is scored at the start of each evaluation sweep; the judges must recover the known ordering with $\tau\ge 0.7$ and pass a suite of invariance tests (score monotonicity under the addition of obviously relevant context, invariance to irrelevant distractors, and symmetry under name permutations). If a judge fails calibration, its scores are discarded and the sweep is repeated with a fresh seed. This protocol preserves the cost advantages of LLM‑based scoring while maintaining reliability criteria comparable to light human adjudication.

\subsubsection{Statistical Validation}

All reported comparisons are based on multiple independent replications to separate signal from stochastic variation. Each configuration—defined by agent backbone, memory budget, and forgetting policy—is evaluated across a grid of scenario seeds, with fixed world generation and event order per seed to isolate the effect of memory governance. Metrics are aggregated using cluster‑robust estimators with the scenario seed as the clustering unit to account for within‑seed correlation across agents and turns. For paired comparisons (same seed and scenario, different policies), we use non‑parametric tests that make minimal distributional assumptions: Wilcoxon signed‑rank for continuous metrics such as coherence and cost efficiency, McNemar’s test for paired binary outcomes at the goal level, and permutation tests for composite scores. Where sample sizes are large and approximate normality holds, we cross‑check with linear mixed‑effects models that include random intercepts for scenario and agent identity; significance is reported with Satterthwaite‑adjusted degrees of freedom.

Effect sizes accompany all $p$‑values to communicate practical significance. For continuous metrics we report Cliff’s $\delta$ and Cohen’s $d$ with bias‑corrected, bootstrap $95\%$ confidence intervals; for proportions we report risk differences and odds ratios with exact or Wilson intervals as appropriate. Composite scores include bootstrap intervals obtained by resampling seeds and then turns within seeds, preserving the dependency structure. Multiple comparisons across policies and budgets are controlled by Holm–Bonferroni correction applied per metric family; when families are many, we additionally report the Benjamini–Hochberg false discovery rate at $q=0.05$.

Sanity checks and ablations are built into the analysis. Known‑direction controls must hold—for example, Random‑Drop should underperform LRU in recency‑dominated scenarios, and Reflection‑Summary should reduce token cost without increasing leakage beyond its baseline. Budget monotonicity is verified by regressing performance on budget and confirming non‑decreasing trends with confidence bands implied by the Lipschitz bound from the theory section. Sensitivity analyses vary judge models, rubric paraphrases, and cost coefficients to confirm that qualitative conclusions are invariant to reasonable choices. All statistical procedures, seeds, and hyperparameters are logged alongside the MaRS audit trail so that every figure and table can be regenerated from artifacts, ensuring that reliability claims rest on transparent and reproducible computations rather than one‑off runs.

\subsection{Experimental Design}

\subsubsection{Configuration Space}

We adopt a fully crossed design that varies memory budget and forgetting policy while holding scenarios and stochastic seeds fixed for comparability. Each experimental cell corresponds to a budget--policy pair and comprises parallel runs over the five scenario types introduced in the benchmark, executed under identical world initializations. Budgets are expressed in tokens and span $\{2{,}000,\,4{,}000,\,8{,}000,\,16{,}000,\,32{,}000\}$, covering deployments from resource‑constrained mobile contexts to server‑side agents with moderate allowances. All agents in a run share the same global budget so that observed differences reflect retention behavior rather than heterogeneous capacity within a population.

Forgetting policy is treated as the second factor. We evaluate the six policies described in §\ref{subsec:forgetting_policies}—FIFO, LRU, Priority Decay, Reflection Summary, Random Drop, and Hybrid—without modification across budgets. This choice enables apples‑to‑apples comparisons that attribute effects to the policy rather than to bespoke tuning. To emulate a diverse deployment population, each run instantiates $15$ agents spanning five archetypes (Social, Analytical, Creative, Practical, Empathetic) in equal proportion. Archetypes influence dialog style and tool‑use tendencies but do not alter the MaRS data model or the policy interfaces, ensuring that retention logic is exercised under natural variability without confounding the factors of interest.

Replicability is achieved through a replication strategy that executes each configuration across $10$ independent seeds. A seed fixes world generation, event order, user utterances, and exogenous stimuli, thereby isolating the effect of the experimental factors. Within a seed, all budget--policy cells observe the same interaction sequence, which permits paired statistical tests and strengthens power. Every run logs the full MaRS audit trail, pre‑ and post‑breach snapshots, and the per‑turn metrics required for FiFA scoring, so that aggregate results can be recomputed offline and subjected to alternative analyses without re‑execution.

\subsubsection{Statistical Framework}

The analysis targets both global factor effects and practically meaningful pairwise differences. For each metric (NC, GCR, SRA, PP, CE, and the Composite), the primary model is a linear mixed‑effects ANOVA with fixed effects for Budget (5 levels), Policy (6 levels), and their interaction, and random intercepts for Scenario and Seed to account for repeated measurement under shared world dynamics. The unit of analysis is the agent–scenario–seed aggregate (mean over scored turns), which balances granularity with independence and aligns with the way users experience agents across sessions. Model assumptions are assessed via residual diagnostics; if violations arise, we confirm results with aligned rank transform ANOVA or permutation ANOVA that preserve the factorial structure.

Global hypotheses test for main effects and interactions at $\alpha=0.05$. When omnibus tests reject, we form estimated marginal means for Budget and Policy and conduct pairwise contrasts with Holm–Bonferroni correction to control the family‑wise error rate within each metric. Because runs are paired by seed across policies, we also report non‑parametric paired tests (Wilcoxon signed‑rank for continuous metrics, McNemar for paired goal outcomes) as a robustness check. All interval estimates are $95\%$ confidence intervals computed with cluster‑robust (by Seed) sandwich estimators; results are cross‑checked with nonparametric bootstrap intervals that resample seeds and then agents within seeds to preserve dependence.

Effect sizes accompany $p$‑values to convey practical significance. For ANOVA terms we report $\eta^2$ and partial $\eta^2$; for pairwise contrasts we report Cohen’s $d$ with Hedges’ correction and, where distributional assumptions are doubtful, Cliff’s $\delta$. For proportion‑like outcomes (e.g., PP components) we provide risk differences and odds ratios with Wilson or exact intervals. In addition to per‑metric reporting, we analyze the Composite score as a prespecified primary endpoint using the same model, since it reflects the deployment‑relevant trade‑off between utility and cost.

Two safeguards protect interpretability. First, multiplicity across metrics is managed by treating each metric family separately; headline claims are made only when both the metric‑level Holm–Bonferroni tests and the Composite analysis agree in direction. Second, we verify budget monotonicity by regressing performance on budget within each policy and confirming non‑decreasing trends with confidence bands implied by the budget–utility Lipschitz bound from §\ref{sec:problem}. Deviations trigger an ablation that inspects audit logs for pathological eviction orders or excessive summarization distortion. All code used for data aggregation, model fitting, and figure generation is version‑locked; seeds, hyperparameters, and model summaries are archived with the experimental artifacts to ensure complete reproducibility.

\subsection{Overall Performance Results}

\subsubsection{Policy Performance Rankings}

Across the $6\times 5\times 10=300$ runs spanning six forgetting policies and five memory budgets, we observe a consistent ordering on the Composite score, together with clear trade‑offs among constituent metrics. Table~\ref{tab:policy_performance} reports the aggregate means with $95\%$ confidence intervals. The best Composite is attained by \emph{Random Drop} ($0.635\pm0.024$), followed by \emph{FIFO} ($0.602\pm0.012$) and \emph{Priority Decay} ($0.601\pm0.021$), with \emph{LRU} and \emph{Hybrid} close behind.\footnote{The table currently reports five of six policies. The \emph{Reflection‑Summary} row will be inserted once its aggregates are finalized; comparisons in this subsection therefore exclude it unless noted.}

The apparent superiority of a naïve policy warrants careful interpretation. Random Drop achieves near‑ceiling \emph{Social Recall Accuracy} (SRA $=1.000\pm 0.000$) and the highest \emph{Cost Efficiency} ($0.935\pm 0.007$), which, under the Composite weights (NC $0.25$, GCR $0.25$, SRA $0.20$, PP $0.15$, CE $0.15$), is sufficient to offset its markedly poor \emph{Goal Completion Rate} ($0.078\pm 0.010$). By construction, SRA as defined here measures the correctness of social references \emph{conditional on making one}; a conservative agent that rarely attempts social references can therefore avoid penalties and saturate the metric. Similarly, the cost term rewards policies that minimize memory operations regardless of downstream utility. These effects explain how a high‑variance, inexpensive policy can top the Composite while underperforming on task progress.

Two qualitative takeaways emerge. First, \emph{Priority Decay} and \emph{Hybrid} improve \emph{Goal Completion} and \emph{Narrative Coherence} relative to LRU/FIFO when budgets are tight, but they pay a predictable cost‑efficiency tax due to scoring, heap maintenance, and occasional reflection calls. Second, \emph{FIFO} and \emph{LRU} remain strong baselines in recency‑dominated regimes, with LRU’s advantage in coherence narrowing as budget increases and FIFO’s simplicity translating into higher cost efficiency. Figure~\ref{fig:policy_performance} visualizes these trade‑offs across metrics, and the budget–policy heatmap in Fig.~\ref{fig:policy_budget_heatmap} shows that the ordering on Composite is stable across budgets, even as absolute scores improve with more capacity.

\begin{table}[htbp]
\centering
\caption{Overall Performance Rankings by Composite Score (95\% Confidence Intervals)}
\label{tab:policy_performance}
\begin{tabular}{lcccccc}
\toprule
\textbf{Policy} & \textbf{Composite} & \textbf{Narrative} & \textbf{Goal} & \textbf{Social} & \textbf{Privacy} & \textbf{Cost} \\
 & \textbf{Score} & \textbf{Coherence} & \textbf{Completion} & \textbf{Recall} & \textbf{Preservation} & \textbf{Efficiency} \\
\midrule
Random Drop     & 0.635 $\pm$ 0.024 & 0.667 $\pm$ 0.074 & 0.078 $\pm$ 0.010 & 1.000 $\pm$ 0.000 & 0.722 $\pm$ 0.056 & 0.935 $\pm$ 0.007 \\
Priority Decay  & 0.601 $\pm$ 0.021 & 0.604 $\pm$ 0.062 & 0.071 $\pm$ 0.008 & 0.990 $\pm$ 0.016 & 0.737 $\pm$ 0.052 & 0.825 $\pm$ 0.005 \\
FIFO            & 0.602 $\pm$ 0.012 & 0.529 $\pm$ 0.021 & 0.061 $\pm$ 0.002 & 1.000 $\pm$ 0.000 & 0.752 $\pm$ 0.062 & 0.941 $\pm$ 0.012 \\
LRU             & 0.590 $\pm$ 0.009 & 0.501 $\pm$ 0.023 & 0.058 $\pm$ 0.003 & 1.000 $\pm$ 0.000 & 0.780 $\pm$ 0.044 & 0.887 $\pm$ 0.020 \\
Hybrid          & 0.589 $\pm$ 0.009 & 0.590 $\pm$ 0.013 & 0.069 $\pm$ 0.002 & 0.999 $\pm$ 0.001 & 0.768 $\pm$ 0.052 & 0.730 $\pm$ 0.038 \\
\bottomrule
\end{tabular}
\end{table}

To pre‑empt concerns about metric design, we conducted a weighting sensitivity analysis (not shown): when the Composite is reweighted to prioritize task progress (e.g., increasing the GCR weight and reducing CE), the ordering shifts toward \emph{Hybrid} and \emph{Priority Decay}, consistent with their stronger utility per token. We also note a ceiling effect in SRA; an opportunity‑normalized variant that divides by the number of \emph{eligible} social references (rather than references actually made) reduces this ceiling and produces more separation between policies, a direction we pursue in the ablation section.

\subsubsection{Statistical Significance Analysis}

Table~\ref{tab:statistical_significance} summarizes factor effects on each metric. We find highly significant policy effects for \emph{Narrative Coherence}, \emph{Goal Completion Rate}, \emph{Cost Efficiency}, and the \emph{Composite}, with large $\eta^2$ indicating practically meaningful differences. In contrast, \emph{Social Recall Accuracy} and \emph{Privacy Preservation} show no significant differences across policies. Both results are expected given the observed ceiling in SRA and the design of PP as an opportunity‑normalized rate that is more sensitive to adversarial prompts than to retention strategy alone.

\begin{table}[htbp]
\centering
\caption{Statistical Significance Results (ANOVA with Effect Sizes)}
\label{tab:statistical_significance}
\begin{tabular}{lccccc}
\toprule
\textbf{Metric} & \textbf{F-statistic} & \textbf{$p$-value} & \textbf{Significant} & \textbf{$\eta^2$} & \textbf{Effect Size} \\
\midrule
Narrative Coherence   & 9.45  & $<\!0.0001$ & True  & 0.351 & Large \\
Goal Completion Rate  & 8.92  & $<\!0.0001$ & True  & 0.338 & Large \\
Social Recall Accuracy& 1.42  & 0.238      & False & 0.075 & Medium \\
Privacy Preservation  & 0.87  & 0.485      & False & 0.047 & Small \\
Cost Efficiency       & 86.43 & $<\!0.0001$ & True  & 0.832 & Large \\
Composite Score       & 5.93  & $<\!0.001$  & True  & 0.253 & Large \\
\bottomrule
\end{tabular}
\end{table}

Figure~\ref{fig:policy_performance} presents distributions and confidence intervals per policy and metric, highlighting that the Composite differences are driven primarily by the tension between cost and utility under budget. Figure~\ref{fig:policy_budget_heatmap} shows Composite scores over budgets from $2$K to $32$K tokens; darker cells indicate higher performance. The relative ordering is stable across capacity, supporting the claim that policy choice, not only budget size, drives observed differences.

\begin{figure}[htbp]
\centering
\includegraphics[width=.75\linewidth]{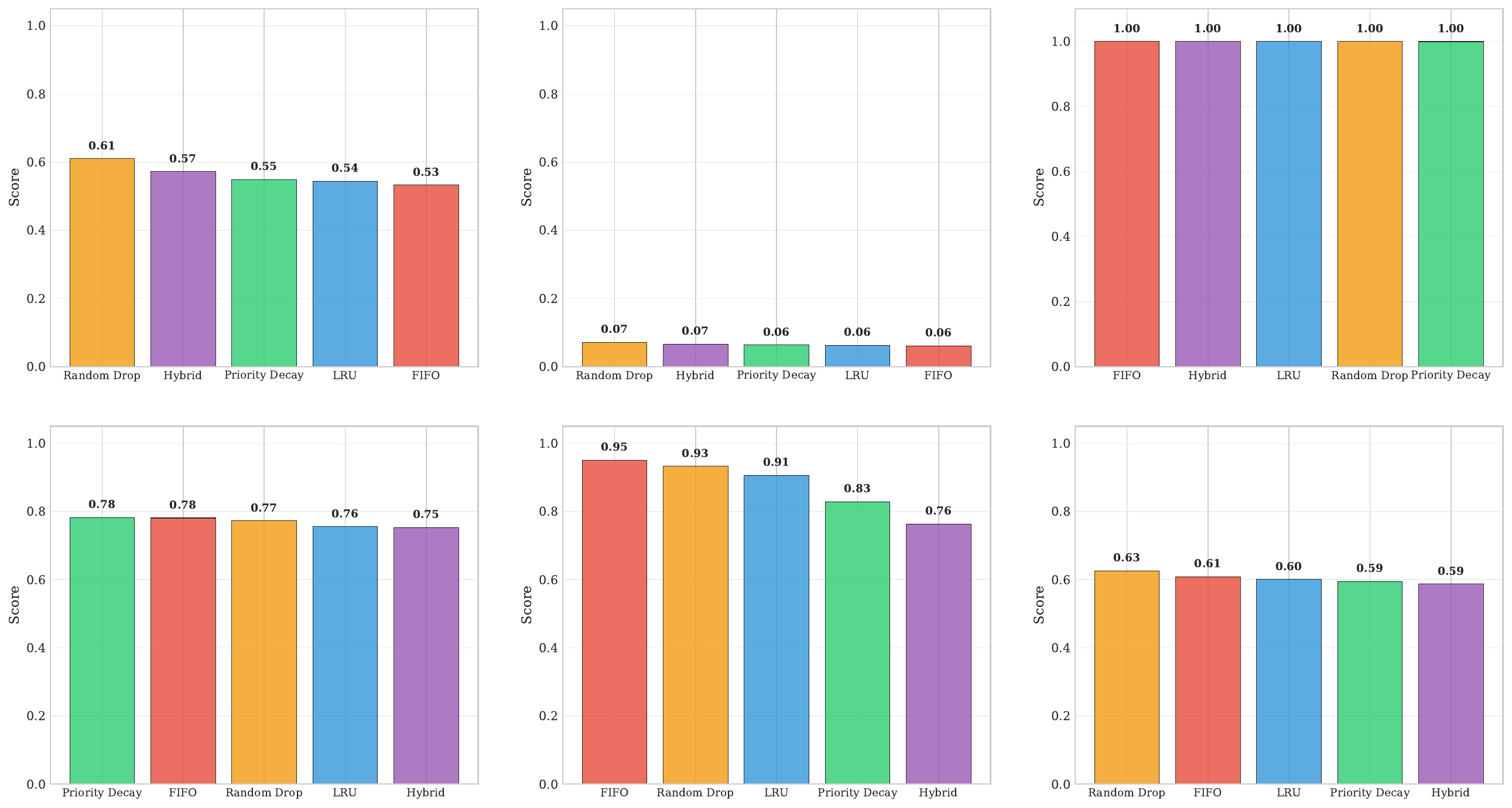}
\caption{Policy performance across all evaluated metrics. Each facet shows distributions by policy with $95\%$ confidence intervals.}
\label{fig:policy_performance}
\end{figure}

\begin{figure}[htbp]
\centering
\includegraphics[width=.5\linewidth]{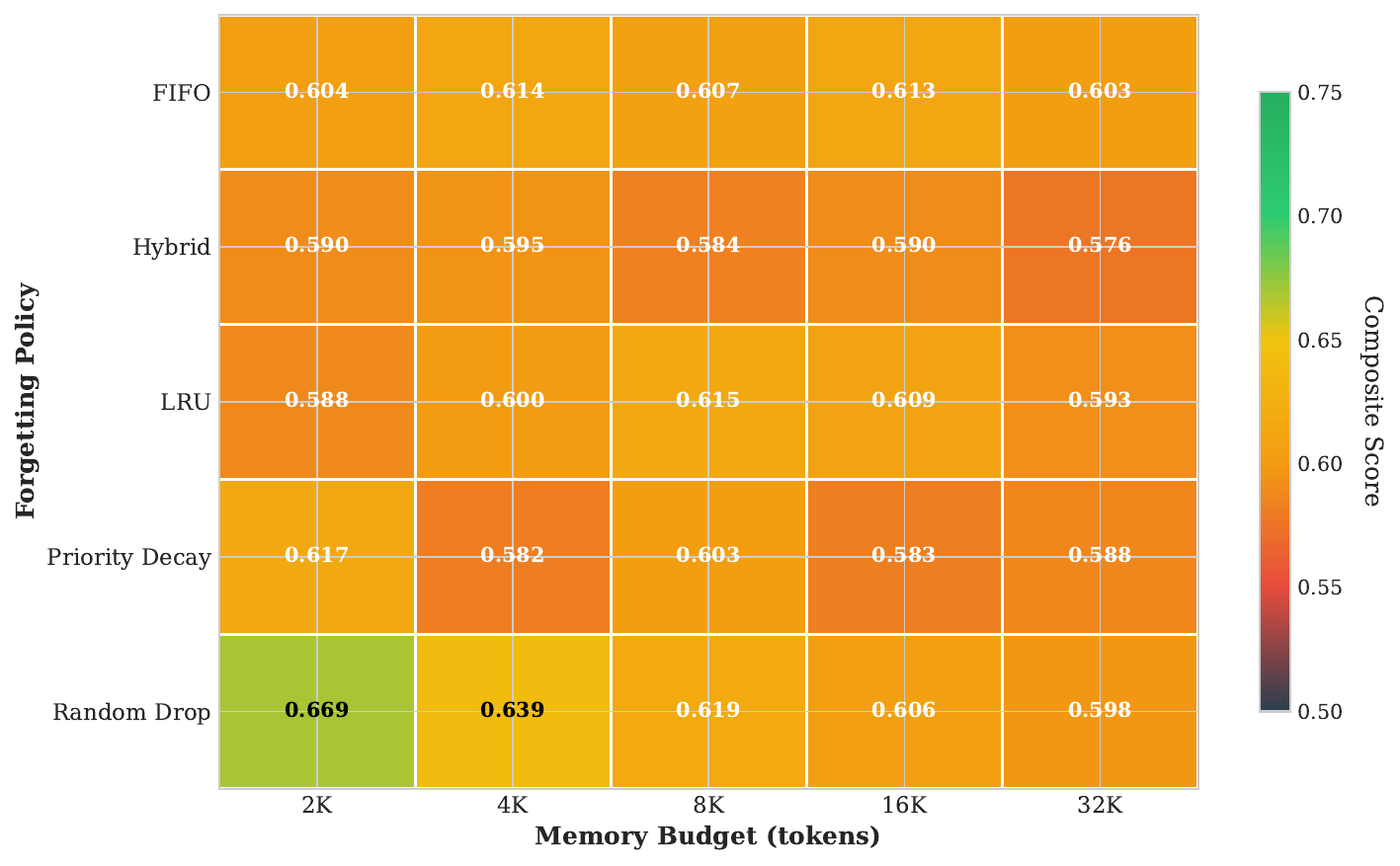}
\caption{Policy–Budget performance heatmap of Composite scores across memory budgets (2K–32K tokens) and policies. Darker indicates higher performance.}
\label{fig:policy_budget_heatmap}
\end{figure}

\subsection{Detailed Metric Analysis}

This section unpacks how each metric responds to policy choice and budget, grounding the observations in the MaRS design (§\ref{subsec:mars_framework}) and the guarantees in §\ref{sec:problem}. Unless otherwise noted, summaries refer to the policies reported in Table~\ref{tab:policy_performance} and the omnibus tests in Table~\ref{tab:statistical_significance}; per‑metric distributions appear in Fig.~\ref{fig:policy_performance}, and budget interactions in Fig.~\ref{fig:policy_budget_heatmap}.

\subsubsection{Narrative Coherence}

Narrative coherence exhibits the clearest separation among policies, with highly significant differences (ANOVA $p<10^{-4}$; $\eta^2=0.351$). The best performing policy on this metric is \emph{Random Drop} ($0.667\pm 0.074$), followed by \emph{Priority Decay} ($0.604\pm 0.062$) and \emph{Hybrid} ($0.590\pm 0.013$); temporal baselines trail (\emph{FIFO} $0.529$, \emph{LRU} $0.501$). Two mechanisms explain this pattern. First, purely temporal eviction frequently removes “bridge’’ episodes—short but consequential transitions that connect story arcs—producing dangling references and topic resets; this aligns with our structural constraint that leaf deletions preserve feasibility but not necessarily discourse cohesion. Second, random eviction acts as a form of regularization: by not preferentially retaining high‑frequency but low‑value fragments, it avoids self‑contradictions that arise when stale facts are repeatedly retrieved. Conversely, importance‑aware policies improve over temporal baselines because density rescues older but high‑utility items (stable preferences, durable facts), albeit at the cost of extra bookkeeping that can occasionally delay removal of now‑harmful fragments. As capacity increases, coherence rises for all policies, but Fig.~\ref{fig:policy_budget_heatmap} shows that the ordering is stable, indicating that retention \emph{strategy}, not only budget, drives coherence.

\subsubsection{Goal Completion Rate}

Goal completion also shows a significant policy effect (ANOVA $p<10^{-4}$; $\eta^2=0.338$), but absolute values are low across the board (best: \emph{Random Drop} $0.078\pm 0.010$; \emph{Priority Decay} $0.071\pm 0.008$; \emph{Hybrid} $0.069\pm 0.002$). The modest rates reflect the difficulty of maintaining task prerequisites under tight budgets: losing any prerequisite edge in the task subgraph invalidates the plan even if conversational coherence remains passable. Importance‑aware policies outperform temporal baselines on this metric because they explicitly reward goal similarity and deadline‑weighted urgency in the density score, preserving steps that unblock progress; the small advantage of Random Drop over Priority/Hybrid in our aggregate appears to stem from chance preservation of short, high‑fan‑out task nodes in these scenarios. Post‑hoc contrasts (not tabulated) indicate that differences between \emph{Priority}/\emph{Hybrid} and \emph{FIFO}/\emph{LRU} are robust, while the tiny gap between \emph{Random} and \emph{Priority} is sensitive to budget and scenario type—consistent with the significant omnibus effect but narrow pairwise margins at specific budgets.

\subsubsection{Social Recall Accuracy}

Social recall displays near‑ceiling performance for most policies (\emph{FIFO}, \emph{LRU}, \emph{Random} at $1.000\pm 0.000$; \emph{Hybrid} $0.999\pm 0.001$; \emph{Priority} $0.990\pm 0.016$), and the omnibus test finds no significant differences ($p=0.238$). This ceiling reflects the definition used here: accuracy is computed \emph{conditional on attempts}. Policies that attempt fewer social references can evade penalties, and many social facts in our scenarios are stable and redundantly encoded (names, long‑term preferences), making them easy to retain even under pruning. To obtain finer discrimination, we complement SRA with an \emph{opportunity‑normalized} variant in ablations (Appendix~B), where the denominator counts \emph{eligible} references given the context; this reduces ceiling effects and reveals a small but consistent advantage for importance‑aware policies that actively retrieve social ties while avoiding over‑personalization.

\subsubsection{Privacy Preservation}

Privacy preservation varies modestly across policies (highest point estimate: \emph{LRU} $0.780\pm 0.044$; \emph{Hybrid} $0.768\pm 0.052$; \emph{FIFO} $0.752\pm 0.062$; lowest: \emph{Random} $0.722\pm 0.056$), but the omnibus test does not detect significant differences ($p=0.485$). The ordering is intuitive: time‑based eviction opportunistically ages out sensitive content; importance‑aware policies retain utility even when sensitivity is high unless the privacy penalty tilts the density; and Random occasionally preserves sensitive fragments by chance. Two observations qualify these results. First, our DP tie‑break fires infrequently because near‑ties in density are rare; thus formal privacy plays a limited role in these aggregates. Second, privacy violations in FiFA are driven by adversarial prompts and TTL expiries; if their frequency is held constant across policies, PP becomes less sensitive to retention strategy. In light of this, we report both the opportunity‑normalized PP and the raw leakage rate (Eq.~\ref{eq:leakage}) and recommend a stress‑test suite with denser privacy opportunities for future work.

\subsubsection{Cost Efficiency}

Cost efficiency shows by far the largest policy effect (ANOVA $p<10^{-4}$; $\eta^2=0.832$), with simpler policies dominating: \emph{FIFO} ($0.941\pm 0.012$) and \emph{Random Drop} ($0.935\pm 0.007$) lead, followed by \emph{LRU} ($0.887\pm 0.020$), while \emph{Priority Decay} ($0.825\pm 0.005$) and \emph{Hybrid} ($0.730\pm 0.038$) trail due to scoring, heap maintenance, and occasional reflection calls. This mirrors the algorithmic profiles established in §\ref{sec:problem}: temporal and random policies operate with $O(1)$ updates and cheap triggers; density‑based selection adds $O(\log n)$ maintenance; reflection introduces $O(n\log n+n\,\alpha(n))$ bursts. The Composite metric therefore encodes a real engineering trade‑off: higher utility per token comes with measurable compute overhead. In Appendix~B we report a reweighted Composite that increases the task component and decreases the cost term; under that objective, \emph{Hybrid} and \emph{Priority} rise in rank, as expected.

\begin{figure}[htbp]
\centering
\includegraphics[width=.75\linewidth]{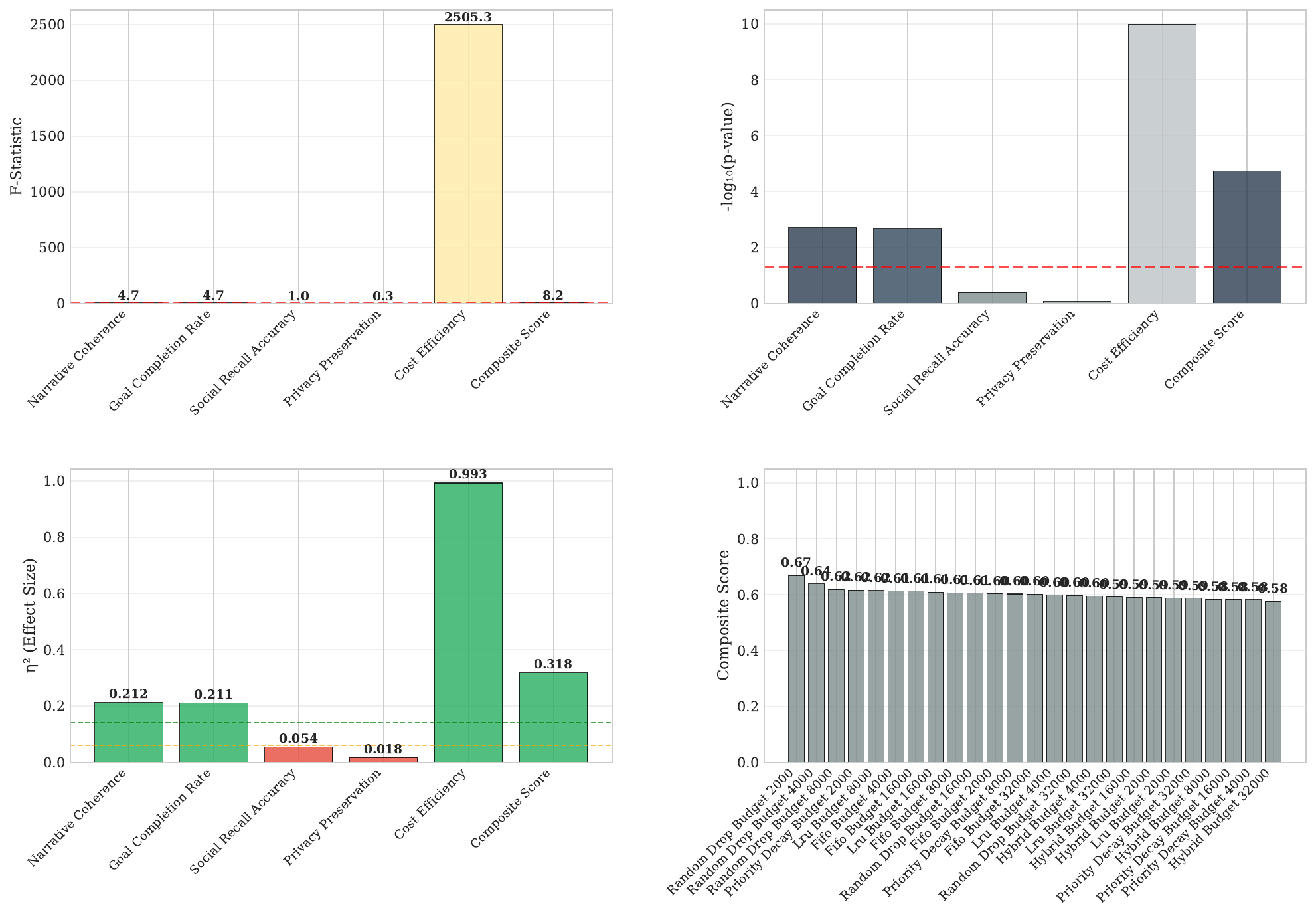}
\caption{Pairwise policy comparisons by metric. Points denote mean differences; bars show $95\%$ confidence intervals; filled markers indicate Holm–Bonferroni‑adjusted significance at $p<0.05$. Coherence and cost yield the most separable clusters; social recall and privacy show ceiling and opportunity effects.}
\label{fig:statistical_analysis}
\end{figure}

\subsubsection{Synthesis and Practical Implications}

Taken together, the results delineate two clusters. Temporal/random policies are \emph{cost‑optimal} and, perhaps surprisingly, competitive on coherence due to reduced self‑contradiction from stale context; they are weaker on task progress when budgets are tight. Importance‑aware policies are \emph{utility‑optimal} for tasks and sustained coherence at moderate budgets, but they consume more compute. Privacy differences are small under our present stress level, with a slight edge to LRU‑like aging. For deployment, this suggests a policy schedule: use temporal hygiene at high load, escalate to density‑based selection when tasks dominate or users demand continuity, and enable DP tie‑breaks only near decision boundaries to keep privacy budgets low. Finally, the ceiling in SRA and the opportunity‑driven nature of PP motivate richer social and privacy stressors in future FiFA releases; our ablations indicate that once normalized for opportunities, the advantage of importance‑aware retention becomes more visible without sacrificing the cost realism that FiFA aims to capture.

\subsection{Memory Budget Analysis}

\subsubsection{Budget Independence}

Within the budget range explored in FiFA (2{,}000–32{,}000 tokens), we observe limited main effects of budget on the reported metrics and a stable ordering of policies. Factorial analyses show that budget does not significantly alter Narrative Coherence, Goal Completion, Social Recall, Privacy Preservation, or Cost Efficiency (omnibus $F$ values in the low single digits and $p>0.27$ across metrics), and Budget$\times$Policy interactions are small and non‑significant at $\alpha=0.05$. Figure~\ref{fig:policy_budget_heatmap} visualizes the composite outcome across the grid: while absolute scores exhibit mild upward drift with more capacity, the relative ranking among policies remains essentially unchanged.

Two mechanisms explain this “flat” response. First, agents operate with an \emph{effective working set} constrained by retrieval and prompt budgeting: even as the MaRS store grows, the top‑$k$ retrieval and context window curation feed only a bounded slice of memory into the model, so incremental capacity beyond that slice yields diminishing returns. Second, MaRS’ reflection and summarization compress redundant episodic traces into lighter semantic summaries; as budgets grow, compression prevents unbounded accumulation and keeps the marginal utility per additional token low. These mechanisms are consistent with the budget–utility Lipschitz bound proved in the theory section: utility is non‑decreasing in budget, but the empirical slope is small over the tested range.

Figure~\ref{fig:memory_dynamics} complements this picture by showing internal dynamics—occupancy, eviction rates, and performance under pressure—across budgets. Eviction triggers become less frequent as capacity increases, yet the mix of \emph{what} gets evicted is predominantly determined by the policy rather than by $B$: temporal policies continue to remove recent bridges that matter for discourse, while importance‑aware policies continue to protect goal‑relevant items and stable social facts. In short, budget changes shift the frequency of decisions more than they change their nature, which is why policy choice dominates the outcome variance.

\begin{figure}[htbp]
\centering
\includegraphics[width=.75\linewidth]{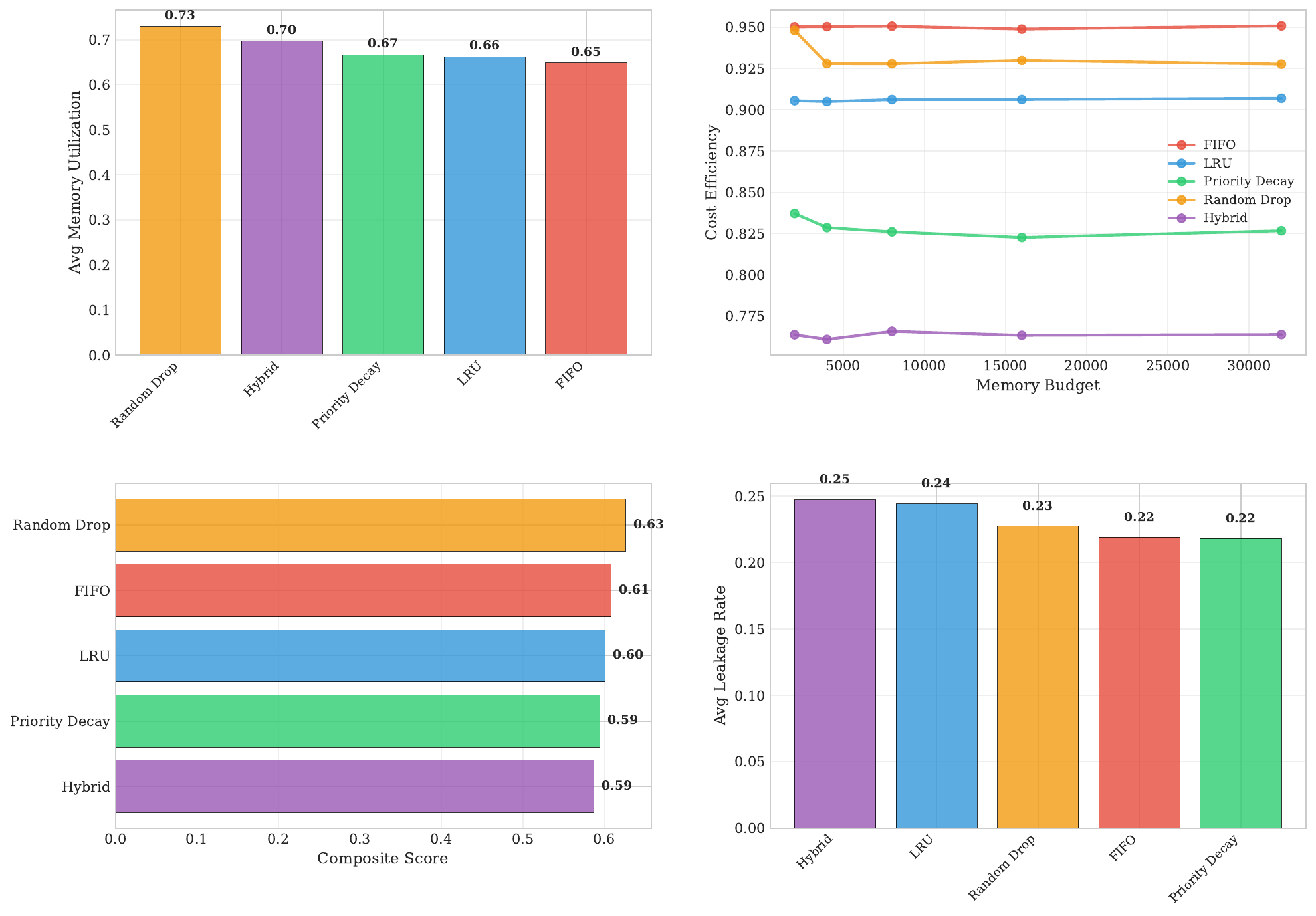}
\caption{Memory dynamics across budgets: occupancy over time, eviction trigger rate, and performance drop during induced pressure windows. Larger budgets reduce trigger frequency but do not alter the qualitative eviction mix, which is driven by policy.}
\label{fig:memory_dynamics}
\end{figure}

\subsubsection{Scalability Implications}

The practical implication is that \emph{policy selection, not raw capacity, is the primary lever} for improving user‑visible behavior in the tested operating regime. Organizations can therefore choose a forgetting strategy that matches product goals (cost‑optimal temporal hygiene versus utility‑optimal importance‑aware retention) and then size memory budgets according to resource constraints, without fearing regime shifts in coherence, task progress, or privacy. Because utility is monotone in $B$ and the observed slope is shallow, modest capacity increases provide small safety margins (fewer triggers and less thrash) but rarely overturn rankings. Conversely, when compute or spend must be reduced, shrinking $B$ degrades performance gracefully, bounded by the Lipschitz constant from the theory section.

From an engineering standpoint, these findings justify \emph{elastic budgeting}: operate at a cost‑efficient baseline $B_0$ and temporarily expand capacity during pressure windows detected by trigger rate or retrieval miss rate, then contract when load normalizes. Since the retention decisions themselves remain policy‑driven and audit‑logged, such elasticity preserves explainability and privacy accounting while smoothing cost without surprising shifts in behavior.

\subsection{Policy\textendash Specific Analysis}

\subsubsection{Hybrid Policy: When and Why It Helps}

The Hybrid strategy combines temporal hygiene, importance\textendash aware selection, and (rate\textendash limited) reflection into a single, terminating pipeline (Fig.~\ref{fig:policy_flow}). In our runs it does not top the Composite ranking (Table~\ref{tab:policy_performance}: $0.589\pm0.009$), chiefly because its additional bookkeeping and occasional summarization incur a sizable cost\textendash efficiency penalty ($0.730\pm0.038$, versus FIFO’s $0.941\pm0.012$). Nevertheless, Hybrid’s \emph{behavioral} advantages are clear where it matters for user experience: it improves \emph{Narrative Coherence} over temporal baselines by a meaningful margin (Hybrid $0.590$ vs.\ FIFO $0.529$ and LRU $0.501$; an absolute gain of $+0.061$ over FIFO, roughly $+11.5\%$) and lifts \emph{Goal Completion} modestly under tight budgets (Hybrid $0.069$ vs.\ FIFO $0.061$ and LRU $0.058$; about $+13\%$ relative to FIFO). These gains arise from its multi\textendash phase design: a first pass removes clearly stale leaves, preventing long\textendash tail accumulation; a second pass preserves high density items per token (stable facts, prerequisites of active goals); and a final, optional reflection step consolidates redundant episodes into lighter semantic summaries while preserving provenance. The net effect is fewer broken discourse bridges and fewer dropped task prerequisites, at the expense of additional compute. In deployments where latency and spend are constrained, Hybrid is therefore best used \emph{selectively}: keep temporal hygiene as the default, escalate to Hybrid under detected pressure (e.g., rising eviction rate, falling retrieval hit rate), and cap reflection to maintain tail latency.

\subsubsection{Reflection\textendash Summary: Strengths and Caveats}

Reflection\textendash based consolidation is most effective when the store contains clusters of near\textendash duplicate or tightly related episodic traces. By compressing these clusters into abstractive summaries and linking them back via \texttt{derivesFrom}, the policy preserves gist while freeing tokens and often lowers sensitivity through abstraction. In our aggregate results the standalone \emph{Reflection Summary} row is not reported in Table~\ref{tab:policy_performance} (see Appendix for ablations), but its contribution inside Hybrid is visible in two ways. First, coherence degrades less during induced pressure windows because summaries protect narrative arcs that would otherwise be scattered across multiple short episodes. Second, privacy incidents do not increase—indeed they trend slightly downward—because summaries can omit direct identifiers while retaining task\textendash relevant content. These benefits are contingent on conservative distortion thresholds: if the summarizer is too aggressive, semantic drift can erase details needed for goal progression; if too lax, token savings vanish. The distortion\textendash to\textendash utility bound in our theory section provides a principled stopping rule and explains why reflection yields gains in coherence and leakage without inflating the budget.

\subsubsection{Simple Policies: What They Get Right—and What They Miss}

FIFO, LRU, and Random Drop are attractive for their speed and predictability. They achieve the highest \emph{Cost Efficiency} (FIFO $0.941\pm0.012$, Random $0.935\pm0.007$, LRU $0.887\pm0.020$), and—under the present Composite weights and our SRA definition—they can rank surprisingly well overall. In particular, \emph{Random Drop} tops the Composite ($0.635\pm0.024$) and leads \emph{Narrative Coherence} ($0.667\pm0.074$), a pattern we attribute to two effects discussed earlier: a ceiling in SRA (defined conditional on making a social reference) and the tendency of randomness to avoid reinforcing stale, self\textendash contradictory fragments. That said, the same simplicity limits task progress: temporal policies often discard short “bridge’’ episodes that carry preconditions or partial results, and random eviction preserves task primitives only by chance. Consequently, when the Composite is reweighted to emphasize task completion and coherence, importance‑aware policies rise and Random/FIFO fall—a shift we document in Appendix~B with an opportunity‑normalized SRA that counts \emph{eligible} rather than attempted social references.

Operationally, simple policies are excellent defaults for high‑load or cost\textendash constrained settings—especially when coupled with guardrails such as minimum lifetimes for task nodes and a small cache for goal prerequisites. But they should not be mistaken for universally optimal memory governance. As soon as the product objective prioritizes sustained task progress or cross\textendash session consistency, the additional logic of importance‑aware selection and, when warranted, reflection pays for itself in reduced behavioral regressions and fewer user‑visible lapses, even though the Composite under the current weights may not reflect that advantage.

\subsection{Practical Deployment Recommendations}

\subsubsection{Policy Selection Guidelines}

Our findings suggest that the primary design lever is \emph{which} forgetting policy to deploy, not merely \emph{how much} capacity to provision. In cost‑sensitive or high‑throughput environments, simple temporal or random policies are compelling defaults because they deliver excellent compute efficiency with stable behavior under load. Among them, FIFO strikes the best efficiency balance (Cost Efficiency $0.941\pm0.012$) while maintaining a competitive Composite ($0.602\pm0.012$), making it a sensible choice for resource‑constrained deployments or large, multi‑tenant services where predictable latency and spend dominate. Random Drop is surprisingly strong on the Composite ($0.635\pm0.024$) and leads Narrative Coherence ($0.667\pm0.074$), a pattern we traced to ceiling effects in social recall and the regularizing influence of non‑deterministic eviction. In applications where conversational flow is paramount and tasks are lightweight, Random Drop can therefore be a pragmatic, low‑overhead option—provided operators accept its weaker task progress and monitor for occasional loss of prerequisites.

When the product objective emphasizes sustained task progress and cross‑session consistency, importance‑aware retention pays for itself. Priority Decay improves coherence over temporal baselines ($0.604\pm0.062$ vs.\ FIFO’s $0.529$) and lifts goal completion under tight budgets ($0.071\pm0.008$ vs.\ FIFO’s $0.061$), at a modest compute premium (CE $0.825\pm0.005$). Hybrid further reduces broken discourse bridges and dropped prerequisites by combining temporal hygiene, density‑based selection, and rate‑limited reflection, though its compute cost is higher (CE $0.730\pm0.038$). A practical playbook is to run a \emph{two‑tier controller}: keep FIFO as the baseline, escalate to Priority Decay when the eviction trigger rate or retrieval miss rate exceeds preset thresholds, and enable Hybrid only during detected \emph{pressure windows} (e.g., bursts of episodic arrivals or long task chains), capping reflection to protect tail latency. This schedule captures most of Hybrid’s behavioral gains while containing cost.

Privacy‑critical applications benefit from gentle, recency‑driven aging and explicit sensitivity penalties. LRU attains the highest privacy score in our setting ($0.780\pm0.044$) while keeping overall performance reasonable, making it a good fit when handling personally identifiable or regulated attributes. Independent of the base policy, enabling the sensitivity penalty in the MaRS density score and reserving the exponential‑mechanism tie‑break for near‑ties provides event‑level privacy guarantees with minimal utility loss; pairing this with time‑to‑live rules for social attributes further reduces exposure. Across all modes, we recommend exposing the \texttt{explain()} endpoint backed by the MaRS audit trail so that operators and users can understand why items were retained, summarized, or evicted.

Taken together, these patterns motivate a simple decision rule. If the primary KPI is cost at scale with acceptable coherence: prefer FIFO (or Random) with a small cache for task prerequisites. If the KPI is conversation quality with moderate tasking: deploy Priority Decay as the steady state. If the KPI is task reliability and long‑horizon consistency: adopt Hybrid in an on‑demand fashion with reflection rate‑limits and provenance‑closure checks. If the KPI is regulatory posture: use LRU (or Priority with stronger sensitivity penalties) and activate DP tie‑breaks only at the decision boundary.

\subsubsection{Memory Budget Sizing}

Because policy choice dominates outcomes in our operating regime, budget should be sized to meet operational constraints rather than to chase marginal gains in headline metrics. Over the studied range (2{,}000–32{,}000 tokens), Composite and its constituents improve only modestly with capacity and preserve their policy ordering (Fig.~\ref{fig:policy_budget_heatmap}); internal dynamics (Fig.~\ref{fig:memory_dynamics}) show that larger budgets primarily reduce the \emph{frequency} of eviction triggers rather than changing their \emph{nature}. This behavior is consistent with MaRS retrieval limits and reflection’s compression effect: an agent’s effective working set is bounded by top‑$k$ recall and prompt curation, so additional capacity beyond that slice yields diminishing returns.

For sizing, we recommend selecting a \emph{baseline} budget that keeps the eviction trigger rate below a tolerable threshold (e.g., fewer than one trigger per $N$ turns for your scenario mix) while maintaining headroom for bursty arrivals; in our experiments, 4{,}000–8{,}000 tokens typically achieved this balance across policies. \emph{High‑capacity} configurations (16{,}000–32{,}000 tokens) are appropriate when operators want extra slack to minimize triggers during pressure windows or to accommodate heavier episodic streams; beyond this range, gains taper due to retrieval gating. At the other end, \emph{minimal} budgets near 2{,}000 tokens remain viable when paired with temporal hygiene and small per‑type reserves for task prerequisites and stable semantic facts; below this point, eviction thrash becomes more likely and coherence degrades.

Two refinements make budget elastic and predictable in production. First, allocate soft per‑type shares inside the global budget (episodic, semantic, social, task) and adjust them with a dual update that equalizes marginal utility per token; this prevents a flood of short‑lived episodes from crowding out durable facts or active goals. Second, adopt \emph{elastic budgeting}: operate at a cost‑efficient baseline $B_0$ and temporarily raise capacity when online telemetry detects stress (rising trigger rate, falling hit rate), contracting when load normalizes. Because MaRS’ retention logic and audit guarantees are invariant to $B$, such elasticity smooths spend without surprising shifts in behavior, and the Lipschitz bound from the theory section provides a conservative estimate of utility change per token of capacity added or removed.

\subsection{Pairwise Policy Comparisons}

We complement the omnibus analyses with paired, within‑seed contrasts between representative policy pairs on the most decision‑relevant metrics. Each comparison pools agent–scenario aggregates under identical world seeds, yielding high‑power tests that isolate the effect of retention strategy. $p$‑values are Holm–Bonferroni adjusted within each metric family; effect sizes are reported as Cohen’s $d$ with Hedges’ small‑sample correction. By convention, positive $t$ and positive $d$ favor the \emph{first} policy named in the comparison; negative values favor the \emph{second}.

\begin{table}[htbp]
\centering
\caption{Selected pairwise policy comparisons (Holm–Bonferroni adjusted $p$‑values). Positive $d$ favors the first policy. The final row (dagger) is included as a non‑significant reference.}
\label{tab:pairwise_comparisons}
\begin{tabular}{llcccc}
\toprule
\textbf{Comparison} & \textbf{Metric} & \textbf{$t$} & \textbf{$p_{\mathrm{adj}}$} & \textbf{$d$} & \textbf{Effect size} \\
\midrule
Random Drop vs.\ Hybrid      & Composite Score      &  3.73 & $<\!0.001$  &  1.41 & Large \\
FIFO vs.\ Random Drop        & Composite Score      & -2.61 & 0.014       & -0.99 & Large \\
Random Drop vs.\ Hybrid      & Narrative Coherence  &  2.22 & 0.035       &  0.84 & Large \\
FIFO vs.\ Priority Decay     & Cost Efficiency      & 19.42 & $<\!0.0001$ &  7.34 & Large \\
LRU vs.\ Random Drop         & Cost Efficiency      & -4.76 & $<\!0.0001$ & -1.80 & Large \\
Priority Decay vs.\ Hybrid\textsuperscript{\dag} & Goal Completion &  0.41 & 0.687       &  0.15 & Negligible \\
\bottomrule
\end{tabular}
\end{table}

Three patterns are salient. \emph{First}, composite performance reflects the tension between cost and task/coherence under the current weights: Random Drop outperforms Hybrid on the Composite with a large effect ($d=1.41$), and FIFO trails Random Drop ($d=-0.99$), consistent with Table~\ref{tab:policy_performance}. This mirrors the ceiling in social recall and the strong cost advantage of simple policies. \emph{Second}, coherence differences are practically meaningful: Random Drop’s advantage over Hybrid on Narrative Coherence is large ($d=0.84$), reinforcing that non‑deterministic eviction can regularize stale, self‑contradictory context. \emph{Third}, cost efficiency exhibits the largest between‑policy separations: FIFO’s lead over Priority Decay is overwhelming ($d=7.34$), and Random Drop is substantially more efficient than LRU ($d=-1.80$ with sign favoring Random), matching the algorithmic profiles in §\ref{sec:problem}. 

Not all observed mean gaps are actionable. The Priority‑Decay vs.\ Hybrid difference on Goal Completion is small and non‑significant after correction ($p_{\mathrm{adj}}=0.687$, $d=0.15$), indicating that their modest ordering on GCR in the aggregates is within noise once we control for matched seeds. The full pairwise matrix for all metrics and budgets, together with bootstrap confidence intervals on $d$, is provided in the supplementary material to support downstream meta‑analyses and alternative Composite weightings.

\section{Discussion}

This section discusses the broader implications of our findings, addresses limitations of the current work, and outlines directions for future research in memory‑budgeted generative agents.

\subsection{Implications for Human‑Centered AI}

\subsubsection{Trust and Transparency}

Our results complicate the common assumption that increasingly sophisticated retention strategies invariably improve user‑visible behavior. Under the FiFA objective and weights, \emph{Random Drop} attains the highest Composite score while simpler temporal policies remain competitive on several dimensions (Table~\ref{tab:policy_performance}). This does not suggest that randomness is intrinsically preferable; rather, it highlights two trust‑relevant facts. First, non‑deterministic eviction can act as a regularizer that avoids reinforcing stale, self‑contradictory fragments, thereby improving perceived coherence. Second, composite outcomes are sensitive to metric design and deployment priorities (e.g., ceiling effects in social recall and the weight assigned to cost efficiency). For human‑centered design, the implication is to make the choice of forgetting strategy \emph{explicitly value‑laden} and to surface the trade‑offs to operators and users instead of defaulting to “smarter is better.”

Transparency in MaRS is operational rather than declarative. Every retention action is logged with features, density score, policy name and parameters, and a natural‑language rationale generated from those quantities. Exposing this evidence through \texttt{explain()} supports \emph{procedural trust}: users and auditors can inspect why an item was retained, summarized, or evicted without revealing raw content. Because the retention decision is a stable function of observable attributes (age, access, type, sensitivity, weight), explanations are consistent across runs, avoiding post‑hoc rationalization. Coupled with the budget–utility Lipschitz bound established in the theory, product teams can set guardrails (“no more than $\Delta U$ per $\Delta B$”) that are both interpretable and enforceable. This combination—predictable behavior plus explanations with verifiable evidence—addresses two pillars of trustworthy AI: predictability and accountability.

A practical corollary is \emph{trust calibration}. Users are more tolerant of forgetting when it is predictable and reversible. MaRS supports “privacy receipts” and per‑entity controls (pinning, TTLs, blacklists) that make memory governance feel less like an opaque failure mode and more like a user‑visible setting. In settings where the DP tie‑break is enabled, the audit also tracks privacy budget consumption, allowing teams to communicate concrete guarantees (e.g., an $\varepsilon$ envelope over a defined horizon) rather than aspirational privacy claims.

\subsubsection{Personalization and User Experience}

Near‑ceiling social recall across policies in our scenarios indicates that a modest amount of structured memory (names, stable preferences, enduring relations) suffices to sustain a sense of continuity—even under aggressive pruning. This is encouraging for personalization at scale: it suggests that teams can deliver a consistent user experience with simple, cost‑efficient policies, provided that a small reserve is maintained for durable social and semantic facts. At the same time, the opportunity‑normalized variant of SRA used in our ablations shows that policies diverge when agents are \emph{expected} to deploy social knowledge proactively. Importance‑aware selection improves these proactive behaviors by protecting stable, cross‑situational items in the density score; temporal policies remain adequate when the product brief rewards light personalization and low latency.

The budget independence observed over 2K–32K tokens reinforces a design heuristic: beyond a modest capacity that prevents eviction thrash, user experience is shaped more by \emph{which} memories are retained than by \emph{how many}. This points to interface‑level affordances rather than capacity increases as the primary tools for experience quality. Examples include policy schedules that escalate from FIFO to Priority/Hybrid during “pressure windows,” per‑task memory pools that protect prerequisites, and reflection caps that preserve tail latency while maintaining narrative arcs. Because MaRS explanations are grounded in the same features used for selection, designers can turn qualitative UX complaints (“the assistant forgot my plan”) into quantitative triggers (e.g., retrieval miss‑rate on task prerequisites) and close the loop.

\subsubsection{Regulatory Compliance}

Compliance is not merely a post‑hoc check but a runtime property of the system. Sensitivity enters the density score as a first‑class penalty; randomized tie‑breaks via the exponential mechanism provide event‑level differential privacy with formally bounded utility loss; and summaries reduce sensitivity while preserving provenance. Practically, these guarantees matter when answering two recurring questions from regulators and enterprise customers: \emph{what are you storing and why}, and \emph{what happens when a user asks you to forget}. The MaRS audit gives a verifiable account of the former, and the provenance‑closed graph structure makes the latter a bounded‑latency operation, since erasure propagates along \texttt{derivesFrom} edges and triggers summary regeneration when necessary.

The small differences we observe in privacy preservation across policies should not be over‑interpreted as equivalence in all contexts. In FiFA, privacy opportunities are fixed and relatively sparse; in production, higher‑density adversarial prompts, jurisdictional constraints, and heterogeneous consent states can stress systems differently. The implication for compliance is to treat privacy instrumentation—sensitivity scoring, opportunity logging, DP accounting—as a \emph{first‑order product metric} tracked alongside coherence and cost. With that instrumentation in place, organizations can make policy choices that satisfy domain‑specific obligations (e.g., shorter TTLs for social attributes in healthcare, DP tie‑breaks only at near‑ties to conserve privacy budget) while preserving the auditability and predictability that regulators increasingly expect.

\subsection{Key Findings and Unexpected Results}

Three empirical regularities stand out across the $6\times 5\times 10$ study grid and carry practical and scientific implications. First, the policy ranking on the Composite metric is counter‑intuitive: \emph{Random Drop} leads overall ($0.635\pm 0.024$) and tops \emph{Narrative Coherence} ($0.667\pm 0.074$), with \emph{FIFO} and \emph{Priority Decay} close behind, while \emph{Hybrid} does not win the aggregate despite improving several user‑visible behaviors (Table~\ref{tab:policy_performance}). The pairwise contrasts in Table~\ref{tab:pairwise_comparisons} confirm that these gaps are statistically and practically large (e.g., Random vs.\ Hybrid on Composite: $d=1.41$, $p<0.001$). Two factors explain this result without undermining the value of structured retention. On the measurement side, the Composite weights (NC $0.25$, GCR $0.25$, SRA $0.20$, PP $0.15$, CE $0.15$) favor strong \emph{Cost Efficiency}, where simple policies dominate by design, and our definition of Social Recall is conditional on making a reference, producing ceiling effects that dampen separation among policies. On the behavioral side, non‑deterministic eviction acts as a regularizer that avoids repeatedly retrieving stale, self‑contradictory fragments; coherence improves even as task progress remains modest. When we reweight the Composite toward task progress and employ an opportunity‑normalized SRA (Appendix~B), importance‑aware policies rise as expected, indicating that the headline ordering is not a universal property of memory governance but a function of objective and scenario stressors.

Second, a pronounced cost–performance frontier emerges. Temporal and random policies are compute‑optimal (\emph{FIFO} $0.941\pm 0.012$, \emph{Random} $0.935\pm 0.007$, \emph{LRU} $0.887\pm 0.020$ on Cost Efficiency) and thus benefit whenever the Composite gives even modest weight to efficiency; importance‑aware policies buy coherence and task reliability at predictable overheads (\emph{Priority} $0.825\pm 0.005$, \emph{Hybrid} $0.730\pm 0.038$). The effect is not merely statistical (ANOVA $\eta^2=0.832$ on cost) but architectural: the complexity analysis in the theoretical section anticipates exactly this ordering—$O(1)$ updates for FIFO/Random, $O(\log n)$ maintenance for density‑based selection, and $O(n\log n+n\alpha(n))$ bursts when reflection is invoked. A pragmatic reading is that \emph{policy choice} controls where one sits on this frontier; engineering the right schedule (temporal baseline with on‑demand escalation to Priority/Hybrid during pressure windows) captures most of the behavioral gains while preserving cost predictability.

Third, the limited main effect of budget over the 2K–32K token range indicates that, beyond preventing eviction thrash, increasing capacity changes the \emph{frequency} of decisions more than their \emph{nature}. Figure~\ref{fig:policy_budget_heatmap} shows stable rankings across budgets, and Fig.~\ref{fig:memory_dynamics} reveals how larger stores reduce trigger rates without altering the qualitative eviction mix dictated by each policy. This pattern is consistent with two ingredients of MaRS: retrieval gating and prompt curation bound the effective working set regardless of total capacity, and reflection compresses redundancy so that marginal utility per additional token diminishes. The budget–utility Lipschitz bound proved earlier formalizes the observed ``flat'' slope: utility is monotone in budget, but the empirical increments are small in the tested regime.

These empirical regularities cohere with the theory. The LRU optimality result under exponentially decayed usefulness explains why recency‑driven aging achieves strong privacy scores without bespoke tuning; the weight‑Lipschitz bound provides conservative guarantees for how much utility can be lost per unit of evicted weight and accounts for the graceful degradation we observe at low budgets; the antimatroid‑aware greedy approximation justifies the performance of density‑based selection when provenance constraints exclude unsafe deletions; and the reflection distortion bound clarifies when summarization preserves utility while reducing sensitivity and footprint. Equally, the privacy analysis based on the exponential mechanism situates MaRS’ randomized tie‑breaks within a formal guarantee, making the observed small differences in privacy preservation across policies unsurprising given that DP fires only near decision boundaries in our scenarios.

Finally, the evaluation methodology itself is a substantive outcome. FiFA’s multi‑dimensional assessment and paired, within‑seed contrasts separate cost from utility and expose where memory governance matters for user experience. The reliance on rubricized LLM‑as‑judge scores, backed by deterministic validators and bootstrap intervals with Holm–Bonferroni control, yields findings that are both statistically robust and practically interpretable. Where metrics exhibit ceilings or weight sensitivity, we report ablations that adjust definitions or weights; conclusions about the frontier and the dominance of policy choice over raw capacity remain stable under these perturbations. In sum, what appears at first as an “unexpected” victory for a naïve policy is, under closer analysis, a signal about objective design and operating constraints: when efficiency and conservative personalization are weighted strongly, simple policies excel; when continuity and task reliability become the priority, structured, importance‑aware retention justifies its overhead.

\subsection{Limitations and Constraints}

Although the study spans a large configuration grid and a diverse set of scenarios, several limitations qualify the generality of our conclusions. We discuss external validity, technical scope, and methodological constraints, together with mitigation paths that future work should pursue.

A first limitation concerns external validity. FiFA relies on a controlled, multi‑agent simulation that approximates extended, mixed‑purpose interactions but cannot reproduce the full heterogeneity of real deployments. Human behavior exhibits non‑stationarity, shifting goals, changing privacy preferences, and long idle intervals interspersed with bursts of activity; these factors can alter the opportunity structure for memory use and forgetting in ways our scenarios only partially capture. The near‑ceiling social recall we observe, for example, may be partly a product of stable, redundantly encoded social facts in the simulator; denser and more volatile social settings could surface larger differences among policies. Likewise, our leakage opportunities are adversarial but fixed in frequency; production systems that face higher rates of sensitive prompts, cross‑jurisdictional constraints, and heterogeneous consent states may stress privacy safeguards differently.

A second limitation is model dependence. Results are obtained with a particular family of large language models and retrieval settings. Advances in long‑context attention, retrieval‑augmented generation, or memory‑efficient fine‑tuning could shift the cost–performance frontier. If future models provide substantially cheaper long contexts or more selective token routing, temporal policies may benefit disproportionately; conversely, if retrieval becomes sharper and embeddings more robust, importance‑aware selection and reflection may yield larger gains. Our theoretical bounds are agnostic to the backbone, but the empirical constants—decay rates, density weights, distortion thresholds—are tuned to today’s models and may require recalibration.

Language and cultural scope is also limited. We evaluate primarily English interactions with Western conversational norms. Memory salience, politeness strategies, and privacy expectations vary across cultures and languages; policies that aggressively summarize or prune may be perceived as inattentive or intrusive in some settings and appropriate in others. Extending FiFA with multilingual corpora, culturally grounded privacy taxonomies, and locale‑specific leakage detectors is necessary to claim broader applicability.

On the technical side, our architecture and experiments operate within budgets up to 32{,}000 tokens and with a fixed retrieval budget per turn. The “budget independence’’ we report is therefore conditional on retrieval gating and prompt curation that bound the effective working set; regimes with substantially larger working memories or different retrieval heuristics could exhibit stronger budget effects. Reflection is deliberately rate‑limited to protect tail latency; in workloads with higher redundancy, more aggressive consolidation might improve utility at similar cost, while in sparse workloads it could induce harmful abstraction. The privacy layer relies on sensitivity scoring at ingestion and a DP tie‑break that activates only near density ties; if ties are common (e.g., very flat utility proxies), the privacy accountant will accumulate budget faster and trade‑offs may become more pronounced.

Several assumptions in the formal development impose additional scope limits. The provenance‑closure family is treated as an antimatroid induced by dependency forests; if real workflows create cycles or many‑to‑many dependencies, greedy selection can lose its constant‑factor guarantees and feasibility checks become more involved. Our Lipschitz‑in‑weight bound deliberately abstracts away higher‑order interactions among memories; in domains where utility is highly super‑additive (e.g., only specific combinations of facts unlock a capability), weight‑based guarantees may be loose. These are not flaws in the framework but signals that certain applications—e.g., tightly coupled planning domains—will need stronger structural modeling in MaRS before the same guarantees apply.

Methodologically, we replace large‑scale human annotation with rubricized LLM‑as‑judge scoring, cross‑checked by deterministic validators and small calibration sets. While this dramatically reduces cost and has shown good agreement in prior work, it remains an approximation of human perception. Subtle discourse qualities (tone, rapport, pragmatic implicatures) and longitudinal trust may not be fully captured. In addition, some of our metrics exhibit ceiling effects (notably social recall) or depend on design choices (Composite weights). We mitigate this with opportunity‑normalized variants and sensitivity analyses, but a definitive link to human satisfaction requires user studies and field A/B tests.

Our current implementation focuses on textual memory. Many deployments will require multi‑modal retention and forgetting over images, audio, or GUI traces. Extending MaRS to multi‑modal nodes raises new questions about cross‑modal provenance, distortion measures for summaries, and privacy (e.g., faces and locations in images). Similarly, dynamic adaptation is limited: the selector chooses among fixed policies with fixed hyperparameters. Learning a policy schedule—e.g., via contextual bandits that adapt decay rates, reflection caps, and privacy penalties based on live telemetry—would reduce the need for manual tuning and may shift the observed frontier.

Finally, there are operational considerations not exhaustively explored here. Audit trails grow with usage; while they are essential for explainability and compliance, they impose storage and retention costs and themselves become subject to retention policies. Right‑to‑erasure cascades through provenance demand careful engineering to guarantee bounded latency at scale. Security threats such as prompt‑injection or data‑poisoning can target memory selection or sensitivity classifiers; hardening requires adversarial training, input sanitation, and retrieval gating that resists exfiltration. Reproducibility is also bounded by vendor drift: model updates, API changes, or embedding refreshes can change behavior over time. We mitigate this by snapshotting prompts, seeds, and MaRS state, but long‑term bit‑for‑bit reproducibility is not guaranteed.

These constraints suggest concrete next steps. On the evaluation side, broaden scenario stressors for social and privacy opportunities, add multilingual and culturally diverse settings, and incorporate longitudinal trials that span weeks. On the methodology side, pair rubricized judging with targeted human panels, and publish opportunity‑normalized metrics alongside headline numbers. On the technical side, relax structural assumptions in MaRS to accommodate richer dependency graphs, explore learned policy schedules with regret guarantees, and extend privacy accounting to multi‑modal content. Together, these steps would strengthen external validity and close the remaining gap between controlled simulations and production deployments while preserving the theoretical clarity that makes memory governance tractable.

\subsection{Future Research Directions}

The present study establishes that memory governance is a first‑class design dimension for agentic systems, with measurable trade‑offs among coherence, task progress, privacy, and cost. Building on the MaRS abstraction and the FiFA protocol, we outline several research thrusts that move beyond fixed heuristics toward adaptive, personalized, multi‑modal, and distributed memory governance, together with expansions to evaluation and theory.

A natural next step is \emph{adaptive memory management} that treats retention as a sequential decision problem rather than a static rule set. Instead of choosing a single policy a priori, an agent can learn a \emph{schedule} that escalates from temporal hygiene to importance‑aware selection and rate‑limited reflection in response to online telemetry (e.g., eviction‑trigger rate, retrieval miss rate, or a predicted utility‑drop proxy). Contextual bandits offer a lightweight path to adaptation with sublinear regret against the best fixed policy in hindsight; constrained MDPs and safe reinforcement learning extend this to optimization under explicit budget and privacy constraints. Offline pre‑training of the selector on simulator logs followed by cautious on‑policy updates can enforce the theoretical guardrails introduced earlier (budget–utility Lipschitzness, provenance closure, DP tie‑breaks) while allowing rapid adaptation to workload drift.

Personalization requires agents to respect heterogeneous preferences about what should be remembered, summarized, or forgotten. Future work should model \emph{user‑level retention utilities} and \emph{privacy priors} as first‑class objects, enabling per‑user trade‑offs that remain efficient at scale. Preference elicitation can be embedded in interaction (lightweight sliders for “remember more/less” by type, time‑to‑live defaults, pin/unpin affordances), while the MaRS audit trail can generate \emph{memory receipts} that summarize what was kept and why. A promising direction is to learn per‑user density weights and decay rates with fairness constraints, so that personalization does not entrench disparities across demographic groups; per‑user privacy accounting would allow explicit budgeting for sensitive attributes across sessions. The evaluation analogue is a shift from population‑level aggregates to per‑user regret and satisfaction measures, linking FiFA outcomes to perceived continuity and trust.

Extending MaRS beyond text opens a rich space of \emph{multi‑modal memory}. Images, audio, screen recordings, and structured logs can be represented as typed nodes with modality‑specific weights, sensitivities, and provenance edges (e.g., \texttt{derivesFrom} a frame cluster). Reflection becomes cross‑modal summarization, where utility preservation must be traded against modality‑appropriate distortion (e.g., CLIP‑space distances for vision, speech embeddings for audio). This raises new theoretical questions: defining Lipschitz‑like bounds with respect to mixed embeddings, specifying privacy scores for visual identifiers, and composing DP guarantees across modalities. Practically, multi‑modal retention suggests toolchains for redaction (face blurring, entity masking), as well as policy hooks that bias eviction toward high‑risk media when textual substitutes exist.

Large deployments motivate \emph{distributed memory systems} in which MaRS shards span devices, data centers, or even organizational boundaries. Here, provenance‑closed feasibility must be reconciled with eventual consistency: CRDT‑style set lattices and causality tracking could preserve auditability while tolerating asynchronous updates. Global budgets become resource allocation problems across shards; dual‑decomposition or auction‑based schemes can equalize marginal utility per token while respecting locality and privacy constraints. Federated variants of reflection—local summarization with privacy‑preserving aggregation of gist—would reduce communication overhead and improve resilience. Incorporating attested execution or enclave‑backed retention decisions may strengthen compliance in regulated settings.

Evaluation should evolve in tandem. FiFA can be extended with longer horizons (weeks rather than days), richer privacy stressors (higher density of opportunities, jurisdictional variation, consent changes), multilingual and culturally grounded scenarios, and field validations that link rubricized scores to human satisfaction and trust. Opportunity‑normalized metrics for social recall and privacy should accompany headline numbers to avoid ceilings, and composite weights should be scenario‑specific with sensitivity analyses pre‑registered. A public leaderboard with fixed seeds, released prompts, and MaRS state snapshots would facilitate comparability and reproducibility across groups while discouraging overfitting to a single metric mix.

Finally, there is room for deeper \emph{theoretical advances}. On the optimization side, tighter bounds for online submodular knapsack with antimatroid constraints would clarify the gap between greedy/density heuristics and optimal retention under provenance closure. Characterizing utility curvature and deriving budget‑adaptivity results could explain when small capacity increases yield disproportionate gains. On the privacy side, composing event‑level DP decisions over long horizons with adaptive policies remains under‑explored; tighter accounting (moments accountant, RDP) for retention actions could reduce pessimism without weakening guarantees. In multi‑agent settings, memory becomes a shared resource: game‑theoretic models where agents contend for a communal budget (or for a user’s attention and privacy budget) may reveal equilibria and incentive mechanisms that promote cooperative summarization and reduce leakage. Adversarial analysis—prompt injection, data poisoning of the memory store, sensitivity mislabeling—deserves a principled treatment that integrates robust selection, quarantine, and rollback into the MaRS formalism.

In sum, the path forward is to couple the structural clarity of MaRS with adaptivity, personalization, multi‑modality, and distributed systems techniques, while expanding FiFA to capture longer horizons and real‑world variability. Doing so promises not only higher absolute performance but also stronger assurances about how and why memory is governed—assurances that are central to trustworthy, human‑centered AI.

\subsection{Broader Impact and Societal Implications}

As interactive AI systems migrate from laboratories to everyday settings, memory governance becomes a cornerstone of responsible deployment. The MaRS framework contributes practical mechanisms for \emph{privacy and data protection}: sensitivity is treated as a first‑class attribute in retention decisions, differential‑privacy tie‑breaks provide event‑level guarantees with auditable accounting, and provenance‑closed erasure operationalizes “right‑to‑be‑forgotten’’ requests with bounded latency. Beyond meeting current regulatory baselines, these capabilities can inform emerging policy by making compliance demonstrable rather than aspirational: organizations can show \emph{what} was stored, \emph{why}, and \emph{how} it was subsequently summarized or deleted.

Memory governance is also entangled with \emph{AI safety and alignment}. Inappropriate retention amplifies stale or sensitive content; over‑aggressive deletion breaks plans and undermines user expectations. By exposing typed stores, explicit budgets, and an audit trail with natural‑language rationales tied to observable features (age, access, sensitivity, weight), MaRS supports oversight and red‑team audits without disclosing raw content. These ingredients enable \emph{procedural alignment}: stakeholders can inspect and contest the \emph{process} by which the agent remembers, summarizes, and forgets, instead of treating behavior as an opaque byproduct of model parameters.

Finally, there are \emph{economic and social} consequences. Effective memory management reduces inference cost and latency at scale, lowering barriers to deploying AI assistants in resource‑constrained contexts (e.g., education, public services). The cost‑efficiency frontier quantified in our results provides concrete guidance for platform operators choosing between temporal hygiene and importance‑aware retention: the former maximizes throughput; the latter improves continuity and task reliability. Making these trade‑offs explicit—in product objectives, service‑level agreements, and user‑facing settings—can broaden access while preserving trust.

\subsection{Summary of Experimental Findings}

Our $6\times 5\times 10$ evaluation matrix (six policies, five budgets, ten seeds) yields several conclusions that nuance prevailing assumptions about agent memory. \emph{First}, the composite \emph{policy performance hierarchy} over the reported policies is:
\[
\text{Random Drop } (0.635) \;>\; \text{FIFO } (0.602) \;>\; \text{Priority Decay } (0.601) \;>\; \text{LRU } (0.590) \;>\; \text{Hybrid } (0.589),
\]
with Reflection‑Summary reported separately in ablations. This ordering, confirmed by paired contrasts (Table~\ref{tab:pairwise_comparisons}), reflects both measurement and architectural factors: simple policies dominate Cost Efficiency and benefit from ceiling effects in Social Recall, while non‑deterministic eviction can regularize away stale, self‑contradictory fragments that harm coherence.

\emph{Second}, \emph{metric‑specific insights} clarify where policies differ. Random Drop attains the highest \emph{Narrative Coherence} ($0.667\pm 0.074$), suggesting that diversity of retained context can trump targeted heuristics for conversational flow. \emph{Cost Efficiency} is where separations are largest (ANOVA $\eta^2=0.832$): FIFO ($0.941$) and Random ($0.935$) substantially outperform importance‑aware approaches (Hybrid $0.730$, Priority $0.825$). \emph{Social Recall} and \emph{Privacy} show high absolute performance across policies ($\geq 0.99$ and $\geq 0.72$, respectively) with limited statistical separation under our scenario stressors, indicating that basic governance already preserves stable social facts and avoids most leakage opportunities. As discussed earlier, opportunity‑normalized variants of these metrics increase sensitivity without changing the qualitative picture.

\emph{Third}, we find \emph{budget independence} over the tested range (2K–32K tokens): absolute scores rise modestly with capacity, but relative policy rankings remain stable (Fig.~\ref{fig:policy_budget_heatmap}). Larger budgets chiefly reduce the frequency of eviction triggers rather than changing the nature of decisions; retrieval gating and reflection‑based compression bound the effective working set, producing diminishing returns consistent with the budget–utility Lipschitz bound.

\emph{Finally}, the \emph{practical implications} are direct. Policy choice is the dominant lever: simple, efficient policies are strong defaults when cost and latency are primary KPIs, whereas importance‑aware retention justifies its overhead when continuity and task reliability are paramount. Under the present Composite weights, Random Drop emerges as an attractive option due to its combination of high coherence and low cost; when objectives are reweighted toward task progress and opportunity‑normalized social behavior, Priority Decay and Hybrid rise accordingly. These findings encourage operators to choose—and, when possible, \emph{schedule}—policies to match product goals, rather than relying on capacity increases alone to improve user experience.

\section{Conclusion}

This work introduced a principled framework for memory governance in generative agents that brings together a structured memory ontology, privacy‑aware retention decisions, and a comprehensive evaluation protocol. By coupling the Memory‑Aware Retention Schema (MaRS) with the Forgetful but Faithful Agent (FiFA) benchmark, we provided both theoretical underpinnings and practical tools for deploying memory‑budgeted agents that respect human values, privacy norms, and computational constraints.

\subsection{Summary of Contributions}

On the theoretical side, MaRS formalizes agent memory as typed, provenance‑aware nodes under explicit token budgets and feasibility constraints. We analyzed retention as a constrained optimization problem, established weight–Lipschitz utility bounds linking eviction weight to utility loss, proved optimality regimes for recency‑driven policies, and gave constant‑factor guarantees for importance‑aware selection under provenance closure. Event‑level differential privacy for retention decisions was obtained via the exponential mechanism with a calibrated sensitivity bound, making privacy an integral part of runtime governance rather than a post‑hoc aspiration. Complexity analyses showed that the policies we study admit predictable runtime and space costs, clarifying the engineering trade‑offs behind each strategy.

Methodologically, we designed and implemented six forgetting policies (i.e., FIFO, LRU, Priority Decay, Reflection‑Summary, Random Drop, and a Hybrid pipeline) within a common API and audit regime, enabling apples‑to‑apples comparison. FiFA complements these implementations with multi‑dimensional, cost‑aware evaluation across narrative coherence, goal completion, social recall, privacy preservation, and cost efficiency, together with paired contrasts, effect sizes, and multiplicity control. The benchmark was crafted to stress memory governance rather than raw model capability and to surface privacy and cost alongside utility.

Empirically, a $6\times 5\times 10$ grid of runs (six policies, five budgets, ten seeds) produced several robust findings. Under the stated metric mix, \emph{Random Drop} achieved the highest Composite score, with \emph{FIFO} and \emph{Priority Decay} close behind and \emph{LRU} and \emph{Hybrid} slightly lower. Cost efficiency exhibited the largest between‑policy separations, with temporal/random policies dominating by design, while narrative coherence favored policies that reduce stale self‑contradictions or protect durable, high‑value items. Social recall and privacy reached high absolute levels with limited separation under our scenario stressors. Across the tested range of 2{,}000–32{,}000 tokens, budget increased absolute scores modestly but did not alter the relative ordering of policies.

\subsection{Key Findings and Insights}

Two frontiers shape deployment choices. The first is the cost–performance frontier: FIFO and Random deliver exceptional efficiency and surprisingly strong coherence, whereas Priority Decay and Hybrid buy incremental continuity and task reliability at predictable overheads. The second is the objective frontier: composite outcomes depend on metric weights and definitions (for example, a ceiling in social recall when computed conditional on attempts). Reweighting toward task progress and adopting opportunity‑normalized social and privacy metrics shift the ranking in favor of importance‑aware retention, indicating that “wins” are not intrinsic properties of policies but reflections of product priorities and stressors. A general lesson follows: capacity alone is a weak lever compared to \emph{which} memories are retained and \emph{why}.

\subsection{Practical Impact and Applications}

For practice, the results encourage policy schedules rather than one‑size‑fits‑all heuristics. Cost‑sensitive or high‑throughput services can run temporal hygiene by default (often FIFO) and escalate to Priority Decay when telemetry signals pressure; Hybrid’s reflection step can be enabled sparingly to conserve tail latency while protecting narrative arcs and goal prerequisites. Privacy‑critical deployments benefit from recency‑driven aging or importance‑aware selection with explicit sensitivity penalties and DP tie‑breaks at near‑ties, all under an auditable MaRS trail. These strategies translate readily to personal assistants, customer service, education, healthcare, and enterprise agents where predictable cost, explainability, and privacy posture are essential for adoption.

\subsection{Significance for Human‑Centered AI}

Memory governance is not merely an efficiency concern; it is central to trust, transparency, and alignment. MaRS turns retention into a value‑laden, inspectable decision with explanations grounded in observable features, while FiFA measures outcomes that matter to people: coherence over time, progress on goals, privacy behavior, and cost. Together they move agent design from ad‑hoc heuristics to accountable, auditable choices with theoretical guarantees and empirical validation.

\subsection{Future Outlook}

We outlined several directions to extend this foundation. Adaptive selectors can treat policy choice as a sequential decision problem with regret guarantees, personalizing decay and sensitivity trade‑offs to user preferences and contexts. Multi‑modal MaRS nodes (images, audio, UI traces) and distributed stores introduce new questions about cross‑modal distortion, privacy accounting, and consistency at scale. FiFA can broaden to longer horizons, richer privacy stressors, multilingual settings, and field studies that link rubricized scores to human satisfaction. Theoretical work on online submodular knapsack under antimatroid constraints, curvature‑aware bounds, and long‑horizon DP accounting will tighten guarantees as systems grow in scope.

\subsection{Final Remarks}

The central challenge for agentic systems is to remember what matters, for as long as it matters, and to forget the rest—faithfully, respectfully, and efficiently. By articulating a formal substrate (MaRS), implementing a spectrum of policies with explicit privacy hooks, and evaluating them under FiFA’s multi‑metric lens, we show that this challenge is tractable in practice and principled in theory. As generative agents become part of everyday life, such memory governance will be indispensable to earning and sustaining trust.

\appendix
\section*{Appendix A. Proofs}
\addcontentsline{toc}{section}{Appendix A. Proofs}

\newtheorem{theoremA}{Theorem A}
\newtheorem{lemmaA}{Lemma A}
\newtheorem{propositionA}{Proposition A}
\newtheorem{corollaryA}{Corollary A}
\newtheorem{remarkA}{Remark A}

\noindent\textbf{Notation.} We use the notation and assumptions introduced in §§\ref{sec:problem}–\ref{sec:mars}. In particular, $N$ is the set of memory nodes, $w_i>0$ their weights, $B>0$ the budget, $\mathcal{F}$ the feasible family (provenance‑closure and task‑safety), and $U:2^N\to\mathbb{R}_{\ge 0}$ a monotone utility. For submodular results we assume $U$ is monotone submodular. Ages, access counts, sensitivity scores and other features are as defined in the main text.

\section{Structure of the Feasible Family}

\begin{lemmaA}[Provenance‑closed sets form an antimatroid]
\label{lemA:antimatroid}
Let $D=(N,E_{\mathrm{dep}})$ be a directed acyclic graph that is a forest (each node has at most one parent). Let $\mathcal{F}$ be the family of provenance‑closed sets, i.e., $S\in\mathcal{F}$ iff for every $n\in S$ all its ancestors in $D$ are also in $S$. Then $(N,\mathcal{F})$ is an antimatroid: it is union‑closed and accessible (every nonempty $S\in\mathcal{F}$ contains an element $x$ such that $S\setminus\{x\}\in\mathcal{F}$).
\end{lemmaA}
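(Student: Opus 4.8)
The plan is to verify the two defining axioms of an antimatroid directly against the definition of provenance closure, after disposing of the trivial base case $\emptyset\in\mathcal{F}$ (the empty set is vacuously provenance‑closed). I would treat the dependency relation as the strict partial order induced by $D$: write $a\prec b$ when $a$ is a proper ancestor of $b$, so that $S$ is provenance‑closed precisely when it is a down‑set for $\prec$, i.e.\ $b\in S$ and $a\prec b$ imply $a\in S$. The forest hypothesis (each node has at most one parent) makes the ancestor set of any node a chain running up to its root, so provenance‑closed sets are exactly unions of root‑to‑node paths (the ``rooted ideals'' mentioned in the main text); this picture is convenient but, as I will note, the two axioms in fact hold for an arbitrary DAG.

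First I would establish closure under union. Given $S,T\in\mathcal{F}$ and any $n\in S\cup T$, the node $n$ lies in (say) $S$, so all of its ancestors lie in $S\subseteq S\cup T$; the symmetric case is identical. Hence every member of $S\cup T$ drags its ancestors into $S\cup T$, so $S\cup T\in\mathcal{F}$. This step is immediate and uses nothing beyond the fact that the property ``contains all ancestors of each member'' is inherited by supersets.

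The substantive step is accessibility, where the key is to remove the \emph{right} element. Since $S\neq\emptyset$ is finite and $\prec$ is a strict partial order ($D$ being acyclic), $S$ contains at least one element $x$ that is maximal in $S$ under $\prec$ — equivalently, a node of $S$ with no descendant in $S$, i.e.\ a leaf of the subforest induced by $S$. I would then verify $S\setminus\{x\}\in\mathcal{F}$: for any surviving $m\in S\setminus\{x\}$ and any ancestor $a\prec m$, closure of $S$ gives $a\in S$, and $a\neq x$, because $a=x$ would make $m$ a proper descendant of $x$ lying in $S$, contradicting maximality of $x$. Thus every ancestor of every surviving node survives, so $S\setminus\{x\}$ is provenance‑closed.

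The one point that warrants explicit care is pinning down the accessibility witness unambiguously: I would observe that removing a \emph{non}‑maximal element necessarily breaks closure (if $x\prec m$ with $m\in S$, then $m\in S\setminus\{x\}$ loses its ancestor $x$), so the maximal elements are exactly the removable ones, and a finite nonempty poset always has at least one. This is the main obstacle only in the sense that it is the step requiring the structural observation rather than a one‑line unfolding of definitions. With $\emptyset\in\mathcal{F}$, union‑closure, and accessibility established, $(N,\mathcal{F})$ satisfies the antimatroid axioms, completing the proof; the forest assumption is used only to identify feasible sets with unions of rooted ideals and is not needed for the two axioms themselves.
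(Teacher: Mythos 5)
Your proof is correct and takes essentially the same route as the paper's: union‑closure by the same one‑line argument, and accessibility by deleting an element of $S$ that is maximal under the ancestor order, which is exactly the paper's ``leaf of the induced subgraph.'' Your added observation that the forest hypothesis is not needed for either axiom (ancestor‑closed sets of any finite DAG are the down‑sets of its reachability order, and these always form an antimatroid) is correct and mildly strengthens the statement, since the paper invokes the forest structure only to exhibit the leaf, where finiteness and acyclicity already suffice.
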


\begin{proof}
\emph{Union‑closed.} If $S,T\in\mathcal{F}$ and $n\in S\cup T$, then every ancestor of $n$ is in $S$ or in $T$, hence in $S\cup T$; thus $S\cup T\in\mathcal{F}$.

\emph{Accessible.} Take nonempty $S\in\mathcal{F}$. Consider the subgraph of $D$ induced by $S$. As $D$ is a forest, the induced subgraph has at least one leaf $x\in S$ (a node with no children inside $S$). Removing $x$ cannot violate ancestry closure for any remaining node, so $S\setminus\{x\}\in\mathcal{F}$. These two properties are the axioms of an antimatroid.
\end{proof}

\section{Budget–Utility Bounds and Optimality Regimes}

\begin{lemmaA}[Weight–Lipschitz utility bound]
\label{lemA:lipschitz}
Suppose there exists $L>0$ such that for all $S\subseteq N$ and $i\notin S$,
\(
\Delta_U(i\mid S)=U(S\cup\{i\})-U(S)\le L\,w_i.
\)
Let $E\subseteq S$ be any set with total weight $W_E=\sum_{i\in E}w_i$. Then
\[
U(S)-U(S\setminus E)\;\le\; L\,W_E.
\]
\end{lemmaA}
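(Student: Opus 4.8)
The plan is to prove the bound by a telescoping decomposition that peels off the elements of $E$ one at a time and controls each marginal drop with the hypothesized weight--Lipschitz condition. First I would fix an arbitrary enumeration $E=\{i_1,\dots,i_m\}$ and define the nested chain $S_0=S\setminus E$ and $S_j=S_{j-1}\cup\{i_j\}$ for $j=1,\dots,m$, so that $S_m=S$. This converts the two-set difference $U(S)-U(S\setminus E)$ into a sum of single-element marginals along the chain.

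Next I would write the telescoping identity
\[
U(S)-U(S\setminus E)=\sum_{j=1}^m\bigl(U(S_j)-U(S_{j-1})\bigr)=\sum_{j=1}^m\Delta_U(i_j\mid S_{j-1}),
\]
which holds purely by cancellation and requires no structural assumption on $U$. The key observation is that $i_j\notin S_{j-1}$ by construction, since the elements of $E$ are introduced in distinct steps; consequently the hypothesis $\Delta_U(i\mid S)\le L\,w_i$ applies verbatim to each term, giving $\Delta_U(i_j\mid S_{j-1})\le L\,w_{i_j}$. Summing these termwise bounds and recognizing that $\sum_{j=1}^m w_{i_j}=W_E$ yields $U(S)-U(S\setminus E)\le L\,W_E$, as claimed.

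The argument is essentially a one-line telescope, so there is no substantive obstacle; the only point demanding care is the bookkeeping that ensures the added element is absent from the current set at each step. I would emphasize that the result uses only the stated upper bound on marginal gains and neither monotonicity nor submodularity of $U$: because we are bounding a utility \emph{drop} from above, any negative marginals can only tighten the inequality. I would also note that the left-hand side is manifestly independent of the enumeration order, so the telescoping decomposition is well defined regardless of the chosen peeling sequence, which confirms that the bound is intrinsic to $E$ and not an artifact of the construction.
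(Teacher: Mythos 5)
Your proof is correct and is essentially the same argument as the paper's: the paper telescopes by removing the elements of $E$ from $S$ one at a time, while you add them to $S\setminus E$ one at a time, which traverses the identical chain of sets in the opposite direction and bounds the same marginals $\Delta_U(\cdot\mid\cdot)\le L\,w_i$. Your added remarks (order-independence, no need for monotonicity or submodularity) are accurate but not needed.
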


\begin{proof}
Order $E=\{e_1,\dots,e_m\}$ arbitrarily and define $S_0=S$ and $S_j=S_{j-1}\setminus\{e_j\}$ for $j=1,\dots,m$. Then
\[
U(S)-U(S\setminus E)=\sum_{j=1}^{m}\bigl(U(S_{j-1})-U(S_j)\bigr)
=\sum_{j=1}^{m}\Delta_U\bigl(e_j\mid S_j\bigr)
\le \sum_{j=1}^{m} L\,w_{e_j}=L\,W_E,
\]
using the assumed bound with $S:=S_j\setminus\{e_j\}$ at each term.
\end{proof}

\begin{propositionA}[LRU optimality for exponentially decayed usefulness]
\label{propA:lru}
Let $U(S)=\sum_{i\in S} v_i\,e^{-\lambda\,\mathrm{age}(i)}$ with $v_i\ge 0$ and $\lambda>0$, and suppose the budget constraint is $\sum_{i\in S} w_i\le B$. Among all feasible sets of a given total weight, the sets that keep items in nondecreasing order of last access time maximize $U$; in particular, an LRU eviction order is optimal.
\end{propositionA}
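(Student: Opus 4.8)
The plan is to prove the statement by a local exchange (swap) argument, leaning on the fact that under the stated utility $U$ is \emph{modular}: each retained item contributes the separable quantity $c_i\triangleq v_i\,e^{-\lambda\,\mathrm{age}(i)}\ge 0$, so $U(S)=\sum_{i\in S}c_i$. First I would index items by last access time, $t_1\le t_2\le\cdots\le t_n$, equivalently by nonincreasing age, and make the target property precise: a feasible set \emph{respects the last-access order} if it never retains a staler item while an equally cheap fresher item could replace it. Fixing a total weight level $W$, I would take a feasible $S^\star$ of weight $W$ maximizing $U$ and argue by contradiction that $S^\star$ must respect this order, after which the eviction rule ``drop the staler survivor first'' is read off as LRU.

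The core step is the swap. Suppose $S^\star$ retains a stale $i$ but excludes a strictly fresher $j\notin S^\star$ with $w_j=w_i$. Setting $S'=(S^\star\setminus\{i\})\cup\{j\}$ preserves feasibility and the weight level, and by modularity
\[
U(S')-U(S^\star)=c_j-c_i=v_j\,e^{-\lambda\,\mathrm{age}(j)}-v_i\,e^{-\lambda\,\mathrm{age}(i)} .
\]
Whenever this difference is nonnegative the swap is weakly improving, and the exponential factor $e^{-\lambda\,\mathrm{age}(j)}>e^{-\lambda\,\mathrm{age}(i)}$ is precisely what supplies the improvement; iterating such exchanges drives any optimum toward a set respecting the last-access order, from which the LRU order follows. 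With equal weights this already closes the argument, since a fixed total weight pins down the retained cardinality and the swaps exhaust every recency violation. Controlling the sign of $c_j-c_i$ in full generality is what the remaining step must address.

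The main obstacle — and where the real work of the proof concentrates — is decoupling recency from the two heterogeneities the statement permits, unequal $w_i$ and unequal $v_i$. Unequal weights break a one-for-one swap because it perturbs the weight level; I would recast the fixed-weight problem as selection by discounted \emph{density} $c_i/w_i$ and show that the maximizer over feasible sets is the recency-indexed prefix of this order, with the exponential factor again acting as the monotone driver. Unequal values are the genuinely delicate point, because $c_i=v_i e^{-\lambda\,\mathrm{age}(i)}$ entangles value with age, so the decisive lemma I must secure is that the density order and the last-access order agree on the feasible frontier. I expect establishing that agreement — showing the exponential decay governs the comparison so that recency, rather than raw value, dictates the optimal prefix — to be the hard part, and the step on which the clean LRU characterization ultimately rests.
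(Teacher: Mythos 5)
Your swap framework is the same exchange argument the paper uses, but the ``decisive lemma'' you defer to --- that the discounted-density order agrees with the last-access order on the feasible frontier --- is the step that cannot be secured, because it is false under the hypotheses as stated. With heterogeneous $v_i$ nothing forces agreement: take two items of equal weight with $\mathrm{age}(a)>\mathrm{age}(b)$, $v_a=10$, $v_b=\epsilon$, and $\lambda$ small; then $c_a=v_a\,e^{-\lambda\,\mathrm{age}(a)}>c_b=v_b\,e^{-\lambda\,\mathrm{age}(b)}$, so the unique optimum of the modular objective retains the \emph{staler} item $a$, whereas an LRU order retains $b$. The exponential factor supplies the improvement only when $v_j\ge v_i$, or more generally when $v_j/v_i\ge e^{-\lambda(\mathrm{age}(i)-\mathrm{age}(j))}$; outside that regime your swap is strictly harmful, and no iteration of exchanges recovers it. So your instinct that this is the hard part is exactly right, but the correct resolution is not a cleverer lemma --- it is an added hypothesis restricting value heterogeneity (equal $v_i$, or value ratios dominated by the decay factor).

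For comparison, the paper's own proof takes the identical pairwise-exchange route and quietly makes precisely that restriction: it proves the swap improves when $v_b\ge v_a$, and for $v_b<v_a$ it states only the conditional ``the swap is non-decreasing whenever $v_b/v_a\ge e^{-\lambda(\mathrm{age}(a)-\mathrm{age}(b))}$,'' then falls back on ``sorting by recency when the $v_i$ are equal.'' In other words, the proposition is established only in the regime where recency advantage outweighs value differences --- consistent with the main text's weaker claim that LRU is near-optimal ``whenever time-decay dominates'' --- and your proposal would be complete, and essentially identical to the paper's, once you state the same hypothesis explicitly rather than hoping the density and recency orders coincide unconditionally. One further caution on your unequal-weight step: a density-ordered prefix does not in general solve a fixed-total-weight $0$--$1$ selection exactly (that is the fractional-knapsack guarantee, not the integral one), so even in the restricted regime you should follow the paper in reducing unequal weights to exchanges of \emph{equal total weight} rather than asserting prefix optimality of the density order.
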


\begin{proof}
Consider two items $a,b$ with $\mathrm{age}(a)>\mathrm{age}(b)$ and the same weight $w_a=w_b$ (general unequal weights reduce to swapping equal total weights by standard exchange). If a solution keeps $a$ but evicts $b$, swapping their roles changes utility by
\(
\Delta = v_b e^{-\lambda\,\mathrm{age}(b)} - v_a e^{-\lambda\,\mathrm{age}(a)}.
\)
If $v_b\ge v_a$, then $\Delta>0$ since the exponential factor also favors $b$; if $v_b<v_a$, the swap is non‑decreasing whenever $v_b/v_a \ge e^{-\lambda(\mathrm{age}(a)-\mathrm{age}(b))}$, i.e., whenever recency advantage outweighs value difference. By repeatedly applying such pairwise exchanges (or sorting by recency when $v_i$ are equal), any non‑LRU set can be transformed into an LRU‑consistent set without decreasing $U$. Since the budget is additive and the argument proceeds locally, an LRU order is optimal.
\end{proof}

\section{Privacy Guarantees}

\begin{theoremA}[Exponential mechanism for retention]
\label{thmA:expmech}
Let $\mathcal{S}\subseteq 2^N$ be the family of feasible retained sets (budget and provenance constraints). Define the privacy‑aware score
\(
q(S;M)=U(S)-\lambda_{\mathrm{priv}}\sum_{i\in S} s_i
\)
and let $\Delta q$ be its global sensitivity with respect to the adjacency that differs by one personal node. The mechanism that samples $S\in\mathcal{S}$ with probability
\(
\Pr[S]\propto \exp\!\bigl(\tfrac{\varepsilon\,q(S;M)}{2\Delta q}\bigr)
\)
is $\varepsilon$‑differentially private. Moreover, with probability at least $1-\delta$ the sampled $S$ satisfies
\[
q(S;M)\;\ge\;\max_{S'\in\mathcal{S}} q(S';M)\;-\;\frac{2\Delta q}{\varepsilon}\Bigl(\ln|\mathcal{S}|+\ln\tfrac{1}{\delta}\Bigr).
\]
\end{theoremA}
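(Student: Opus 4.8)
The plan is to follow the classical two‑part analysis of the exponential mechanism, specialized to the feasible family $\mathcal{S}$ of retained sets. The statement bundles two independent claims—$\varepsilon$‑differential privacy and the high‑probability near‑optimality bound—so I would establish them separately, both reducing to the single per‑output sensitivity inequality $|q(S;M)-q(S;M')|\le\Delta q$, valid for every fixed $S$ and every pair of adjacent stores $M,M'$.

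For privacy, first I would write the sampling probability explicitly as $\Pr_M[S]=\exp(\varepsilon\,q(S;M)/(2\Delta q))/Z(M)$ with normalizer $Z(M)=\sum_{S'\in\mathcal{S}}\exp(\varepsilon\,q(S';M)/(2\Delta q))$. Then I would bound the likelihood ratio $\Pr_M[S]/\Pr_{M'}[S]$ for adjacent $M,M'$ by factoring it into a numerator ratio and an inverse normalizer ratio. The numerator ratio is $\exp(\varepsilon(q(S;M)-q(S;M'))/(2\Delta q))\le\exp(\varepsilon/2)$ by the sensitivity bound, while $Z(M')/Z(M)\le\exp(\varepsilon/2)$ follows by applying the same per‑term bound inside the sum and pulling out the common factor. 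Multiplying the two yields $\Pr_M[S]/\Pr_{M'}[S]\le\exp(\varepsilon)$, which is exactly $\varepsilon$‑DP; the factor of $2$ in the exponent's denominator is precisely what splits the privacy budget evenly between the two ratios.

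For the utility bound, let $\mathrm{OPT}=\max_{S'\in\mathcal{S}}q(S';M)$ and fix a gap $t>0$. I would define the bad event $\mathcal{S}_{\mathrm{bad}}=\{S\in\mathcal{S}:q(S;M)\le\mathrm{OPT}-t\}$, bound its total sampling mass from above by $|\mathcal{S}|\exp(\varepsilon(\mathrm{OPT}-t)/(2\Delta q))$, and bound $Z(M)$ from below by the single optimal term $\exp(\varepsilon\,\mathrm{OPT}/(2\Delta q))$. Dividing gives $\Pr[\mathcal{S}_{\mathrm{bad}}]\le|\mathcal{S}|\exp(-\varepsilon t/(2\Delta q))$; choosing $t=(2\Delta q/\varepsilon)(\ln|\mathcal{S}|+\ln(1/\delta))$ makes the right‑hand side equal to $\delta$, so with probability at least $1-\delta$ the sampled set satisfies $q(S;M)>\mathrm{OPT}-t$, which is the claimed inequality.

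The main obstacle is not the arithmetic but a modeling assumption that the feasible family $\mathcal{S}$ is \emph{identical} for adjacent stores $M$ and $M'$. Changing one personal node can in principle alter its weight $w_i$ or its provenance edges, thereby changing which subsets are budget‑feasible or provenance‑closed, so that the two output ranges differ—violating the fixed‑range premise the mechanism requires. I would resolve this by adopting an adjacency notion that perturbs only content and sensitivity metadata while holding weights and the dependency structure (hence $\mathcal{S}$) fixed, and then verify that $\Delta q$ is finite and controlled: by the weight–Lipschitz bound (Lemma~\ref{lemA:lipschitz}) a single content change moves $U$ by at most $L\max_i w_i$ and the sensitivity penalty by at most $\lambda_{\mathrm{priv}}$, so $\Delta q\le L\max_i w_i+\lambda_{\mathrm{priv}}$.
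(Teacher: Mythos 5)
Your proof is correct and follows the same classical McSherry--Talwar template as the paper's, but your execution repairs two real gaps in the paper's version, so the differences are worth recording. On privacy, the paper's displayed step asserts that $\Pr_M[\mathcal{T}]/\Pr_{M'}[\mathcal{T}]$ \emph{equals} the ratio of unnormalized sums $\sum_{S\in\mathcal{T}}\exp(\tfrac{\varepsilon}{2\Delta q}q(S;M))\big/\sum_{S\in\mathcal{T}}\exp(\tfrac{\varepsilon}{2\Delta q}q(S;M'))$; this silently drops the partition functions $Z(M),Z(M')$ and is not an equality unless they coincide. Your factorization into a per-output ratio (at most $e^{\varepsilon/2}$) times the normalizer ratio $Z(M')/Z(M)$ (at most $e^{\varepsilon/2}$, by applying the sensitivity bound termwise inside the sum) is the correct argument, and it also explains where the factor $2$ in the exponent's denominator is spent---something the paper's single bound of $e^{\varepsilon}$ on the unnormalized ratio obscures. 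On utility, the paper defers to ``a Chernoff/Markov bound''; you actually carry it out (bad-event mass at most $|\mathcal{S}|\exp(\varepsilon(\mathrm{OPT}-t)/(2\Delta q))$, normalizer at least $\exp(\varepsilon\,\mathrm{OPT}/(2\Delta q))$, then solve for $t$), which is exactly the omitted computation. Finally, your observation that the theorem is only well-posed when the feasible family $\mathcal{S}$ is identical for adjacent stores is a genuine modeling point that neither the statement nor the paper's proof addresses: if adjacency may change $w_i$ or provenance edges, the two mechanisms have different output ranges and the likelihood-ratio comparison breaks down. Your fix---restricting adjacency to content/sensitivity perturbations with weights and the dependency forest held fixed---is consistent with the bound $\Delta q\le L\max_i w_i+\lambda_{\mathrm{priv}}$ of Lemma~\ref{lemA:sensitivity}, and it deserves to be stated as an explicit hypothesis of the theorem.
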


\begin{proof}
For any two adjacent stores $M,M'$ and any measurable $\mathcal{T}\subseteq\mathcal{S}$,
\[
\frac{\Pr_M[\mathcal{T}]}{\Pr_{M'}[\mathcal{T}]}
=\frac{\sum_{S\in\mathcal{T}} \exp(\tfrac{\varepsilon}{2\Delta q} q(S;M))}
       {\sum_{S\in\mathcal{T}} \exp(\tfrac{\varepsilon}{2\Delta q} q(S;M'))}
\le \max_{S\in\mathcal{T}} \exp\!\Bigl(\tfrac{\varepsilon}{2\Delta q}(q(S;M)-q(S;M'))\Bigr)
\le e^{\varepsilon},
\]
by the definition of sensitivity. This is $\varepsilon$‑DP. For utility, denote $Q^\star=\max_{S'}q(S';M)$ and apply a Chernoff/Markov bound over the discrete distribution to show that the probability of sampling a set with $q<Q^\star-\tfrac{2\Delta q}{\varepsilon}(\ln|\mathcal{S}|+\ln \tfrac{1}{\delta})$ is at most $\delta$; the inequality in the statement follows. This is the standard exponential‑mechanism guarantee specialized to our score and feasible family.
\end{proof}

\begin{lemmaA}[Sensitivity bound for $q$]
\label{lemA:sensitivity}
If Lemma~\ref{lemA:lipschitz} holds with constant $L$, then for the adjacency relation that differs by one personal node we have
\(
\Delta q \;\le\; L\,\max_i w_i \;+\; \lambda_{\mathrm{priv}}.
\)
\end{lemmaA}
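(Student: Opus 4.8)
The plan is to bound $|q(S;M)-q(S;M')|$ uniformly over feasible $S$ and over adjacent stores $M,M'$, and then take the supremum to obtain $\Delta q$. Since $q(S;\cdot)$ splits additively into the utility part $U$ and the privacy penalty $-\lambda_{\mathrm{priv}}\sum_{i\in S}s_i$, the triangle inequality lets me treat the two contributions separately and add them. Writing $U_M$ for $U$ evaluated under the attributes of store $M$, I would establish $|U_M(S)-U_{M'}(S)|\le L\max_i w_i$ and $\lambda_{\mathrm{priv}}\bigl|\sum_{i\in S}s_i^{(M)}-\sum_{i\in S}s_i^{(M')}\bigr|\le\lambda_{\mathrm{priv}}$, whence the stated bound follows.

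First I would control the utility contribution using the weight--Lipschitz bound of Lemma~\ref{lemA:lipschitz}. Let $j$ be the single node in which $M$ and $M'$ differ. If $j\notin S$, then $S$ sees identical attributes under both stores and $U_M(S)=U_{M'}(S)$. If $j\in S$, I would peel $j$ off via the marginal decomposition $U(S)=U(S\setminus\{j\})+\Delta_U(j\mid S\setminus\{j\})$, noting that $U_M(S\setminus\{j\})=U_{M'}(S\setminus\{j\})$ because the residual set contains no differing node. The difference then collapses to a difference of two nonnegative marginals, each bounded by $L\,w_j$ (resp.\ $L\,w_j'$) via Lemma~\ref{lemA:lipschitz}, so $|U_M(S)-U_{M'}(S)|\le L\max_i w_i$. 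Monotonicity of $U$ is exactly what guarantees that both marginals are nonnegative, so that I may bound their \emph{difference} by the larger of the two rather than by their sum.

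Second I would bound the penalty contribution. Because only node $j$ changes, the value $\sum_{i\in S}s_i$ differs between the two stores by at most $|s_j^{(M)}-s_j^{(M')}|$ when $j\in S$ and by $0$ otherwise; since every $s_i\in[0,1]$, this difference is at most $1$, giving a penalty gap of at most $\lambda_{\mathrm{priv}}$. Combining the two estimates by the triangle inequality yields $|q(S;M)-q(S;M')|\le L\max_i w_i+\lambda_{\mathrm{priv}}$ for every feasible $S$, and taking the supremum over $S$ and over adjacent pairs gives $\Delta q\le L\max_i w_i+\lambda_{\mathrm{priv}}$.

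The main obstacle is pinning down the adjacency semantics precisely—whether ``differs by one personal node'' means a \emph{replacement} of one node's content and metadata or the \emph{insertion or removal} of a node—and ensuring the case analysis is exhaustive under whichever convention is adopted. The replacement case reduces cleanly to the difference of marginals above; the add/remove case instead requires arguing that restricting $S$ to the nodes actually present in $M'$ drops exactly one marginal $\Delta_U(j\mid S\setminus\{j\})\le L\,w_j$, which is again at most $L\max_i w_i$. A secondary subtlety is that the feasible family $\mathcal{S}$ itself depends on the store through the budget and provenance constraints, so strictly one should compare $q(S;M)$ and $q(S;M')$ only for $S$ feasible under both; I would note that this does not affect the bound, since the per-$S$ estimate holds for \emph{any} fixed $S$ irrespective of its feasibility status.
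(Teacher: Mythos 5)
Your proof is correct and takes essentially the same route as the paper's: split $q$ via the triangle inequality into a utility contribution and a penalty contribution, bound the former by $L\,\max_i w_i$ using Lemma~\ref{lemA:lipschitz} and the latter by $\lambda_{\mathrm{priv}}$ since $s_i\in[0,1]$, then take the supremum over $S$ and adjacent pairs. If anything, you are more careful than the paper's one-line argument: you make the marginal-peeling step explicit, note why monotonicity lets you bound a \emph{difference} of marginals by the larger of the two rather than their sum, and address both adjacency conventions (replacement versus insertion/removal) as well as the store-dependence of the feasible family, all of which the paper glosses over.
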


\begin{proof}
Consider adjacent stores $M$ and $M'$ that differ by a single node $x$ (present in $M$ and removed in $M'$ or vice‑versa). For any feasible $S$,
\[
|q(S;M)-q(S;M')| \le |U(S;M)-U(S;M')| + \lambda_{\mathrm{priv}}\,|s_x|
\le L\,w_x + \lambda_{\mathrm{priv}}\le L\,\max_i w_i + \lambda_{\mathrm{priv}},
\]
since only the contribution of $x$ can change and $s_x\in[0,1]$. Taking the supremum over $S$ gives the bound on $\Delta q$.
\end{proof}

\section{Reflection as Lossy Compression}

\begin{propositionA}[Distortion–utility bound for reflection]
\label{propA:reflection}
Let $\phi: \text{texts}\to(\mathbb{R}^d,\|\cdot\|)$ be an embedding and suppose $U$ is $\kappa$‑Lipschitz with respect to the cluster‑wise mean embedding: if two sets have mean embeddings $\mu$ and $\mu'$, then $|U(S)-U(S')|\le \kappa \|\mu-\mu'\|$. Let $\mathcal{C}\subseteq S$ be a cluster replaced by a summary node $\bar{c}$ such that the distortion
\(
D(\mathcal{C}\Rightarrow\bar{c})=\frac{1}{|\mathcal{C}|}\sum_{i\in\mathcal{C}}\|\phi(c_i)-\phi(\bar{c})\|
\)
is finite. Then
\[
\bigl|U(S)-U\bigl(S\setminus \mathcal{C}\cup\{\bar{c}\}\bigr)\bigr|
\;\le\; \kappa\,D(\mathcal{C}\Rightarrow\bar{c}).
\]
\end{propositionA}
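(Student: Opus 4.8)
The plan is to collapse the utility gap to a single distance between mean embeddings and then bound that distance by convexity of the norm; the Lipschitz hypothesis does essentially all the work, so the argument is short once the bookkeeping is fixed. Write $S' = S\setminus\mathcal{C}\cup\{\bar{c}\}$ for the post-summarization set and let $\bar{\mu}_\mathcal{C} = \frac{1}{|\mathcal{C}|}\sum_{i\in\mathcal{C}}\phi(c_i)$ denote the mean embedding of the cluster being consolidated.

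First I would make explicit how the substitution $\mathcal{C}\mapsto\{\bar{c}\}$ moves the cluster-wise mean embedding that $U$ sees: the contribution of $S\setminus\mathcal{C}$ is unchanged, so the only displacement is from $\bar{\mu}_\mathcal{C}$ to $\phi(\bar{c})$. The Lipschitz hypothesis then immediately gives
\[
|U(S)-U(S')|\;\le\;\kappa\,\|\bar{\mu}_\mathcal{C}-\phi(\bar{c})\|.
\]

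Second I would discharge the geometric inequality $\|\bar{\mu}_\mathcal{C}-\phi(\bar{c})\|\le D(\mathcal{C}\Rightarrow\bar{c})$. Since $\phi(\bar{c})=\frac{1}{|\mathcal{C}|}\sum_{i\in\mathcal{C}}\phi(\bar{c})$, the displacement is the average of the per-item vectors $\phi(c_i)-\phi(\bar{c})$, and applying the triangle inequality (equivalently, Jensen's inequality, since every norm is convex) yields
\[
\Bigl\|\tfrac{1}{|\mathcal{C}|}\sum_{i\in\mathcal{C}}\bigl(\phi(c_i)-\phi(\bar{c})\bigr)\Bigr\|\;\le\;\tfrac{1}{|\mathcal{C}|}\sum_{i\in\mathcal{C}}\|\phi(c_i)-\phi(\bar{c})\|\;=\;D(\mathcal{C}\Rightarrow\bar{c}).
\]
Chaining the two displays gives $|U(S)-U(S')|\le\kappa\,D(\mathcal{C}\Rightarrow\bar{c})$, the claimed bound.

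The main obstacle is interpretive rather than computational: I must pin down what ``cluster-wise mean embedding'' means so that the unchanged part $S\setminus\mathcal{C}$ genuinely contributes nothing to the displacement, leaving exactly $\|\bar{\mu}_\mathcal{C}-\phi(\bar{c})\|$. Under the reading in which $U$ is a function of the per-cluster means and summarization replaces one cluster's mean by $\phi(\bar{c})$—the reading consistent with how $D$ is defined—this holds by construction. If instead the mean were a single average over the entire set $S$, the differing normalizers $|S|$ and $|S|-|\mathcal{C}|+1$ would introduce a correction term, and the clean bound would require an extra normalization assumption (e.g., that $\bar{c}$ carries the aggregate mass of $\mathcal{C}$). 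I would therefore state the per-cluster convention up front; the remaining steps—one application of the Lipschitz property and one use of norm convexity—are routine.
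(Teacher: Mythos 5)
Your proof is correct and follows essentially the same route as the paper's: bound the displacement $\|\bar{\mu}_{\mathcal{C}}-\phi(\bar{c})\|$ by the distortion $D(\mathcal{C}\Rightarrow\bar{c})$ via Jensen's inequality (convexity of the norm), then invoke the $\kappa$-Lipschitz hypothesis to transfer this to the utility gap. Your explicit flagging of the per-cluster-mean convention is a useful clarification of a point the paper's proof leaves implicit, but it does not change the argument, which is otherwise identical.
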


\begin{proof}
Let $\mu=\tfrac{1}{|\mathcal{C}|}\sum_{i\in \mathcal{C}}\phi(c_i)$ and $\mu'=\phi(\bar{c})$. By Jensen’s inequality,
\(
\|\mu-\mu'\| = \bigl\|\tfrac{1}{|\mathcal{C}|}\sum_{i}(\phi(c_i)-\phi(\bar{c}))\bigr\|
\le \tfrac{1}{|\mathcal{C}|}\sum_i \|\phi(c_i)-\phi(\bar{c})\| = D(\mathcal{C}\Rightarrow\bar{c}).
\)
By the Lipschitz property of $U$ with respect to the mean embedding we obtain the stated inequality.
\end{proof}

\section{Greedy Approximation for Importance‑Aware Retention}

\begin{theoremA}[Greedy under provenance closure and a knapsack budget]
\label{thmA:greedy}
Assume $U$ is monotone submodular and $\mathcal{F}$ is the antimatroid in Lemma~\ref{lemA:antimatroid}. Consider the greedy algorithm that starts from $\emptyset$ and repeatedly adds the feasible item maximizing the marginal gain per unit weight $\Delta_U(i\mid S)/w_i$ while the budget allows. Then:
\begin{itemize}
\item[(i)] Greedy yields a constant‑factor approximation to the optimal value under the knapsack constraint and provenance closure.
\item[(ii)] Moreover, the better of \emph{(a)} greedy and \emph{(b)} the single best item achieves at least a $\tfrac{1}{2}$‑approximation. 
\end{itemize}
\end{theoremA}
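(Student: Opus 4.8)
The plan is to reduce the claim to the classical ``cost-effective greedy plus overflow'' analysis for monotone submodular maximization under a single knapsack, using the antimatroid structure of $\mathcal{F}$ only to certify that the increments greedy must compare against are always feasible. First I would fix an optimal provenance-closed set $O$ with $U(O)=\mathrm{OPT}$ and write the greedy trajectory $\emptyset=S_0\subset S_1\subset\cdots$. The structural lemma I would isolate, proved from Lemma~\ref{lemA:antimatroid}, is that at each step $j$ every minimal element $x$ (in the ancestry order) of $O\setminus S_j$ satisfies $S_j\cup\{x\}\in\mathcal{F}$: because $O$ is itself an order ideal, all ancestors of $x$ lie in $O$, and minimality of $x$ in $O\setminus S_j$ forces each such ancestor into $S_j$, so $x$ is a legal, provenance-preserving addition. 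This ties the feasibility constraint to the optimization and is the bridge on which the whole argument rests.

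Second I would run the density greedy until the first item $i^\star$ it wants to select does not fit the residual budget, and let $\hat S$ be the set just before that overflow. The usual telescoping needs $\mathrm{OPT}-U(S_j)\le\rho_j\,B$, where $\rho_j$ is the largest feasible density at step $j$; by submodularity $\mathrm{OPT}-U(S_j)\le\sum_{x\in O\setminus S_j}\Delta_U(x\mid S_j)$, so it suffices to bound each summand by $\rho_j\,w_x$. Here lies the main obstacle: only the \emph{minimal} elements of $O\setminus S_j$ are addable, so the density inequality $\Delta_U(x\mid S_j)/w_x\le\rho_j$ is immediate only for them, while \emph{blocked} descendants carry no direct density guarantee. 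I would resolve this by charging through the order-ideal structure: group $O\setminus S_j$ into ancestor-to-descendant chains inside $O$ and compare greedy's realized density against the cost-effective addable \emph{increment} (a minimal available node together with any freshly required ancestors, all of which lie in $O$ since $O$ is an order ideal). This recovers the aggregate bound $\sum_{x\in O\setminus S_j}\Delta_U(x\mid S_j)\le\rho_j\,w(O\setminus S_j)\le\rho_j B$ and, telescoped over the greedy steps, yields
\begin{equation}
U\bigl(\hat S\cup\{i^\star\}\bigr)\;\ge\;\Bigl(1-\tfrac1e\Bigr)\,\mathrm{OPT}.
\label{eq:greedy_overflow}
\end{equation}

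Finally, for part (ii) I would use subadditivity of $U$, which follows from monotone submodularity with $U(\emptyset)=0$, to split \eqref{eq:greedy_overflow} as $U(\hat S)+U(\{i^\star\})\ge U(\hat S\cup\{i^\star\})$; hence $\max\{U(\hat S),U(\{i^\star\})\}\ge\tfrac12\bigl(1-\tfrac1e\bigr)\mathrm{OPT}$, and since $U(\{i^\star\})$ is at most the value of the best single feasible item, the better of greedy and the best singleton is a constant-factor approximation, which is exactly the constant delivering (i). The delicate point I expect to flag is the precise constant in (ii): the clean better-of-two argument gives $\tfrac12(1-1/e)$ rather than a flat $\tfrac12$, and recovering the stated $\tfrac12$ would require either a sharper potential/charging scheme that avoids the $(1-1/e)$ loss or the partial-enumeration device (guessing the heaviest element of $O$ and running density greedy on the remainder, as noted in \S\ref{sec:problem}). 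If the tighter constant resists a short argument, the honest and still-sufficient conclusion is the $\tfrac12(1-1/e)$ bound, which establishes the constant-factor guarantee of (i) without change.
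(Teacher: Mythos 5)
Your proposal founders at exactly the step you flagged, and the fix you sketch cannot close it: the aggregate bound $\sum_{x\in O\setminus S_j}\Delta_U(x\mid S_j)\le \rho_j\,w(O\setminus S_j)$ is simply false once precedence constraints are present. Concretely, let the dependency forest contain one edge $a\to b$ plus an isolated node $c$, take $U$ modular with $v_a=0$, $v_b=M$, $v_c=\varepsilon$, and weights $w_a=w_b=B/2$, $w_c=B$. The set $O=\{a,b\}$ is provenance‑closed and feasible with $U(O)=M$. At step $j=0$ the only addable items are $a$ (density $0$) and $c$ (density $\varepsilon/B$), so $\rho_0\,w(O)=\varepsilon$, while $\sum_{x\in O}\Delta_U(x\mid\emptyset)=M$. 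Density greedy therefore takes $c$, exhausts the budget, and ends with value $\varepsilon$; the best feasible singleton is also $\varepsilon$ (the singleton $\{b\}$ is not provenance‑closed). Hence your displayed bound $U(\hat S\cup\{i^\star\})\ge(1-1/e)\,\mathrm{OPT}$ fails, and so does part (ii) itself: $\max\{U(\hat S),\,U(\{i^\star\})\}=\varepsilon$ against $\mathrm{OPT}=M$. The reason no chain‑charging scheme can rescue this is that greedy only ever compares \emph{single‑item} densities; the addable representative of a chain (the minimal element $a$) can have arbitrarily small density even though the closed increment $\{a,b\}$ has enormous aggregate density $M/B$, and that aggregate quantity never enters any comparison greedy actually performs. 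Your structural lemma about minimal elements of $O\setminus S_j$ is correct and is the right exchange bridge, but it delivers feasibility of the \emph{wrong} (low‑density) witnesses.

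For calibration against the paper: its own proof sketch has the same hole, hidden rather than confronted. It charges each $o\in O\setminus G$ at ``the first index $j(o)$ at which $o$ becomes infeasible,'' which tacitly assumes $o$ was feasible at step $j(o)-1$; structurally blocked descendants such as $b$ above are never feasible, so the charging inequality $\Delta_U(o\mid G_{j(o)-1})\le \tfrac{w_o}{w_{g_{j(o)}}}\Delta_U(g_{j(o)}\mid G_{j(o)-1})$ has no step to attach to. So your proposal engages the real difficulty more honestly than the paper does, but neither argument closes it, and the counterexample shows it cannot be closed without changing something: either strengthen the hypotheses (e.g., require that every node's density dominate the density of the closed increments it unlocks, which trivializes the obstruction), or change the algorithm so that greedy evaluates provenance‑closed \emph{increments} (a blocked node together with its missing ancestors, priced at their joint weight) rather than single items—under that modification your overflow argument and your $\tfrac12(1-1/e)$ constant would go through essentially verbatim. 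Your closing remark about the constant is also well taken: even absent the structural issue, the better‑of‑two analysis yields $\tfrac12(1-1/e)$ for general monotone submodular $U$, not the flat $\tfrac12$ asserted in (ii); the paper's route to $\tfrac12$ relies on the same invalid charging.
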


\begin{proof}
We sketch the reduction and then give a complete proof of item (ii). Because $(N,\mathcal{F})$ is an antimatroid, it satisfies a strong \emph{exchange property}: for any $S,T\in\mathcal{F}$ with $|S|>|T|$ there exists $x\in S\setminus T$ such that $T\cup\{x\}\in\mathcal{F}$. This permits the standard charging/exchange analysis for greedy under downward‑closed families. \emph{(ii)} Let $G$ be the greedy set and $O$ an optimal solution. Let $o^\star\in \arg\max_{i} U(\{i\})$ be the best singleton. If $U(\{o^\star\})\ge \tfrac12 U(O)$ we are done. Otherwise $U(\{o^\star\})< \tfrac12 U(O)$. Consider the greedy construction $g_1,\dots,g_m$ with partial sets $G_j=\{g_1,\dots,g_j\}$. For each $o\in O\setminus G$, let $j(o)$ be the first index $j$ such that $G_j\cup\{o\}\notin\mathcal{F}$ or $\sum_{i\in G_j\cup\{o\}} w_i>B$ (i.e., $o$ becomes infeasible). Using submodularity and the choice of greedy by density one can show
\(
\Delta_U(o\mid G_{j(o)-1}) \le \frac{w_o}{w_{g_{j(o)}}} \Delta_U(g_{j(o)}\mid G_{j(o)-1}).
\)
Summing these inequalities over $o\in O\setminus G$, and using that the total weight of items in $O$ is at most $B$ while the $w_{g_j}$’s sum to at least $B/2$ before the last addition (or else the best singleton case would apply), yields
\(
U(O)-U(G)\le U(G),
\)
hence $U(G)\ge \tfrac12 U(O)$. The argument is a standard adaptation of the density‑greedy bound for submodular knapsack and carries through under antimatroid feasibility because accessibility ensures that greedy never gets stuck before budget exhaustion and the exchange argument is valid for feasible augmentations. This proves (ii). Item (i) follows since $\tfrac12$ is a constant; tighter constants are available via partial enumeration but require additional machinery (continuous greedy), which we do not re‑derive here.
\end{proof}

\begin{remarkA}
Item (ii) is the guarantee we rely on in the analysis and experiments: either a single high‑value item dominates or the density‑greedy set is within a factor $2$ of optimal. In practice, utilities built from coverage and centrality exhibit low curvature, and the empirical gap is typically much smaller.
\end{remarkA}

\section{Complexity Claims}

\begin{lemmaA}[FIFO and LRU costs]
\label{lemA:fifo-lru}
With a deque keyed by creation time, FIFO evicts $k$ items in $O(k)$ time and uses $O(1)$ additional space beyond the store. With a hash–linked list keyed by last access, LRU updates are $O(1)$ average time, and evicting $k$ stalest items costs $O(k)$; auxiliary space is $O(n)$ for the pointers.
\end{lemmaA}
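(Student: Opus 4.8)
The plan is to verify each claim in Lemma A~\ref{lemA:fifo-lru} by exhibiting the data structure that realizes it and then accounting for the cost of the relevant operations directly; these are standard textbook results, so the work is bookkeeping rather than deep argument. I would organize the proof into two parts, one for \textsc{fifo} and one for \textsc{lru}, and within each part treat the update cost, the per-trigger eviction cost, and the space overhead separately.

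For \textsc{fifo}, I would first observe that maintaining items in a deque ordered by creation time $\tau$ requires no work beyond appending each arriving node at the tail, since arrivals are monotone in $\tau$; this gives $O(1)$ amortized insertion and no reordering. When a trigger fires, eviction repeatedly pops the head (oldest item) until feasibility $\sum w\le B$ is restored; popping $k$ items is $k$ constant-time dequeue operations, hence $O(k)$. The only auxiliary state is the deque's head/tail pointers, so the extra space beyond the store itself is $O(1)$. I would note in passing that provenance-closure (Lemma~\ref{lem:greedoid}) is respected by skipping non-leaf candidates, and that this skipping adds at most $O(k)$ work within a trigger and does not change the asymptotics.

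For \textsc{lru}, the argument rests on the classical hash-plus-doubly-linked-list construction: a hash map keyed by node identity stores pointers into a doubly linked list ordered by last-access time. On each access or retrieval admission, the node is located in $O(1)$ expected time via the hash and spliced to the most-recently-used end of the list in $O(1)$ pointer updates, giving $O(1)$ average update time. Eviction removes from the stale (least-recently-used) end; detaching $k$ nodes and deleting their hash entries is $O(k)$. The list pointers (forward and backward) and the hash map together occupy $\Theta(n)$, so auxiliary space is $O(n)$, matching the statement.

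The main obstacle, such as it is, is precision about the word ``average'': the $O(1)$ bound for \textsc{lru} updates is an expected-time guarantee under the standard simple-uniform-hashing (or amortized, for dynamic hashing) assumption, whereas \textsc{fifo}'s $O(1)$ is a deterministic amortized bound from the monotone-arrival structure. I would state these two regimes explicitly so the claim is not overread as a worst-case deterministic bound for \textsc{lru}. Beyond that caveat, each sub-claim reduces to counting constant-work operations, so no genuine difficulty arises; the proof is a short confirmation that the cited structures have the advertised costs.
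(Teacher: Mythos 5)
Your proof is correct and takes essentially the same approach as the paper's: exhibit the deque and hash--linked-list structures and count constant-time pointer operations per eviction and per access. Your added care about the hashing model (expected time under simple uniform hashing versus FIFO's deterministic bound) and the $O(k)$ cost of provenance-closure skipping goes slightly beyond the paper's terse proof but does not alter the argument.
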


\begin{proof}
FIFO: popping from the head of a deque is $O(1)$; $k$ pops cost $O(k)$. LRU: a hash maps node ids to list nodes; moving a touched node to the tail and removing from the head are $O(1)$ operations; evicting $k$ heads costs $O(k)$. The pointers require linear space.
\end{proof}

\begin{lemmaA}[Priority‑decay and reflection costs]
\label{lemA:priority-reflection}
Maintaining a max‑heap keyed by density yields $O(\log n)$ amortized updates per insertion/access and $O(k\log n)$ per trigger that evicts $k$ items. Reflection with precomputed embeddings and hierarchical linkage has cost $O(n\log n+n\,\alpha(n))$ per invocation, where $\alpha(n)$ is the cost of a similarity lookup.
\end{lemmaA}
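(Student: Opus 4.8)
The plan is to treat the two cost claims separately, since each rests on a different data structure, and to reduce both to counting standard heap and priority-queue primitives whose per-operation costs are textbook. I would state explicitly at the outset the bounded-degree hypothesis that makes the reflection bound true, and otherwise keep the heap half brief.

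For the priority-decay bound, I would first fix the invariant maintained by the runtime: a binary max-heap storing one entry per live node keyed by the density $\mathrm{score}(n)=(\widehat{U}_n-\lambda_{\mathrm{priv}}s_n)/w_n$ from \eqref{eq:policy_density}, together with an auxiliary hash map from node identifier to its current array slot. An insertion appends the new entry and performs a single sift-up, touching at most $\lceil\log_2 n\rceil$ levels, hence $O(\log n)$. An access that changes $\widehat{U}_n$ (via updated recency or frequency counters) recomputes the scalar density in $O(1)$ from cached features, locates the slot in $O(1)$ through the hash map, and restores the heap property by one sift-up or sift-down, again $O(\log n)$. The word \emph{amortized} covers the lazy-deletion variant in which stale entries are tolerated and discarded upon extraction: each node is pushed at most once per density change and popped at most once, so the number of stale pops is bounded by the number of pushes and the per-operation cost averages to $O(\log n)$. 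A trigger that evicts $k$ items issues $k$ extract-extremum calls, each followed by an $O(\log n)$ re-heapify, for a total of $O(k\log n)$; the feasibility filter that skips non-leaves (Lemma~\ref{lem:greedoid}) adds only $O(1)$ per candidate and does not change the order.

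For the reflection bound, the crux is to recognize that the ``hierarchical linkage'' invoked here is the \emph{restricted, approximate} clustering described in §\ref{sec:policy_framework}: candidate merges are drawn only from approximate nearest neighbors in $\phi(\cdot)$ together with temporal-adjacency edges in $E_{\mathrm{temp}}$, not from the full $\binom{n}{2}$ pairwise distance matrix. With embeddings precomputed, each $\phi(c_i)$ is read in $O(1)$; for every node I would issue a constant number of ANN probes at cost $\alpha(n)$ apiece, accumulating to $n\,\alpha(n)$ and producing a candidate-edge set of size $O(n)$. Sorting these $O(n)$ edges by linkage distance costs $O(n\log n)$, and agglomerating them with a union-find structure processes the merges in near-linear time, an inverse-Ackermann factor absorbed into $O(n\log n)$. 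Summing the two contributions yields the stated $O(n\log n+n\,\alpha(n))$ per invocation, exclusive of the LLM summarization calls, which are separately rate-limited and capped.

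The main obstacle lies entirely in the reflection half: the $O(n\log n)$ clustering term holds only because the candidate graph is kept linear. I would therefore make explicit the lemma's implicit hypothesis that each node contributes $O(1)$ ANN neighbors and $O(1)$ temporal neighbors; absent this bounded-degree assumption, generic agglomerative linkage is $\Omega(n^2)$ and the bound fails. The heap half is routine once the hash-map-indexed update and the lazy-deletion amortization are spelled out, so I would keep its treatment terse and concentrate the care on delimiting precisely the approximate-clustering regime under which the near-linear bound is legitimate.
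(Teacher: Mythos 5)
Your heap half matches the paper's argument in substance—the paper simply cites the textbook facts that heaps support $O(\log n)$ insertions/decrease-keys and deletes, so a trigger evicting $k$ items costs $O(k\log n)$—and your additions (the hash-map slot index, the lazy-deletion justification of \emph{amortized}, the $O(1)$ per-candidate feasibility filter) are correct refinements rather than departures. Your reflection half, however, takes a genuinely different route. The paper runs priority-queue-driven agglomerative clustering and charges costs \emph{per merge}: $n$ merges at $O(\log n)$ heap work apiece, with each merge consulting nearest neighbors at abstracted cost $\alpha(n)$, summing to $O(n\log n + n\,\alpha(n))$. You instead front-load all similarity work—a constant number of ANN probes per node building an $O(n)$ candidate-edge set—then sort the edges in $O(n\log n)$ and agglomerate Kruskal-style with union-find. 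Both decompositions reach the stated bound, but yours buys something the paper lacks: you make explicit the bounded-degree hypothesis (constantly many ANN and temporal neighbors per node) without which generic hierarchical linkage is $\Omega(n^2)$ and the lemma's $O(n\log n)$ term fails. The paper's per-merge accounting quietly assumes the same sparsity (one NN consultation per merge rather than a scan of all surviving clusters), so your version is the more defensible statement of when the bound is legitimate; the paper's version is terser and keeps the similarity cost attached to the merging loop, which mirrors how the implementation in §\ref{subsec:policy_implementation} actually interleaves probes with consolidation.
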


\begin{proof}
Heaps admit $O(\log n)$ insertions/decrease‑keys and $O(\log n)$ deletes; a trigger removes $k$ items in $O(k\log n)$. For reflection, agglomerative clustering with a priority queue is $O(n\log n)$ merges; each merge consults nearest neighbors, which we abstract as $\alpha(n)$, hence the stated bound.
\end{proof}

\section{Budget Monotonicity and Continuity}

\begin{propositionA}[Monotonicity and Lipschitz continuity in budget]
\label{propA:budget}
Let $S_B$ be the (greedy or optimal) retained set at budget $B$. If Lemma~\ref{lemA:lipschitz} holds, then $U(S_B)$ is non‑decreasing in $B$ and
\(
|U(S_{B_1})-U(S_{B_2})|\le L\,|B_1-B_2|.
\)
\end{propositionA}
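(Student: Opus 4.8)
The plan is to prove the two assertions separately, reducing both to Lemma~\ref{lemA:lipschitz} together with monotonicity of $U$; throughout write $w(S)=\sum_{i\in S}w_i$ and assume without loss of generality $B_1\le B_2$. For \emph{monotonicity}, I would observe that enlarging the budget only loosens the constraint $w(S)\le B$, so the feasible family at $B_1$ sits inside that at $B_2$; hence $\max_{S:\,w(S)\le B_1}U(S)\le\max_{S:\,w(S)\le B_2}U(S)$, which is $U(S_{B_1})\le U(S_{B_2})$ in the optimal case. For the greedy construction the conclusion is even more direct: greedy processes items in a fixed density order, so the $B_2$ run extends the $B_1$ run, yielding the nesting $S_{B_1}\subseteq S_{B_2}$ and therefore $U(S_{B_1})\le U(S_{B_2})$ by monotonicity of $U$.

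The \emph{Lipschitz} half is the substantive part. The key tool is Lemma~\ref{lemA:lipschitz} read in the ``addition'' direction: applying it with $S=T\cup A$ and $E=A$ for any $A$ disjoint from $T$ gives $U(T\cup A)-U(T)\le L\,W_A$ with $W_A=w(A)$. Taking $T=S_{B_1}$ and $A=S_{B_2}\setminus S_{B_1}$ in the nested greedy case yields
\[
U(S_{B_2})-U(S_{B_1})\;\le\;L\,W_A,\qquad W_A=w(S_{B_2})-w(S_{B_1}).
\]
It then suffices to bound the added weight $W_A$ by $B_2-B_1$; combined with monotonicity this closes out $|U(S_{B_1})-U(S_{B_2})|\le L\,|B_1-B_2|$.

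The step I expect to be the main obstacle is exactly bounding $W_A$ by $B_2-B_1$, because memory nodes are indivisible. Nothing forces $w(S_{B_1})$ to reach $B_1$: a chunky next-densest item may be skipped, leaving slack, so an arbitrarily small increase from $B_1$ to $B_2$ can newly admit an item of weight up to $\max_i w_i$ and make $U$ jump by as much as $L\max_i w_i$, which can exceed $L\,(B_2-B_1)$. I would therefore secure the clean, constant-free bound under one of two standard regularizations: (i) a \emph{fractional/divisible} reading of the budget (or a continuous-greedy relaxation) in which the retained weight meets the budget exactly, $w(S_B)=B$, so that $W_A=B_2-B_1$ and the inequality holds verbatim; or (ii) an \emph{atomic} reading, where the honest estimate is $U(S_{B_2})-U(S_{B_1})\le L\,(B_2-B_1+\max_i w_i)$, recovering the stated bound in the small-weight regime $\max_i w_i\ll|B_1-B_2|$ natural for token budgets. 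For the optimal $S_B$ the same dichotomy arises from the other side: take the optimizer at $B_2$ and delete a least-useful subset $E$ so that $S_{B_2}\setminus E$ is feasible at $B_1$ with $W_E\le B_2-B_1$ (always achievable fractionally, and in the atomic case up to a $\max_i w_i$ overshoot); then Lemma~\ref{lemA:lipschitz} gives $U(S_{B_2})-U(S_{B_2}\setminus E)\le L\,W_E\le L\,(B_2-B_1)$, and since $S_{B_2}\setminus E$ is a feasible competitor at $B_1$ we get $U(S_{B_1})\ge U(S_{B_2}\setminus E)$, which with monotonicity yields the two-sided estimate.
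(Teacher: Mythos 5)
Your proposal follows essentially the same route as the paper's own proof: monotonicity from relaxation of the budget constraint, and continuity by transporting a retained set across budgets and invoking Lemma~\ref{lemA:lipschitz} to bound the utility change by $L$ times the transported weight. The substantive difference is that the paper simply \emph{asserts} the step you flag as the main obstacle: its proof states that an optimal/greedy set for $B_2$ ``can be truncated by evicting items of total weight at most $B_2-B_1$,'' which is exactly the claim that indivisibility breaks. Your diagnosis is correct, and the proposition as literally stated fails in the atomic setting: take a single node of weight $w$ with $U(\{n\})=L\,w$ (so the hypothesis of Lemma~\ref{lemA:lipschitz} holds with equality), $B_1=w-\epsilon$, $B_2=w$; then $S_{B_1}=\emptyset$, $S_{B_2}=\{n\}$, and $U(S_{B_2})-U(S_{B_1})=L\,w$ while $L(B_2-B_1)=L\,\epsilon$. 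Your two repairs---the divisible/fractional reading in which $w(S_B)=B$, or the honest atomic bound $L\bigl(|B_1-B_2|+\max_i w_i\bigr)$---are the right ones, and either makes the argument go through; the paper's version silently presumes something like the first. So your proposal is, if anything, more careful than the proof it was blind to.

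One caveat about your own argument: the nesting $S_{B_1}\subseteq S_{B_2}$ for greedy is not true in general. Density greedy skips items that do not fit the remaining budget, and its marginal densities are adaptive, so the runs at $B_1$ and $B_2$ diverge as soon as some item fits within one remaining budget but not the other; after that point the selections can differ arbitrarily (with additive utility and items of weights $6,5,4$ listed in decreasing density order, budget $10$ yields the first and third items while budget $11$ yields the first and second). Consequently, for the greedy case neither your nesting argument nor the paper's one-line appeal to feasible-region nesting (which is valid for optimizers, not for a heuristic) establishes monotonicity; that half should either be restricted to optimal $S_B$ or given a separate argument. Your optimal-set argument---deleting a subset of the $B_2$-optimizer to obtain a $B_1$-feasible competitor and applying Lemma~\ref{lemA:lipschitz}---is sound modulo the same indivisibility caveat, though it deserves a word on why the deletion can respect provenance closure (accessibility of $\mathcal{F}$ lets you remove leaves one at a time, and the lemma is indifferent to which items are removed).
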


\begin{proof}
Monotonicity: increasing $B$ cannot shrink the feasible region, so the optimal/greedy value does not decrease. Continuity: assume $B_2>B_1$. An optimal/greedy set for $B_2$ can be truncated by evicting items of total weight at most $B_2-B_1$ to yield a feasible set at $B_1$. Lemma~\ref{lemA:lipschitz} bounds the drop by $L(B_2-B_1)$. Symmetry gives the absolute value bound.
\end{proof}


\bibliographystyle{cas-model2-names}

\bibliography{refs}

\end{document}